\newcommand{\bx}{\mathbf{x}}
\newtheorem{thm}{Theorem}
\newtheorem{lem}[thm]{Lemma}
\newtheorem{deff}{Definition}
\newtheorem{exampp}{Example}
\colorlet{shadecolor}{gray!20}
\newenvironment{examp}
   {\begin{shaded}\begin{exampp}}
   {\end{exampp}\end{shaded}}
\icmltitlerunning{Differentiable Compositional Kernel Learning for Gaussian Processes}
\begin{document}

\twocolumn[
\icmltitle{Differentiable Compositional Kernel Learning for Gaussian Processes}




%
%

\begin{icmlauthorlist}
\icmlauthor{Shengyang Sun}{ut,vec}
\icmlauthor{Guodong Zhang}{ut,vec}
\icmlauthor{Chaoqi Wang}{ut,vec}
\icmlauthor{Wenyuan Zeng}{ut,vec,uber}
\icmlauthor{Jiaman Li}{ut,vec}
\icmlauthor{Roger Grosse}{ut,vec}
\end{icmlauthorlist}

\icmlaffiliation{ut}{Department of Computer Science, University of Toronto, Toronto, ON, CA.}
\icmlaffiliation{vec}{Vector Institute.}
\icmlaffiliation{uber}{Uber Advanced Technologies Group, Toronto, ON, CA}

\icmlcorrespondingauthor{Shengyang Sun}{ssy@cs.toronto.edu}

\icmlkeywords{Machine Learning, ICML}

\vskip 0.3in
]



\printAffiliationsAndNotice{}  

\begin{abstract}
The generalization properties of Gaussian processes depend heavily on the choice of kernel, and this choice remains a dark art. We present the Neural Kernel Network (NKN), a flexible family of kernels represented by a neural network. The NKN's architecture is based on the composition rules for kernels, so that each unit of the network corresponds to a valid kernel. It can compactly approximate compositional kernel structures such as those used by the Automatic Statistician \citep{lloyd2014automatic}, but because the architecture is differentiable, it is end-to-end trainable with gradient-based optimization. We show that the NKN is universal for the class of stationary kernels. Empirically we demonstrate NKN's pattern discovery and extrapolation abilities on several tasks that depend crucially on identifying the underlying structure, including time series and texture extrapolation, as well as Bayesian optimization.
\end{abstract}

\section{Introduction}
\label{sec:introduction}

Gaussian processes (GPs) are a powerful and widely used class of models due to their nonparametric nature, explicit representation of posterior uncertainty, and ability to flexibly model a variety of structures in data. 
However, patterns of generalization in GP depend heavily on the choice of kernel function \citep{rasmussen1999evaluation}; different kernels can impose widely varying modeling assumptions, such as smoothness, linearity, or periodicity. Capturing appropriate kernel structures can be crucial for interpretability and extrapolation \citep{duvenaud2013structure,wilson2013gaussian}. Even for experts, choosing GP kernel structures remains a dark art.

\begin{figure}[ht!]
\centering
\hspace{-1em}%
\subfigure[Ground Truth] { 
\includegraphics[height=0.183\textwidth]{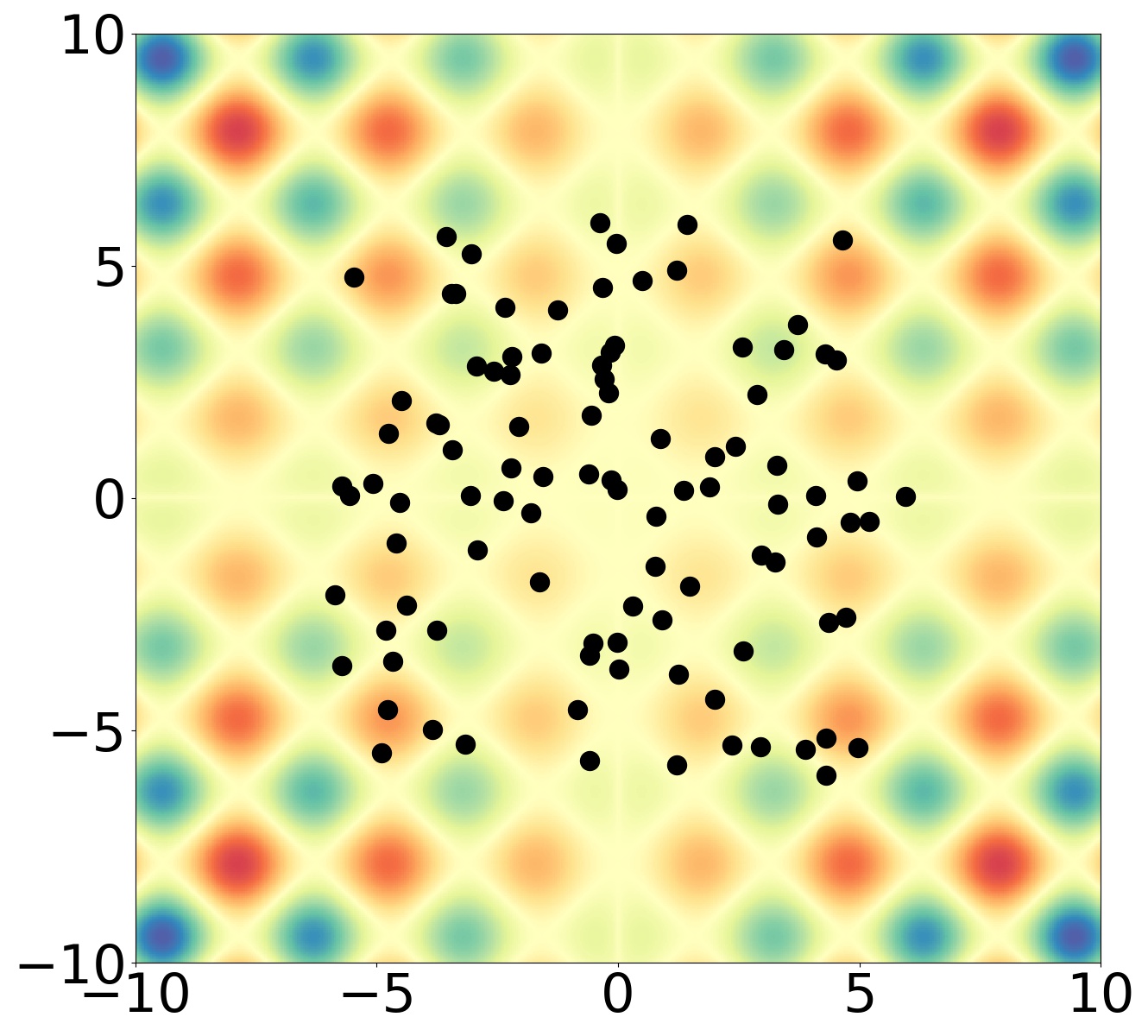} 
} 
\subfigure[NKN Prediction] {
\includegraphics[height=0.18\textwidth]{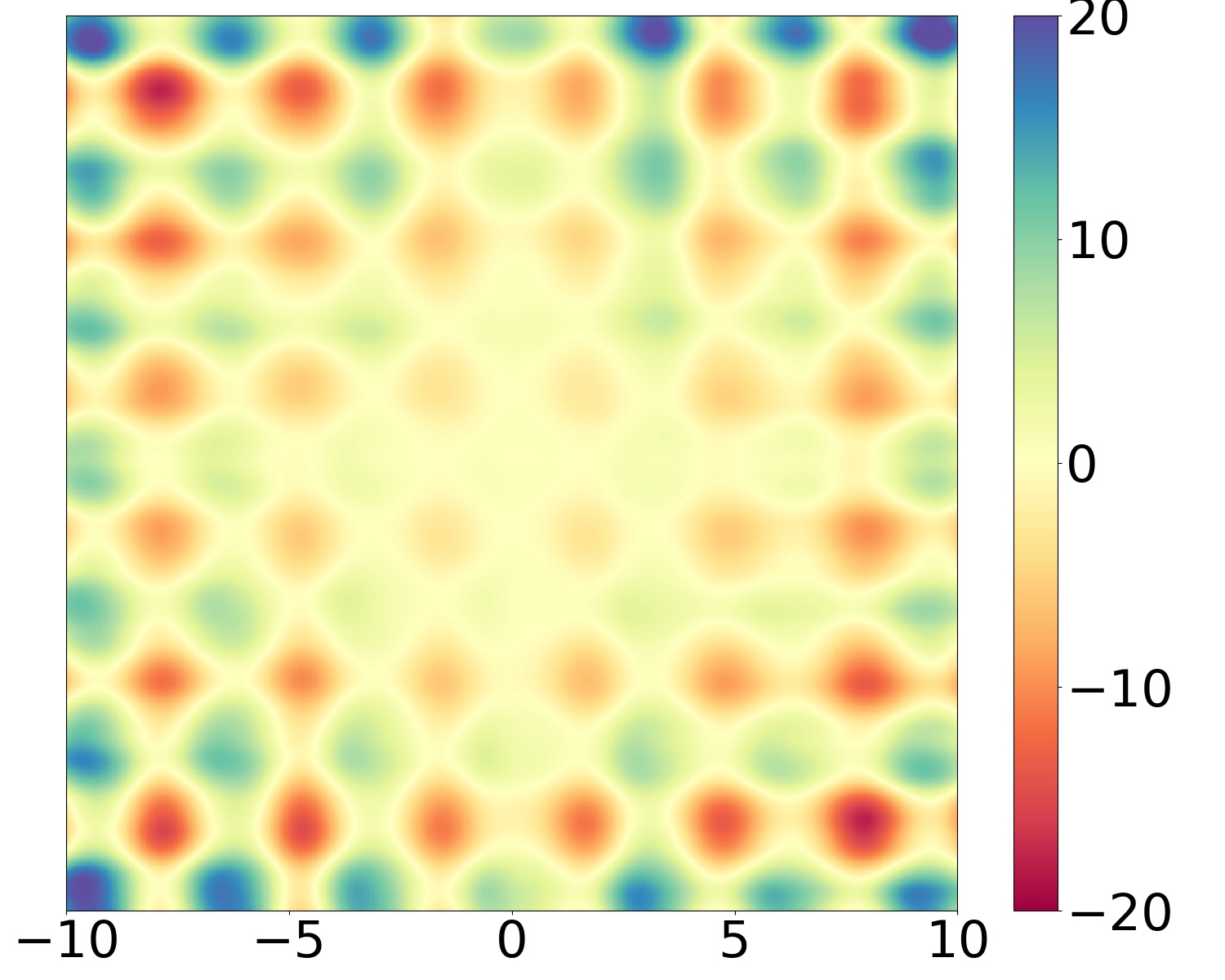} 
}
\vspace{-1em}
\caption{2-D synthetic data (Left) and extrapolation result using our neural kernel network (Right). 
The 2-D function is $y=(\cos(2x_1)+\cos(2x_2)) * \sqrt[]{x_1 x_2}$. Black dots represent 100 training data randomly sampled from $[-6, 6]^2$. This synthetic experiment illustrates NKN's ability to discover and extrapolate patterns.}

\label{fig:intro}
\end{figure}


GPs' strong dependence on kernel structures has motivated work on automatic kernel learning methods. Sometimes this can be done by imposing a specific kind of structure: e.g., \citet{bach2009exploring, duvenaud2011additive} learned kernel structures which were additive over subsets of the variables. A more expressive space of kernels is spectral mixtures \citep{wilson2013gaussian, kom2015generalized, remes2017non}, which are based on spectral domain summations. For example, spectral mixture (SM) kernels \citep{wilson2013gaussian} approximate all stationary kernels using Gaussian mixture models in the spectral domain. Deep kernel learning (DKL) \citep{wilson2016deep} further boosted the expressiveness by transforming the inputs of spectral mixture base kernel with a deep neural network. However, the expressiveness of DKL still depends heavily on the kernel placed on the output layer.

In another line of work, \citet{duvenaud2013structure} defined a context-free grammar of kernel structures based on the composition rules for kernels. Due to its compositionality, this grammar could express combinations of properties such as smoothness, linearity, or periodicity. They performed a greedy search over this grammar to find a kernel struture which matched the input data. Using the learned structures, they were able to produce sensible extrapolations and interpretable decompositions for time series datasets. \citet{lloyd2014automatic} extended this work to an Automatic Statistician which automatically generated natural language reports. All of these results depended crucially on the compositionality of the underlying space. The drawback was that discrete search over the kernel grammar is very expensive, often requiring hours of computation even for short time series.




In this paper, we propose the Neural Kernel Network (NKN), a flexible family of kernels represented by a neural network. The network's first layer units represent primitive kernels, including those used by the Automatic Statistician. Subsequent layers are based on the composition rules for kernels, so that each intermediate unit is itself a valid kernel. The NKN can compactly approximate the kernel structures from the Automatic Statistician grammar, but is fully differentiable, so that the kernel structures can be learned with gradient-based optimization. To illustrate the flexibility of our approach, Figure~\ref{fig:intro} shows the result of fitting an NKN to model a 2-D function; it is able to extrapolate sensibly. 

We analyze the NKN's expressive power for various choices of primitive kernels. We show that the NKN can represent nonnegative polynomial functions of its primitive kernels, and from this demonstrate universality for the class of stationary kernels. Our universality result holds even if the width of the network is limited, analogously to \citet{sutskever2008deep}. Interestingly, we find that the network's representations can be made significantly more compact by allowing its units to represent complex-valued kernels, and taking the real component only at the end.

We empirically analyze the NKN's pattern discovery and extrapolation abilities on several tasks that depend crucially on identifying the underlying structure. The NKN produces sensible extrapolations on both 1-D time series datasets and 2-D textures. It outperforms competing approaches on regression benchmarks. In the context of Bayesian optimization, it is able to optimize black-box functions more efficiently than generic smoothness kernels.

\section{Background}
\label{sec:background}

\subsection{Gaussian Process Regression}
\label{subsec:gp}
A Gaussian process (GP) defines a distribution $p(f)$ over functions $\mathcal{X} \to \mathcal{R}$ for some domain $\mathcal{X}$. For any finite set $\{\mathbf{x}_1, ..., \mathbf{x}_n\} \subset \mathcal{X}$, the function values $\mathbf{f} = (f(\mathbf{x}_1), f(\mathbf{x}_2), ..., f(\mathbf{x}_n))$ have a multivariate Gaussian distribution. Gaussian processes are parameterized by a mean function $\mu(\cdot)$ and a covariance function or kernel function $k(\cdot, \cdot)$. The marginal distribution of function values is given by
\begin{equation}
\mathbf{f} \sim \mathcal{N}(\mu, \mathbf{K}_{XX}),
\end{equation}
where $\mathbf{K}_{XX}$ denotes the matrix of $k(\mathbf{x}_i, \mathbf{x}_j)$ for all $(i, j)$.

Assume we are given a set of training input-output pairs, $\mathcal{D} = \{ (\mathbf{x}_i, y_i) \}_{i=1}^n = (\mathbf{X}, \mathbf{y})$, and each target $y_n$ is generated from the corresponding $f(\mathbf{x}_n)$ by adding independent Gaussian noise; i.e.,
\begin{equation}
y_n = f(\mathbf{x}_n) + \epsilon_n, \quad \epsilon_n \sim \mathcal{N}(0, \sigma^2)
\end{equation}
As the prior on $f$ is a Gaussian process and the likelihood is Gaussian, the posterior on $f$ is also Gaussian. We can use this to make predictions $p(y_*|\mathbf{x}_*, \mathcal{D})$ in closed form:
\begin{equation}
\begin{aligned}
& p(y_*|\mathbf{x}_*, \mathcal{D}) = \mathcal{N}(\mu_*, \sigma_{*}^{2}) \\
& \mu_* = \mathbf{K}_{*X}(\mathbf{K}_{XX} + \sigma^2\mathbf{I})^{-1}\mathbf{y} \\
& \sigma_*^2 = \mathbf{K}_{**} - \mathbf{K}_{*X}(\mathbf{K}_{XX} + \sigma^2\mathbf{I})^{-1}\mathbf{K}_{X*} + \sigma^2
\end{aligned}
\end{equation}
Here we assume zero mean function for $f$. Most GP kernels have several hyperparameters $\theta$ which can be optimized jointly with $\sigma$ to maximize the log marginal likelihood,
\begin{equation}
    \mathcal{L}(\theta) = \ln p(\mathbf{y}|\mathbf{0}, \mathbf{K}_{XX} + \sigma^2 \mathbf{I})
\end{equation}
%

\subsection{Bochner's Theorem}

Gaussian Processes depend on specifying a kernel function $k(x, x')$, which acts as a similarity measure between inputs.
\begin{deff}\label{def:complex-k}
Let $\mathcal{X}$ be a set, and $k$ be a conjugate symmetric function $k: \mathcal{X} \times \mathcal{X} \to \mathbb{C}$ is a positive definite kernel if $\forall \bm{x_1, \cdots, x_n} \in \mathcal{X}$ and $\forall \bm{c_1, \cdots, c_n} \in \mathbb{C}$,
\begin{equation}
    \sum_{i,j=1}^n \bm{c_i \overline{c_j}} k(\bm{x_i, x_j}) \geq 0,
\end{equation}
\end{deff}
where the bar denotes the complex conjugate. Bochner's Theorem \citep{bochner1959lectures} establishes a bijection between complex-valued stationary kernels and positive finite measures using Fourier transform, thus providing an approach to analyze stationary kernels in the spectral domain \citep{wilson2013gaussian, kom2015generalized}.

\begin{thm}\label{thm:bochners}
(Bochner) A complex-valued function $k$ on $\mathbb{R}^d$ is the covariance function of a weakly stationary mean square continuous complex-valued random process on $\mathbb{R}^d$ if and only if it can be represented as 
\begin{equation}
    k(\bm{\tau}) = \int_{\mathbb{R}^P} \exp(2\pi i \bm{w^\top\tau})\psi(\mathrm{d}\bm{w})
\end{equation}
where $\psi$ is a positive and finite measure. If $\psi$ has a density $S(\bm{w})$, then $S$ is called the spectral density or power spectrum of $k$. $S$ and $k$ are Fourier duals.
\end{thm}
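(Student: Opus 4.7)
The plan is to split the proof into the two directions of the ``if and only if''. The forward (sufficiency) direction is a direct computation, while the converse (necessity) is the substantive part and proceeds through a Fourier-inversion argument combined with a compactness/tightness step. I would assume standard measure-theoretic tools (dominated convergence, Prokhorov's theorem, Stone--Weierstrass) as background.

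For the sufficiency direction, suppose $k(\btau) = \int_{\mathbb{R}^d} e^{2\pi i \bw^\top \btau}\, \psi(\mathrm{d}\bw)$ with $\psi$ a positive finite measure. I would first verify stationarity (immediate, since $k$ is defined as a function of $\btau$ alone), continuity (dominated convergence: the integrand is bounded by $1$ and $\psi(\mathbb{R}^d)<\infty$), and conjugate symmetry. Then, for any $\bx_1,\dots,\bx_n$ and $c_1,\dots,c_n\in\mathbb{C}$, I would substitute and swap sum and integral to obtain
\begin{equation}
\sum_{i,j} c_i \overline{c_j}\, k(\bx_i-\bx_j) = \int_{\mathbb{R}^d} \Bigl|\sum_j c_j e^{2\pi i \bw^\top \bx_j}\Bigr|^2 \psi(\mathrm{d}\bw) \geq 0,
\end{equation}
which gives positive definiteness in the sense of Definition~\ref{def:complex-k}.

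For the necessity direction, I start from a continuous, positive definite $k$ on $\mathbb{R}^d$ with $k(0)\geq 0$, and I need to construct $\psi$. The classical route is to define, for each $T>0$, a nonnegative candidate density
\begin{equation}
p_T(\bw) = \frac{1}{(2T)^d}\int_{[-T,T]^d}\int_{[-T,T]^d} k(\bs-\bt)\, e^{-2\pi i \bw^\top(\bs-\bt)}\,\mathrm{d}\bs\,\mathrm{d}\bt.
\end{equation}
Positive definiteness of $k$, applied to Riemann-sum approximations of this double integral against the test function $e^{-2\pi i \bw^\top \bs}$, forces $p_T(\bw)\geq 0$. I then form the finite measure $\psi_T(\mathrm{d}\bw)=p_T(\bw)\,\mathrm{d}\bw$ whose total mass tends to $k(0)$, so the family is uniformly bounded. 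The key step is to show tightness: $\psi_T$ does not leak mass to infinity. This follows from continuity of $k$ at the origin via a standard smoothing lemma (e.g., integrating against a Fej\'er-type kernel and using $|k|\le k(0)$). Prokhorov's theorem then yields a weakly convergent subsequence $\psi_{T_n}\Rightarrow \psi$, and I would verify by dominated convergence that taking the Fourier transform of $\psi$ recovers $k$ on a dense subset of $\mathbb{R}^d$, and hence everywhere by continuity. Uniqueness of $\psi$ given $k$ follows because bounded continuous functions are dense in $L^1(\psi)$ and the Fourier transform is injective on finite measures (Stone--Weierstrass applied to $e^{2\pi i \bw^\top\btau}$).

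The main obstacle, and the step where I would spend the most care, is the tightness/compactness argument: showing the candidate densities $p_T$ concentrate enough mass in compact sets so that the weak limit is still a finite measure of total mass $k(0)$, rather than losing mass at infinity. A secondary technical point is justifying the interchange of limit and Fourier transform that turns weak convergence of $\psi_{T_n}$ into pointwise convergence of its characteristic function back to $k(\btau)$; this uses that $e^{2\pi i \bw^\top \btau}$ is a bounded continuous function of $\bw$. Everything else (positivity of $p_T$, the converse direction, uniqueness) is essentially bookkeeping once these two analytic points are in place.
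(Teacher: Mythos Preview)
The paper does not actually prove Bochner's theorem. Theorem~\ref{thm:bochners} is stated in the background section as a classical result, attributed to \citep{bochner1959lectures}, and is used only as a tool (e.g., in the proof of Theorem~\ref{thm:stationary-univ} and in motivating spectral mixture kernels). There is nothing in the body or the appendix that attempts an independent proof, so there is no ``paper's own proof'' to compare against.

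That said, your sketch is a recognisable version of one of the classical arguments. The sufficiency direction is correct as written. For necessity, your route via the averaged quantities $p_T$ is essentially the Herglotz--Bochner construction: after the change of variables $\btau=\bs-\bt$ your $p_T(\bw)$ becomes the Fourier transform of $k(\btau)$ multiplied by a Fej\'er-type tent $\Lambda_T(\btau)$, and positive definiteness of $k$ gives $p_T\ge 0$. One point to be careful with is your claim that $\psi_T(\mathrm{d}\bw)=p_T(\bw)\,\mathrm{d}\bw$ has total mass tending to $k(0)$: this presupposes $p_T\in L^1(\mathbb{R}^d)$, which is not automatic from positivity and continuity of $k$ alone. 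The standard fix is to test $p_T$ against an approximate identity on the frequency side (equivalently, multiply $k\Lambda_T$ by a smooth compactly supported bump on the $\btau$ side) before invoking tightness and Prokhorov, exactly the ``smoothing lemma'' you allude to; just make sure that step is doing double duty, both for tightness and for guaranteeing the $\psi_T$ are genuine finite measures in the first place. With that adjustment the outline is sound.
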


\begin{figure*}[t]
\centering
    \includegraphics[width=0.8\textwidth]{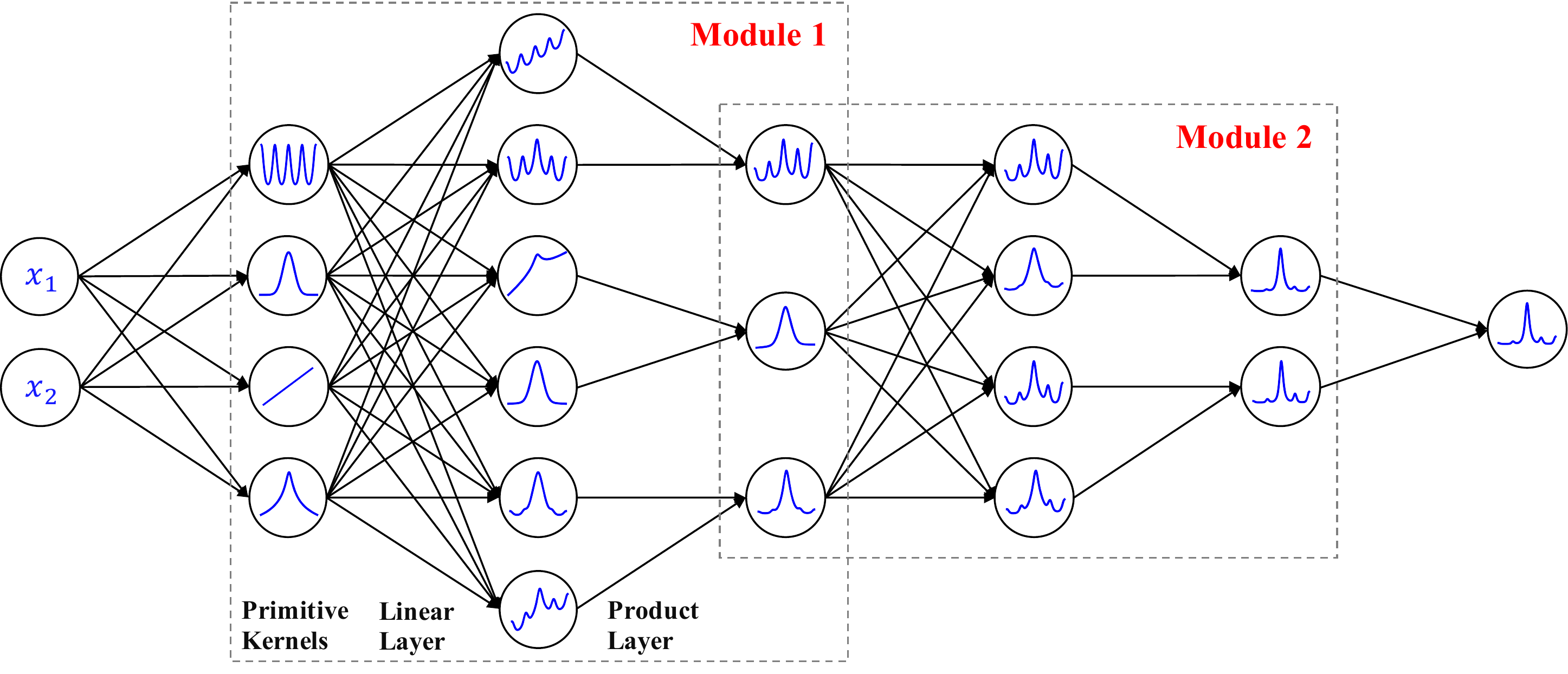}
\caption{Neural Kernel Network: each module consists of a \textbf{Linear} layer and a \textbf{Product} layer. NKN is based on compositional rules for kernels, thus every individual unit itself represents a kernel.}
\label{fig:nkn}
\end{figure*}
\subsection{Automatic Statistician}
\label{sec:autostat}

For compositional kernel learning, the Automatic Statistician \citep{lloyd2014automatic, duvenaud2013structure} used a compositional space of kernels defined as sums and products of a small number of primitive kernels. The primitive kernels included:
\begin{itemize}
\setlength{\itemsep}{-0.1cm}
\item radial basis functions, corresponding to smooth functions. 
$\mathrm{RBF}(\bx, \bx') = \sigma^2 \exp(-\frac{\|\bx-\bx'\|^2}{2l^2})$
\item periodic. $\mathrm{PER}(\bx, \bx') = \sigma^2 \exp(-\frac{2\sin^2(\pi \|\bx-\bx'\| / p)}{l^2})$
\item linear kernel. $\mathrm{LIN}(\bx, \bx') = \sigma^2 \bx^\top\bx'$
\item rational quadratic, corresponding to functions with multiple scale variations. 
$\mathrm{RQ}(\bx, \bx') =  \sigma^2 (1 + \frac{\|\bx-\bx'\|^2}{2\alpha l^2})^{\frac{1}{\alpha}}$
\item white noise. $\mathrm{WN}(\bx, \bx') = \sigma^2 \delta_{\bx, \bx'}$
\item constant kernel. $\mathrm{C}(\bx, \bx') = \sigma^2$
\end{itemize}
The Automatic Statistician searches over the compositional space based on three search operators. 
\begin{enumerate}
\setlength{\itemsep}{-0.203cm}
\item Any subexpression $\mathcal{S}$ can be replaced with $\mathcal{S}+\mathcal{B}$, where $\mathcal{B}$ is any primitive kernel family.
\item Any subexpression $\mathcal{S}$ can be replaced with $\mathcal{S}\times\mathcal{B}$, where $\mathcal{B}$ is any primitive kernel family.
\item Any primitive kernel $\mathcal{B}$ can be replaced with any other primitive kernel family $\mathcal{B}'$.
\end{enumerate}
The search procedure relies on a greedy search: at every stage, it searches over all subexpressions and all possible operators, then chooses the highest scoring combination. To score kernel families, it approximates the marginal likelihood using the Bayesian information criterion \citep{schwarz1978estimating} after optimizing to find the maximum-likelihood kernel parameters.

\section{Neural Kernel Networks}
\label{sec:nkn}

In this section, we introduce the Neural Kernel Network (NKN), a neural net which computes compositional kernel structures and is end-to-end trainable with gradient-based optimization. The input to the network consists of two vectors $\bx_1, \bx_2 \in \mathbb{R}^d$, and the output $k(\bx_1, \bx_2) \in \mathbb{R}$ (or $\mathbb{C}$) is the kernel value. 
Our NKN architecture is based on well-known composition rules for kernels:
\begin{lem}\label{lem:kernel-property} For kernels $k_1, k_2$
\begin{itemize}
\setlength{\itemsep}{-0.1cm}
\item For $\lambda_1, \lambda_2 \in \mathbb{R}^+$, $\lambda_1 k_1 + \lambda_2 k_2$ is a kernel.
\item The product $k_1 k_2$ is a kernel.
\end{itemize}
\end{lem}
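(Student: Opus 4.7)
The plan is to verify the positive-definiteness condition of Definition~\ref{def:complex-k} directly for both claims. Fix a finite collection $\bx_1, \ldots, \bx_n \in \mathcal{X}$ and coefficients $c_1, \ldots, c_n \in \mathbb{C}$, and set $K^{(m)}_{ij} := k_m(\bx_i, \bx_j)$ for $m = 1, 2$. By hypothesis each $K^{(m)}$ is a Hermitian positive semidefinite matrix, and conjugate symmetry of $k_1, k_2$ is inherited by any entrywise operation, so in both cases it suffices to check the quadratic-form inequality.

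For the conic combination, the quadratic form associated with $\lambda_1 k_1 + \lambda_2 k_2$ splits by linearity:
\begin{equation*}
\sum_{i,j} c_i \overline{c_j}\, \bigl(\lambda_1 k_1 + \lambda_2 k_2\bigr)(\bx_i, \bx_j) \;=\; \lambda_1 \sum_{i,j} c_i \overline{c_j}\, K^{(1)}_{ij} \;+\; \lambda_2 \sum_{i,j} c_i \overline{c_j}\, K^{(2)}_{ij},
\end{equation*}
and each summand is nonnegative since $\lambda_1, \lambda_2 > 0$ and the $K^{(m)}$ are PSD. I expect no obstacle here; this is pure bookkeeping.

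The substantive part is the product. The Gram matrix of $k_1 k_2$ at the chosen points is precisely the Hadamard (entrywise) product $K^{(1)} \odot K^{(2)}$, so the claim reduces to the Schur product theorem: the Hadamard product of two Hermitian PSD matrices is PSD. My plan is to prove this via spectral decomposition. I would write $K^{(1)} = \sum_p \alpha_p \bu_p \bu_p^*$ and $K^{(2)} = \sum_q \beta_q \bv_q \bv_q^*$ with $\alpha_p, \beta_q \geq 0$, observe the identity $(\bu_p \bu_p^*) \odot (\bv_q \bv_q^*) = (\bu_p \odot \bv_q)(\bu_p \odot \bv_q)^*$ which is a rank-one PSD matrix, and conclude that
\begin{equation*}
K^{(1)} \odot K^{(2)} \;=\; \sum_{p,q} \alpha_p \beta_q\, (\bu_p \odot \bv_q)(\bu_p \odot \bv_q)^*
\end{equation*}
is a nonnegative combination of PSD matrices, hence PSD.

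The main, admittedly mild, obstacle is getting Schur's theorem right over $\mathbb{C}$; the spectral decomposition argument above handles this cleanly and stays entirely finite-dimensional. An alternative route via feature maps $k_m(\bx, \by) = \langle \bphi_m(\bx), \bphi_m(\by) \rangle$ and the tensor product $\bphi_1 \otimes \bphi_2$ is conceptually slicker but would require a Mercer-style existence argument for the $\bphi_m$, so I would favor the elementary spectral proof above.
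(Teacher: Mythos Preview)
Your proposal is correct, and it matches the paper's approach essentially line for line. The paper does not give a separate proof for Lemma~\ref{lem:kernel-property} itself (it is cited as a well-known composition rule), but the analogous closure lemma for complex-valued stationary kernels is proved in Appendix~\ref{app:complex-closure} by exactly your spectral-decomposition argument for the Schur product: write $\bK_1 = \sum_i \lambda_i \bu_i \bu_i^*$, $\bK_2 = \sum_i \gamma_i \bv_i \bv_i^*$, and observe $\bK_1 \circ \bK_2 = \sum_{i,j} \lambda_i \gamma_j (\bu_i \circ \bv_j)(\bu_i \circ \bv_j)^*$, with the sum handled trivially.
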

We design the architecture such that every unit of the network computes a kernel, although some of those kernels may be complex-valued. 


\subsection{Architecture}
\label{subsec:forward}

The first layer of the NKN consists of a set of primitive kernels. Subsequent layers alternate between linear combinations and products. Since the space of kernels is closed under both operations, each unit in the network represents a kernel. Linear combinations and products can be seen as OR-like and AND-like operations, respectively; this is a common pattern in neural net design \citep{lecun1989backpropagation,poon2011sum}. The full architecture is illustrated in Figure~\ref{fig:nkn}.

{\bf Primitive kernels}. The first layer of the network consists of a set of primitive kernel families with simple functional forms. While any kernels can be used here, we use the RBF, PER, LIN, and RQ kernels from the Automatic Statistician (see Section~\ref{sec:autostat}) because these express important structural motifs for GPs. Each of these kernel families has an associated set of hyperparameters (such as lengthscales or variances), and instantiating the hyperparameters gives a kernel. These hyperparameters are treated as parameters (weights) in this layer of the network, and are optimized with the rest of the network. Note that it may be advantageous to have multiple copies of each primitive kernel so that they can be instantiated with different hyperparameters.



{\bf Linear layers}. The Linear layer closely resembles a fully connected layer in deep neural networks, with each layer $\mathbf{h}_l = \mathbf{W}_l \mathbf{h}_{l-1}$ representing a nonnegative linear combination of units in the previous layer (i.e.~$\mathbf{W}_l$ is a nonnegative matrix). In practice, we use the parameterization $\mathbf{W}_l=\log (1+\exp(\mathbf{A}_l))$ to enforce the nonnegativity constraint. (Here, $\exp$ is applied elementwise.)

The Linear layer can be seen as a OR-like operation: two points are considered similar if either kernel has a high value, while the Linear layer further controls the balance using trainable weights.

{\bf Product layers}. The Product layer introduces multiplication, in that each unit is the product of several units in the previous layer. This layer has a fixed connectivity pattern and no trainable parameters. While this fixed structure may appear restrictive, Section~\ref{sec:universality} shows that it does not restrict the expressiveness of the network.

The Product layer can be seen as an AND-like operation: two points are considered similar if both constituent kernels have large values. 

{\bf Activation functions}. Analogously to ordinary neural nets, each layer may also include a nonlinear activation function, so that $\mathbf{h}_l = f(\mathbf{z}_{l})$, where $\mathbf{z}_l$, the pre-activations, are the result of a linear combination or product. However, $f$ must be selected with care in order to ensure closure of the kernels. Polynomials with positive coefficients, as well as the exponential function $f(z)=e^z$, fulfill this requirement.

{\bf Complex-valued kernels}. Allowing units in NKN to represent complex-valued kernels as in Definition~\ref{def:complex-k} and take the real component only at the end, can make the network's representations significantly more compact. As complex-valued kernels also maintain closure under summation and multiplication \citep{yaglom2012correlation}, additional modifications are unnecessary. In practice, we can include $\exp(i \bm{\mu^\top \tau})$ in our primitive kernels. 

\subsection{Learning}

{\bf Optimization}.
All trainable parameters can be grouped into two categories: (1) parameters of primitive kernels, e.g., lengthscale in an RBF kernel; (2) parameters of Linear layers. We jointly learn these parameters by maximizing the marginal likelihood $\mathcal{L}(\bm{\theta})$. Since the NKN architecture is differentiable, we can jointly fit all parameters using gradient-based optimization.

{\bf Computational Cost}.
NKN introduces small computational overhead. Suppose we have $N$ data points and $m$ connections in the NKN; the computational cost of the forward pass is $O(N^2 m)$. Note that a moderately-sized NKN, as we used in our experiments\footnote{In our experiments, we found 1 or 2 modules work very well. But it might be advantageous to use more modules in other tasks.}, has only tens of parameters, and the main computational bottleneck in training lies in inverting kernel matrix, which is an $O(N^3)$ operation; therefore, NKN incurs only small per-iteration overhead compared to ordinary GP training.

 
\subsection{Universality}
\label{sec:universality}

In this section, we analyze the expressive power of the NKN, and in particular its ability to approximate arbitrary stationary kernels. Our analysis provides insight into certain design decisions for the NKN: in particular, we show that the NKN can approximate some stationary kernels much more compactly if the units of the network are allowed to take complex values. Furthermore, we show that the fixed structure of the product layers does not limit what the network can represent.


\begin{deff}\label{def:pwp}
For kernels $\{k_j\}_{j=1}^n$, a kernel $k$ is positive-weighted polynomial (PWP) of these kernels if $\exists T \in \mathbb{N}$ and $\{w_t, \{p_{tj}\}_{j=1}^n | w_i \in \mathbb{R}^+, p_{tj} \in \mathbb{N}\}_{t=0}^T$, such that
\begin{equation}
    k(x, y) = \sum_{t=1}^T w_t \prod_{j=1}^n k_j^{p_{tj}}
\end{equation}
holds for all $x, y \in \mathbb{R}$. Its degree is $\underset{t}{\max} \sum_{j=1}^n p_{tj}$.
\end{deff}

Composed of summation and multiplication, the NKN naturally forms a positive-weighted polynomial of primitive kernels. Although NKN adopts a fixed multiplication order in the Product layer, the following theorem shows that this fixed architecture doesn't undermine NKN's expressiveness (proof in Appendix~\ref{app:nkn-poly-univ}).

\begin{thm} Given $B$ primitive kernels,
\label{thm:poly-univ}
\begin{itemize}
\item An NKN with width $2B+6$ can represent any PWP of primitive kernels.
\item An NKN with width $2^{Bp+1}$ and $p$ Linear-Product modules can represent any PWP with degree no more than $2^p$.
\end{itemize}
\end{thm}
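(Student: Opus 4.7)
The plan is constructive: for each target PWP I will exhibit an explicit NKN of the claimed width that computes it, using \Cref{lem:kernel-property} to certify that every intermediate unit is a valid (possibly complex-valued) kernel.

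For the first bullet I would induct on the complexity of the target PWP $k = \sum_{t=1}^T w_t \prod_j k_j^{p_{tj}}$, building it up one multiplication at a time through a deep but narrow NKN. Between consecutive modules I would maintain the invariant that the $B+3$ units emerging from each Product layer consist of: the $B$ primitives $k_1, \ldots, k_B$ in designated slots, a running weighted-sum accumulator $\sum_{t' \le t_0} w_{t'} M_{t'}$, a partial-monomial accumulator for the monomial currently under construction, and one scratch slot. Because each Product-output unit is the product of two Linear-output units, preserving these $B+3$ values across a module forces the Linear layer to have width $2(B+3) = 2B+6$: for each primitive $k_j$ the pair is $(k_j, 1)$ (the constant $1$ being drawn from the constant primitive, or introduced once via the scratch slot in the very first module); for the sum and partial-monomial accumulators the pair is either $(\text{current value}, 1)$ to preserve them or $(\text{current value}, \text{next factor})$ to extend them; and the scratch slot absorbs the scalar weight $w_t$ when a completed monomial is folded into the sum. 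Chaining enough such modules assembles any target PWP.

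For the second bullet I would argue in parallel rather than sequentially, inducting on the number of modules $p$. The claim to carry through the induction is that with $p$ modules and width $2^{Bp+1}$, the units immediately after the $p$-th Product layer contain (up to nonnegative rescaling) every monomial of degree exactly $2^p$ in the primitives, after which a final Linear layer produces any PWP of degree $\le 2^p$ (lower-degree monomials being filled in by placing the constant kernel in some tensor positions). The inductive step uses the identity that a product of two degree-$2^p$ monomials is a degree-$2^{p+1}$ monomial, so adding a module squares the representable degree; the width doubles a factor of $2^B$ per module (from $2^{Bp+1}$ to $2^{B(p+1)+1}$), which supplies enough parallel output channels to index every one of the at most $\binom{B+2^p}{B}$ monomials of the next degree.

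The main obstacle I expect is tightening the constant overhead in part one. The $+6$ forces aggressive reuse of only three ``computation slots'' across the distinct roles of sum-preservation, monomial-extension, and weight-absorption, and one must verify that these roles can always be scheduled in a legal order without ever needing a fourth scratch cell. Handling the edge case where the constant kernel is not among the primitives requires introducing it once and showing it can be amortized into the existing scratch slot rather than enlarging the width budget; I would treat this as a separate initial module whose sole purpose is to manufacture and then perpetually carry a unit constant.
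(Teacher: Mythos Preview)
Your argument for the first bullet is essentially the paper's own: build one monomial at a time in a deep narrow network, maintaining the $B$ primitives, a bias, a partial-monomial register, and an output accumulator as $B+3$ values after each Product layer, hence Linear width $2B+6$. The paper presents exactly this scheme (via a figure) and your slot-accounting matches.

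For the second bullet your plan diverges from the paper and has a real gap. The paper does \emph{not} induct on $p$; it first proves a lemma that a single primitive $k_0$ can be raised to any power $q\le 2^p$ using width $2(p+1)$ via the binary expansion of $q$ (carry $k_0^{2^0},k_0^{2^1},\ldots,k_0^{2^{p}}$ and multiply the selected ones), and then recurses on $B$ by grouping the target PWP according to the degree in one distinguished primitive. Your approach instead tries to carry, after $p$ modules, \emph{every} monomial of degree (at most) $2^p$ as a separate unit. But the Product output after $p$ modules has only $2^{Bp}$ slots, while the number of monomials of degree $\le 2^p$ in $B$ variables is $\binom{B+2^p}{B}$, and this can exceed $2^{Bp}$: already for $B=1$ you need $2^p+1>2^p$ slots, and for $B=2$, $p=1$ you need $\binom{4}{2}=6>4$. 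Restricting to monomials of degree \emph{exactly} $2^p$ does not rescue the final step, since then the terminal Linear layer cannot recover lower-degree terms unless the constant is one of the $B$ primitives, which the theorem (and the paper's bullet-one proof, where the bias is counted separately) does not assume. The essential missing idea is the binary-representation trick for powers, which lets you avoid enumerating all monomials and is what makes the $2^{Bp+1}$ width bound attainable.
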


Interestingly, NKNs can sometimes approximate (real-valued) kernels more compactly if the hidden units are allowed to represent complex-valued kernels, and the real part is taken only at the end. In particular, we give an example of a spectral mixture kernel class which can be represented with an NKN with a single complex-valued primitive kernel, but whose real-valued NKN representation requires a primitive kernel for each mixture component (proof in Appendix~\ref{app:limitation-real-kernel}).

%
\begin{examp} \label{examp: limitation-real}
Define a $d$-dimensional spectral mixture kernel with $n+1$ components, $k^*(\bm{\tau})=\overset{n+1}{\underset{t=1}\sum} \binom{n}{2}^{2t}\cos(4^t \mathbf{1}^\top \bm{\tau})$. Then $\exists \epsilon > 0$, such that $\forall \{\bm{\mu_t} \}_{t=1}^n$, and any PWP of $\{\cos(\bm{\mu_t^\top\tau})\}_{t=1}^n$ denoted as $\bar{k}$,
\begin{equation}
    \underset{\bm{\tau} \in \mathbb{R}^d}\max | \bar{k}(\bm{\tau}) - k^*(\bm{\tau}) | > \epsilon
\end{equation}
In contrast, $k^*$ can be represented as the real part of a PWP of only one complex-valued primitive kernel $e^{i \mathbf{1}^\top \bm{\tau}}$,
\begin{equation}
    k^*(\bm{\tau}) = \Re\{\overset{n+1}{\underset{t=1}{\sum}} \binom{n}{2}^{2t} [e^{i \mathbf{1}^\top \bm{\tau}}]^{4^t} \} 
\end{equation}

\end{examp}

We find that an NKN with small width can approximate any complex-valued stationary kernel, as shown in the following theorem (Proof in Appendix~\ref{app:nkn-stationary-univ}).

\begin{thm}\label{thm:stationary-univ}
For any $d$-dimensional complex-valued stationary kernel $k^*$ and $\epsilon \in \mathbb{R}^+$, $\exists \{\bm{\gamma_j}\}_{j=1}^d, \{\bm{\mu_j}\}_{j=1}^{2d}$, and an NKN $\bar{k}$ with primitive kernels $\{\exp(-2\pi^2 \|\bm{\tau} \odot \bm{\gamma_j} \|^2 )\}_{j=1}^d$, $\{\exp(i\bm{\mu_j}^\top \bm{\tau})\}_{j=1}^{2d}$, and width no more than $6d+6$, such that
\begin{equation}
    \underset{\tau \in \mathbb{R}^d}{\max} |\bar{k}(\bm{\tau})-k^*(\bm{\tau})| < \epsilon
\end{equation}

\end{thm}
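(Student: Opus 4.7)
The approach reduces universal approximation of stationary kernels to the spectral side via Bochner's theorem, then expresses the resulting spectral-mixture form as a positive-weighted polynomial (PWP) of the stated primitives, and finally invokes Theorem~\ref{thm:poly-univ} to realize the PWP as an NKN of width $2(3d)+6=6d+6$.

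By Theorem~\ref{thm:bochners}, $k^*=\hat{\psi}$ for a positive finite measure $\psi$ on $\mathbb{R}^d$. I would approximate $\psi$ in total variation by a finite Gaussian mixture $\psi_Q=\sum_{q=1}^Q \alpha_q\,\mathcal{N}(\bm{\nu}_q,\mathrm{diag}(\bm{\sigma}_q^2))$, separating out any atomic part of $\psi$. Since $|\hat{\psi}(\bm{\tau})-\hat{\psi}_Q(\bm{\tau})|\le\|\psi-\psi_Q\|_{\mathrm{TV}}$, this yields a spectral-mixture kernel
\[
\tilde k(\bm{\tau})\;=\;\sum_{q=1}^Q \alpha_q\exp\!\bigl(-2\pi^2\|\bm{\tau}\odot\bm{\sigma}_q\|^2\bigr)\exp\!\bigl(2\pi i\,\bm{\nu}_q^\top\bm{\tau}\bigr)
\]
satisfying $\sup_{\bm{\tau}\in\mathbb{R}^d}|\tilde k-k^*|<\epsilon/2$.

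Next, I would choose axis-aligned primitives $\bm{\gamma}_j=a\bm{e}_j$ and $\bm{\mu}_j=b\bm{e}_j,\ \bm{\mu}_{d+j}=-b\bm{e}_j$ (with small $a,b>0$ to be fixed). A monomial in these primitives with nonnegative integer exponents $(\{p_j\},\{q_j^+,q_j^-\})$ evaluates to $\exp(-2\pi^2 a^2\sum_j p_j\tau_j^2)\exp(ib\sum_j(q_j^+-q_j^-)\tau_j)$, so such monomials reach exactly the spectral-mixture components with squared widths in $a^2\mathbb{N}^d$ and frequencies in $b\mathbb{Z}^d$. Rounding each $(\bm{\sigma}_q^2,\bm{\nu}_q)$ to the nearest such lattice point produces a PWP $\bar k$ of the $3d$ primitives. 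The per-component envelope round-off is uniformly $O(a^2)$ in $\bm{\tau}$ because $\tau^2 e^{-c\tau^2}$ is bounded; the per-component frequency round-off, although $|e^{i\bm{\nu}^\top\bm{\tau}}-e^{i\bm{\nu}'^\top\bm{\tau}}|\le\|\Delta\bm{\nu}\|\|\bm{\tau}\|$ grows with $\|\bm{\tau}\|$ on its own, becomes uniformly $O(b)$ once multiplied by the Gaussian envelope by the same $\tau e^{-c\tau^2}$ bound. Choosing $a,b$ small relative to $\epsilon/Q$ gives $\sup_{\bm{\tau}}|\bar k-\tilde k|<\epsilon/2$ and hence $\sup_{\bm{\tau}}|\bar k-k^*|<\epsilon$. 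Applying Theorem~\ref{thm:poly-univ} with $B=3d$ primitives then realizes $\bar k$ exactly as an NKN of width $2B+6=6d+6$.

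\textbf{Main obstacle.} The key technical step is the uniform-in-$\bm{\tau}$ control of the frequency round-off: absent the Gaussian envelope the error $\|\Delta\bm{\nu}\|\|\bm{\tau}\|$ is unbounded on $\mathbb{R}^d$, so the proof must exploit the envelope's decay to damp the $\|\bm{\tau}\|$-growth. This is exactly why the $d$ Gaussian primitives are indispensable (they cannot be replaced by more complex exponentials) and why $2d$ complex-exponential primitives are both necessary and sufficient, to reach the signed lattice $b\mathbb{Z}^d$ with only nonnegative exponents. A related subtlety is treating any atomic mass of $\psi$, where no envelope is available; this must be handled by placing a few complex exponentials exactly at the atom locations, consuming part of the exponential-primitive budget without affecting the width bound.
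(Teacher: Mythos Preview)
Your strategy matches the paper's: Bochner, spectral-mixture approximation, discretize the component means and variances onto an axis-aligned lattice so the approximant becomes a PWP of $3d$ one-dimensional primitives, then invoke Theorem~\ref{thm:poly-univ} for width $2(3d)+6=6d+6$. The only substantive difference is where the lattice round-off is controlled: you bound it directly in the kernel domain via the Gaussian-envelope trick $\|\bm{\tau}\|e^{-c\|\bm{\tau}\|^2}=O(1)$, whereas the paper does it entirely on the spectral side (a discrete Gaussian-mixture $L_1$ lemma using Pinsker's inequality) and then passes to uniform kernel bounds via $|\hat f(\bm{\tau})|\le\|f\|_1$.

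One caution on your atoms remark: placing complex-exponential primitives ``exactly at the atom locations'' does not fit the stated budget, since the number of atoms is unrelated to $d$, and the $2d$ exponential primitives must simultaneously generate the lattice $b\mathbb{Z}^d$ needed for the absolutely continuous part; you cannot reallocate them per atom without breaking that lattice. The paper's proof sidesteps this by tacitly assuming the spectral measure has a continuous density, so this is a shared limitation of the argument rather than a defect particular to your route.
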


Beyond approximating stationary kernels, NKN can also capture non-stationary structure by incorporating non-stationary primitive kernels. In Appendix~\ref{app:nkn-closure}, we prove that with the proper choice of primitive kernels, NKN can approximate a broad family of non-stationary kernels called generalized spectral kernels \citep{kom2015generalized}.

\section{Related Work}
\label{sec:related}

Additive kernels \citep{duvenaud2011additive} are linear combinations of kernels over individual dimensions or groups of dimensions, and are a promising method to combat the curse of dimensionality. While additive kernels need an exponential number of multiplication terms in the input dimension, hierarchical kernel learning (HKL) \citep{bach2009exploring} presents a similar kernel except selecting only a subset to get a polynomial number of terms. However, this subset selection imposes additional optimization difficulty.

Based on Bochner's theorem, there is another a line of work  on designing kernels in the spectral domain, including sparse spectrum kernels (SS) \citep{quia2010sparse}; spectral mixture (SM) kernels \citep{wilson2013gaussian}; generalized spectral kernels (GSK) \citep{kom2015generalized} and generalized spectral mixture (GSM) kernels \cite{remes2017non}. Though these approaches often extrapolate sensibly, capturing complex covariance structure may require a large number of mixture components.

The Automatic Statistician \citep{duvenaud2013structure, lloyd2014automatic, malkomes2016bayesian} used a compositional grammar of kernel structures to analyze datasets and provide natural language reports. In each stage, it considered all production rules and used the one that resulted in the largest log-likelihood improvement. Their model showed good extrapolation for many time series tasks, attributed to the recovery of underlying structure. However, it relied on greedy discrete search over kernel and operator combinations, making it computational expensive, even for small time series datasets.

There have been several attempts \citep{hinton2008using, wilson2016deep} to combine neural networks with Gaussian processes. Specifically, they used a fixed kernel structure on top of the hidden representation of a neural network. This is complementary to our work, which focuses on using neural networks to infer the kernel structure itself. Both approaches could potentially be combined.

Instead of represeting kernel parametrically, \citet{oliva2016bayesian} modeled random feature dimension with stick breaking prior and \citet{tobar2015learning} generated functions as the convolution between a white noise process and a linear filter drawn from GP. These approaches offer much flexibility but also incur challenges in training.


\begin{table*}[t]
\caption{Average test RMSE and log-likelihood for regression benchmarks with random splits.}
\vspace{-1em}
\label{tab:uci-regression}
\begin{center}
\resizebox{\textwidth}{!}{
\begin{sc}
\begin{tabular}{lcccccccc}
\toprule
\textbf{}        & \multicolumn{4}{c}{Test RMSE} & \multicolumn{4}{c}{Test log-likelihood} \\
\textbf{Dataset} & BBB & GP-RBF & GP-SM4 & GP-NKN & BBB & GP-RBF & GP-SM4 & GP-NKN \\
\midrule
Boston  		 & 3.171$\pm$0.149 & 2.753$\pm$0.137 & 2.979$\pm$0.162 & \textbf{2.506$\pm$0.150} & -2.602$\pm$0.031 & -2.434$\pm$0.069 & -2.518$\pm$0.107 & \textbf{-2.394$\pm$0.080} \\
Concrete         & 5.678$\pm$0.087 & 4.685$\pm$0.137 & 3.730$\pm$0.190 & \textbf{3.688$\pm$0.249} & -3.149$\pm$0.018 & -2.948$\pm$0.025 & \textbf{-2.662$\pm$0.053} & -2.842$\pm$0.263 \\
Energy           & 0.565$\pm$0.018 & 0.471$\pm$0.013 & 0.316$\pm$0.018 & \textbf{0.254$\pm$0.020} & -1.500$\pm$0.006 & -0.673$\pm$0.035 & -0.320$\pm$0.089 & \textbf{-0.213$\pm$0.162} \\
Kin8nm           & 0.080$\pm$0.001 & 0.068$\pm$0.001 & \textbf{0.061$\pm$0.000} & 0.067$\pm$0.001 & 1.111$\pm$0.007 & 1.287$\pm$0.007 & \textbf{1.387$\pm$0.006} & 1.291$\pm$0.006 \\
Naval            & \textbf{0.000$\pm$0.000} & \textbf{0.000$\pm$0.000} & \textbf{0.000$\pm$0.000} & \textbf{0.000$\pm$0.000} & 6.143$\pm$0.032 & 9.557$\pm$0.001 & \textbf{9.923$\pm$0.000} & 9.916$\pm$0.000 \\
Pow. Plant       & 4.023$\pm$0.036 & 3.014$\pm$0.068 & 2.781$\pm$0.071 & \textbf{2.675$\pm$0.074} & -2.807$\pm$0.010 & -2.518$\pm$0.020 & -2.450$\pm$0.022 & \textbf{-2.406$\pm$0.023} \\
Wine             & 0.643$\pm$0.012 & 0.597$\pm$0.013 & 0.579$\pm$0.012 & \textbf{0.523$\pm$0.011} & -0.977$\pm$0.017 & 0.723$\pm$0.067 & 0.652$\pm$0.060 & \textbf{0.852$\pm$0.064}\\
Yacht            & 1.174$\pm$0.086 & 0.447$\pm$0.083 & 0.436$\pm$0.070 & \textbf{0.305$\pm$0.060} & -2.408$\pm$0.007 & -0.714$\pm$0.449 & -0.891$\pm$0.523 & \textbf{-0.116$\pm$0.270} \\
\bottomrule
\end{tabular}
\end{sc}
}
\end{center}
\end{table*}
\section{Experiments}
\label{sec:experiments}

We conducted a series of experiments to measure the NKN's predictive ability in several settings: time series, regression benchmarks, and texture images. We focused in particular on extrapolation, since this is a strong test of whether it has uncovered the underlying structure. Furthermore, we tested the NKN on Bayesian Optimization, where model structure and calibrated uncertainty can each enable more efficient exploration. Code is available at \url{git@github.com:ssydasheng/Neural-Kernel-Network.git}



\subsection{Time Series Extrapolation}
We first conducted experiments time series datasets to study extrapolation performance. For all of these experiments, as well as the $2\text{-}d$ experiment in Figure~\ref{fig:intro}, we used the same NKN architecture and training setup (Appendix~\ref{app:toy-details}).

\begin{figure}[h!] 
\centering
\includegraphics[width=0.95\columnwidth]{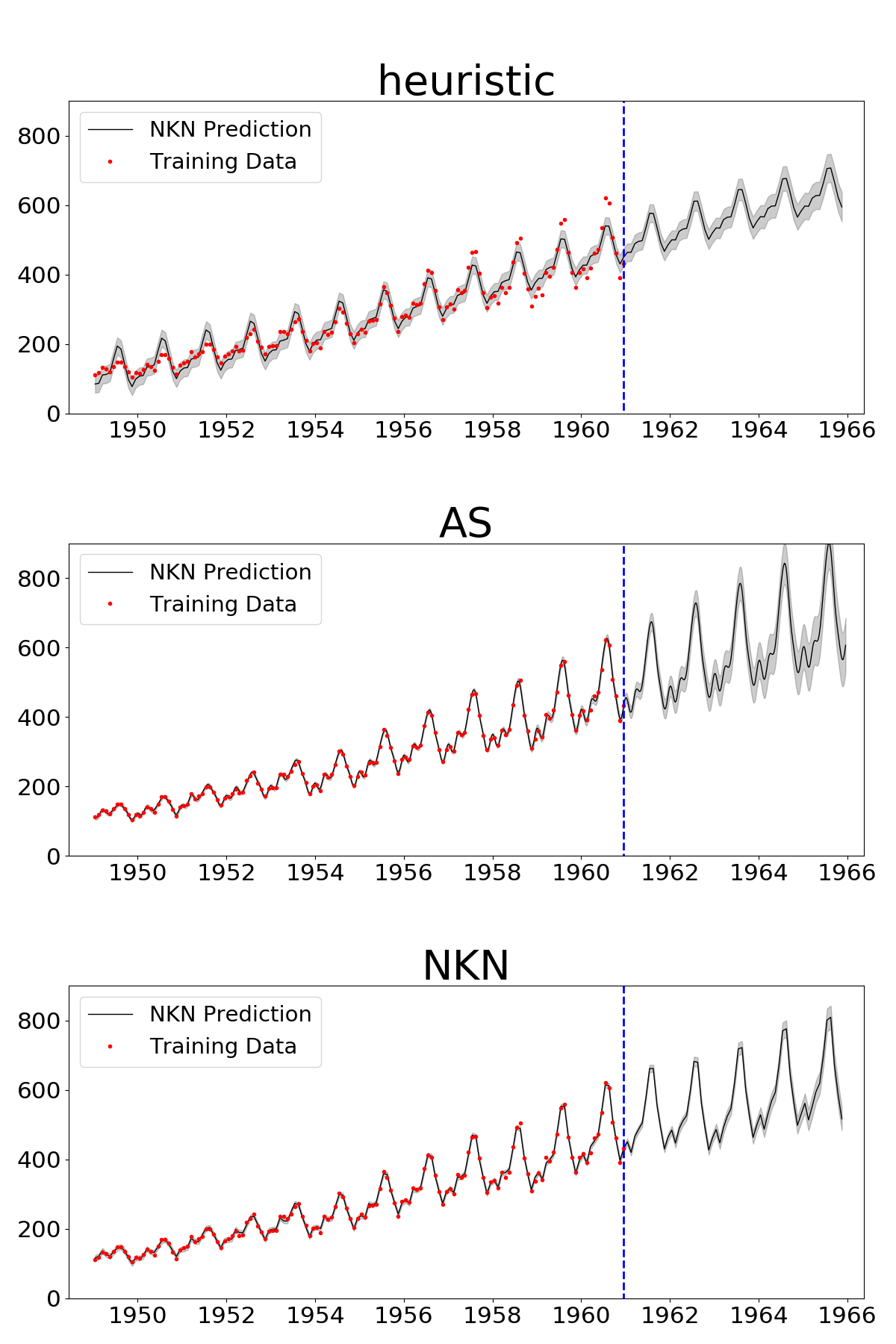}
\caption{Extrapolation results of NKN on the Airline dataset. ``Heuristic'' denotes linear combination of RBF, PER, LIN, and Constant kernels. AS represents Automatic Statistician \citep{duvenaud2013structure}.
The \textcolor{red}{\bf red circles} are the training points, and the curve after the \textcolor{blue}{\bf blue dashed line} is the extrapolation result. Shaded areas represent 1 standard deviation.}
\label{fig:toy_as}
\end{figure}

We validated the NKN on three time series datasets introduced by \citet{duvenaud2013structure}: airline passenger volume (Airline), Mauna Loa atmospheric $\text{CO}_2$ concentration (Mauna), and solar irradiance (Solar). Our focus is on extrapolation, since this is a much better test than interpolation for whether the model has learned the underlying structure.

We compared the NKN with the Automatic Statistician \citep{duvenaud2013structure}; both methods used RBF, RQ, PER and LIN as the primitive kernels. In addition, because many time series datasets appear to contain a combination of seasonal patterns, long-term trends, and medium-scale variability, we also considered a baseline consisting of sums of PER, LIN, RBF, and Constant kernels, with trainable weights and kernel parameters. We refer to this baseline as ``heuristic''.

The results for Airline are shown in Figure~\ref{fig:toy_as}, while the results for Mauna and Solar are shown in Figures~\ref{fig:toy_as_mauna} and \ref{fig:toy_as_solar} in the Appendix. All three models were able to capture the periodic and increasing patterns. However, the heuristic kernel failed to fit the data points well or capture the increasing amplitude, stemming from its lack of PER*LIN structure. In comparison, both AS and NKN fit the training points perfectly, and generated sensible extrapolations. However, the NKN was far faster to train because it avoided discrete search: for the Airline dataset, the NKN took only $201$ seconds, compared with $6147$ seconds for AS.


\label{subsec:toy}

\subsection{Regression Benchmarks}
\label{subsec:reg-cl}

\subsubsection{Random Training/Test Splits}
\label{subsec:reg}

To evaluate the predictive performance of NKN, we first conducted experiments on regression benchmark datasets from the UCI collection \citep{asuncion2007uci}. Following the settings in \citet{hernandez2015probabilistic}, the datasets were randomly split into training and testing sets, comprising 90\% and 10\% of the data respectively. This splitting process was repeated 10 times to reduce variability. We compared NKN to RBF and SM \citep{wilson2013gaussian} kernels, and the popular Bayesian neural network method Bayes-by-Backprop (BBB)~\citep{blundell2015weight}. For the SM kernel, we used 4 mixture components, so we denote it as SM-4. 
For all experiments, the NKN uses $6$ primitive kernels including $2$ RQ, $2$ RBF, and $2$ LIN. The following layers are organized as $\text{Linear}8\text{-}\text{Product}4\text{-}\text{Linear}4\text{-}\text{Product}2\text{-}\text{Linear}1$.\footnote{The number for each layer represents the output dimension.} We trained both the variance and $d$-dimensional lengthscales for all kernels. As a result, for $d$ dimensional inputs, SM-4 has $8d+12$ trainable parameters and NKN has $4d+85$ parameters. 

As shown in Table~\ref{tab:uci-regression}, BBB performed worse than the Gaussian processes methods on all datasets. On the other hand, NKN and SM-4 performed consistently better than the standard RBF kernel in terms of both RMSE and log-likelihoods. Moreover, the NKN outperformed the SM-4 kernel on all datasets other than Naval and Kin8nm.


\begin{figure*}[t]
\centering
\hspace{-1em}%
\subfigure[Stybtang] { \label{fig:bo_sty} 
\includegraphics[height=4.0cm]{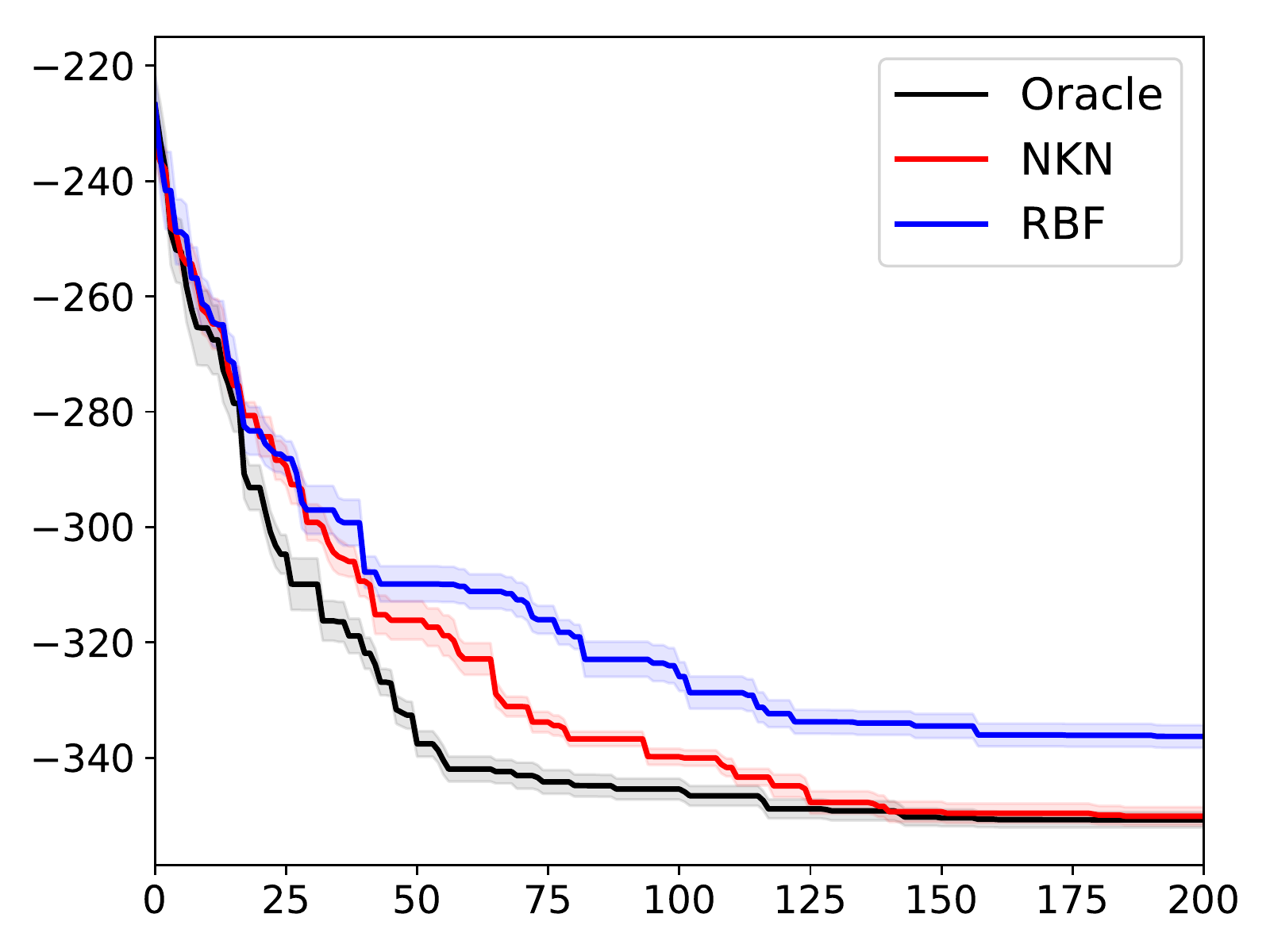} 
} 
\hspace{-1em}
\subfigure[Michalewicz] { \label{fig:bo_mich} 
\includegraphics[height=4.0cm]{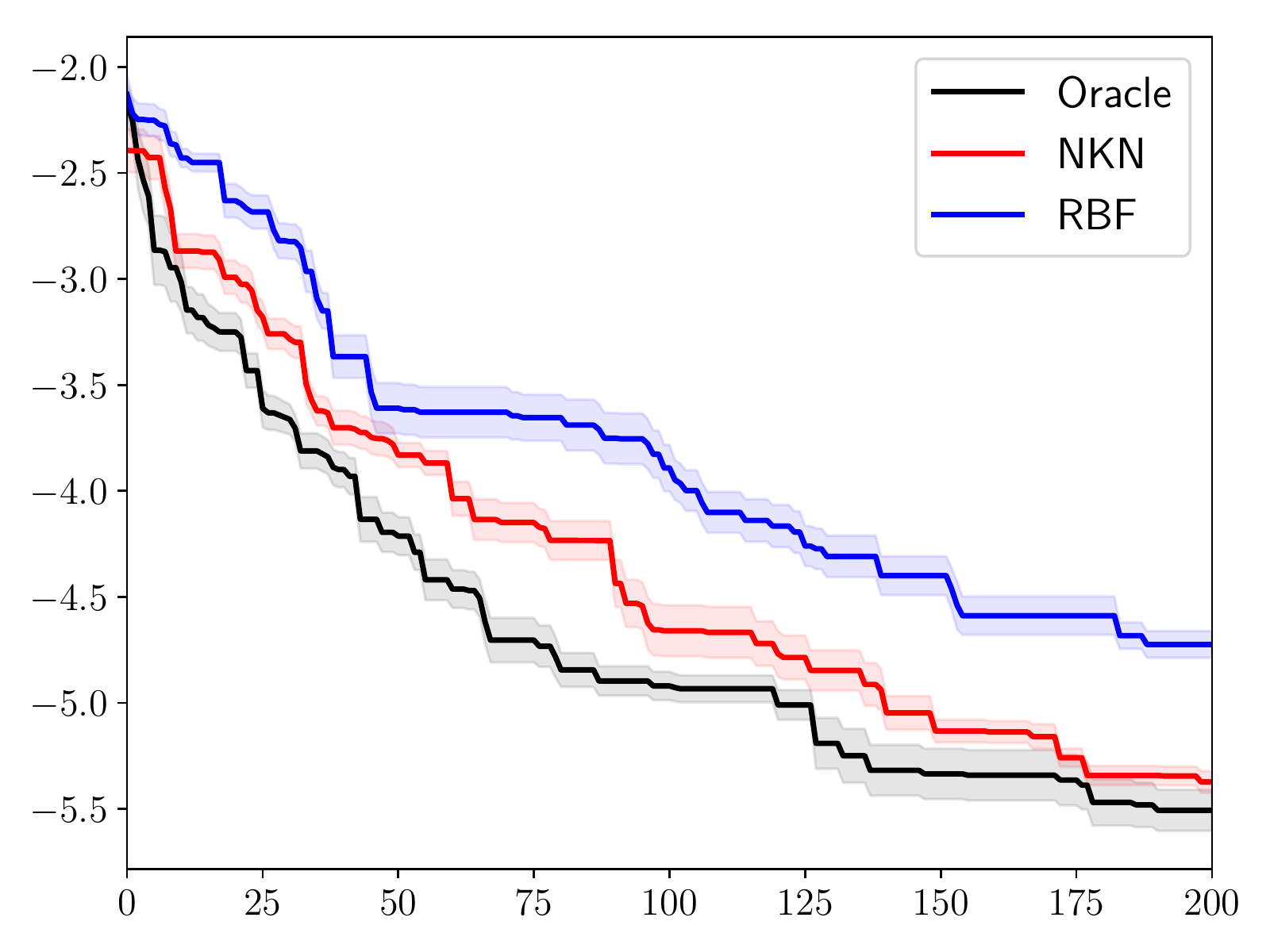} 
}
\hspace{-1em}%
\subfigure[Stybtang-transform] { \label{fig:bo_sty_t} 
\includegraphics[height=4.0cm]{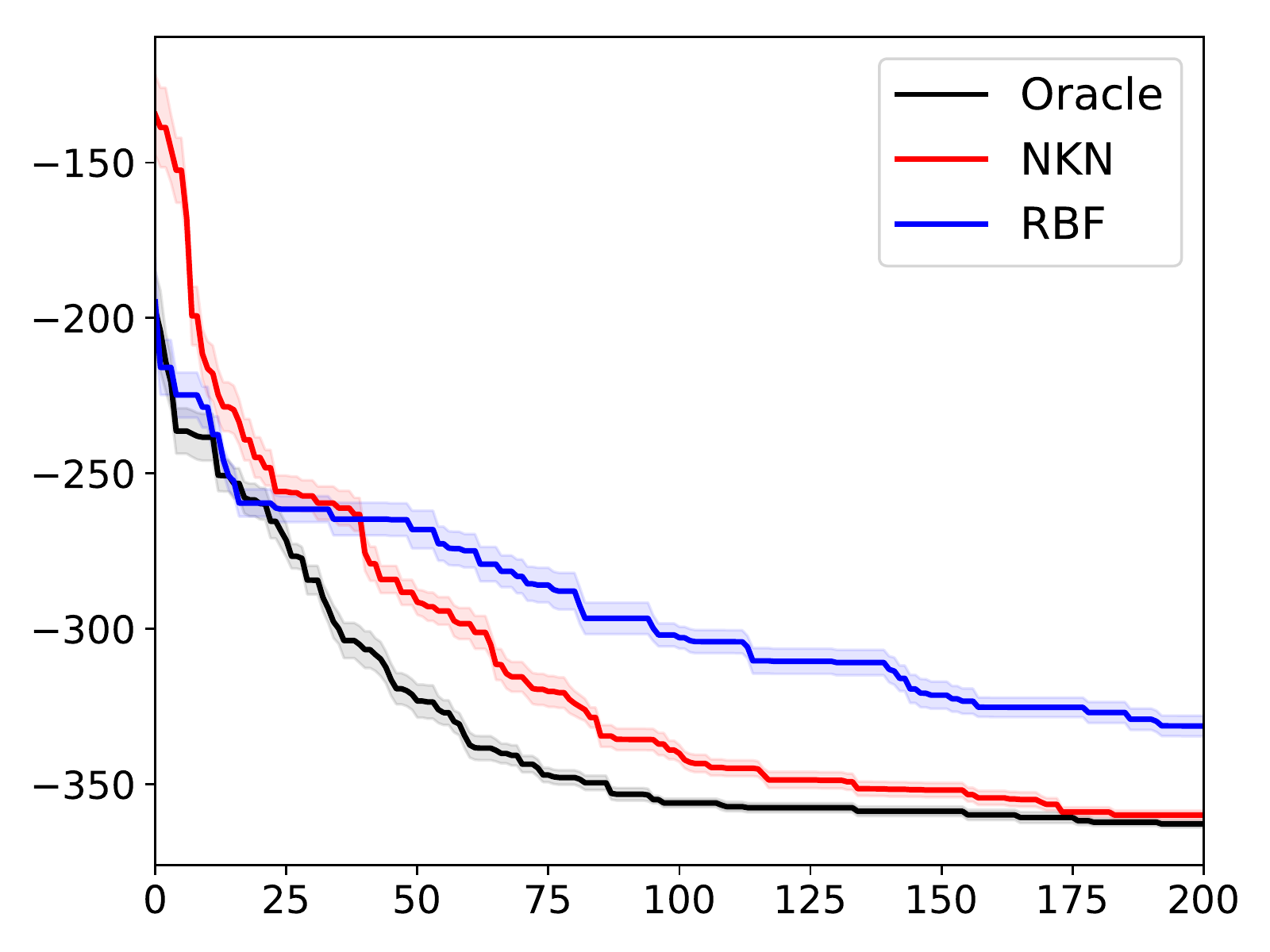} 
} 
\vspace{-1em}
\caption{Bayesian optimization on three tasks. The oracle kernel has the true additive structure of underlying function. Shaded error bars represent 0.2 standard deviations over 10 runs.}
\label{fig:bo} 
\end{figure*}

\subsubsection{Measuring Extrapolation with PCA splits}\label{subsubsec:pca-reg}

Since the experiments just presented used random training/test splits, they can be thought of as measuring interpolation performance. We are also interested in measuring extrapolation performance, since this is a better measure of whether the model has captured the underlying structure. In order to test this, we sorted the data points according to their projection onto the top principal component of the data. The top 1/15 and bottom 1/15 of the data were used as test data, and the remainder was used as training data. 


We compared NKN with standard RBF and SM kernels using the same architectural settings as in the previous section. All models were trained for 20,000 iterations. To select the mixture number of SM kernels, we further subdivided the training set into a training and validation set, using the same PCA-splitting method as described above. For each dataset, we trained SM on the sub-training set using $\{1, 2, 3, 4\}$ mixture components and selected the number based on validation error. (We considered up to 4 mixture components in order to roughly match the number of parameters for NKN.) Then we retrained the SM kernel using the combined training and validation sets. The resulting test RMSE and log-likelihood are shown in Table~\ref{tab:uci-pca-regression}.

As seen in Table~\ref{tab:uci-pca-regression}, all three kernels performed significantly worse compared with Table~\ref{tab:uci-regression}, consistent with the intuition that extrapolation is more difficult that interpolation. However, we can see NKN outperformed SM for most of the datasets. In particular, for small datasets (and hence more chance to overfit), NKN performed better than SM by a substantial margin, with the exception of the \textit{Energy} dataset. This demonstrates the NKN was better able to capture the underlying structure, rather than overfitting the training points.
%

\begin{table}[t]
\caption{Test RMSE and log-likelihood for the PCA-split regression benchmarks. N denotes the number of data points.}
\vspace{-1em}
\label{tab:uci-pca-regression}
\begin{center}
\resizebox{0.5\textwidth}{!}{
\begin{sc}
\begin{tabular}{lccccccc}
\toprule
\textbf{}        & \textbf{} & \multicolumn{3}{c}{Test RMSE} & \multicolumn{3}{c}{Test log-likelihood} \\
\textbf{Dataset} & N         & GP-RBF & GP-SM & GP-NKN       & GP-RBF  & GP-SM   & GP-NKN \\
\midrule
Boston  		 & 506        & 6.390 & 8.600  & \textbf{4.658} & -4.063 & -4.404   & \textbf{-3.557}  \\
Concrete         & 1031        & 8.531 & 7.591  & \textbf{6.242} & -3.246 & -3.285   & \textbf{-3.112} \\
Energy           & 768        & 0.974 & \textbf{0.447} & 0.459  & -1.297 & \textbf{-0.564} & -0.649   \\
Kin8nm           & 8192        & 0.093 & \textbf{0.065} & 0.086  & 0.998  & \textbf{1.322} &  1.057     \\
Naval            & 11934        & 0.000 & \textbf{0.000} & 0.000  & 7.222  & \textbf{9.037} &  6.442    \\
Pow. Plant       & 9568        & 4.768 & 3.931  & \textbf{3.742} & -3.076 & -2.877 &  \textbf{-2.763}   \\
Wine             & 1599        & 0.701 & 0.660  & \textbf{0.650} & -1.076 & -1.002   & \textbf{-0.972} \\
Yacht            & 308        & 1.190 & 1.736  & \textbf{0.528} & -2.896 & -2.768   & \textbf{-0.694} \\
\bottomrule
\end{tabular}
\end{sc}
}
\end{center}
\end{table}


\subsection{Bayesian Optimization}
\label{subsec:bo}

\begin{figure*}[t]
\centering
\subfigure[Observations] { \label{fig:plate-training} 
\includegraphics[height=2.4cm]{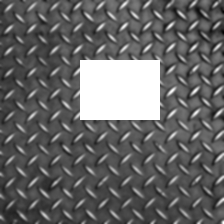} 
} 
\subfigure[Ground Truth] { \label{fig:plate-full} 
\includegraphics[height=2.4cm]{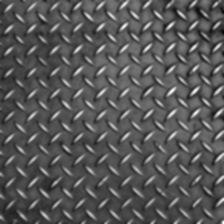} 
}
\subfigure[NKN-4] { \label{fig:plate-nkn} 
\includegraphics[height=2.4cm]{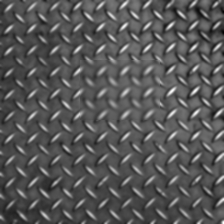} 
} 
\subfigure[RBF] { \label{fig:plate-se} 
\includegraphics[height=2.4cm]{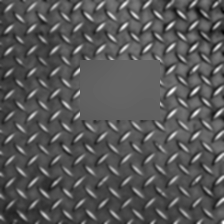} 
}
\subfigure[PER] { \label{fig:plate-per} 
\includegraphics[height=2.4cm]{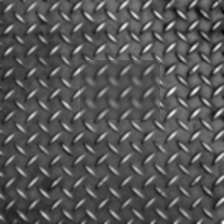} 
}
\subfigure[SM-10] { \label{fig:plate-sm} 
\includegraphics[height=2.4cm]{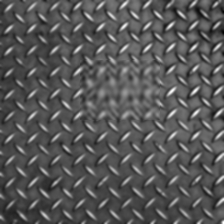} 
} 
\subfigure[Observations] { \label{fig:pave-training} 
\includegraphics[height=2.4cm]{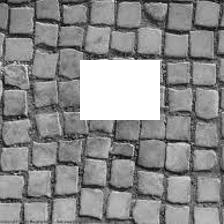} 
} 
\subfigure[Ground Truth] { \label{fig:pave-full} 
\includegraphics[height=2.4cm]{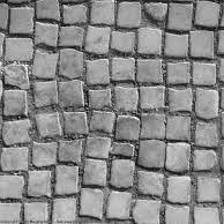} 
}
\subfigure[NKN-4] { \label{fig:pave-nkn} 
\includegraphics[height=2.4cm]{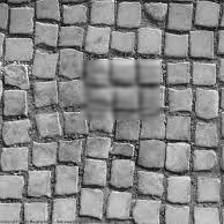} 
} 
\subfigure[RBF] { \label{fig:pave-se} 
\includegraphics[height=2.4cm]{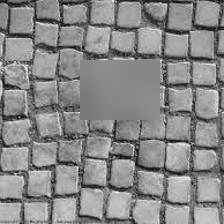} 
}
\subfigure[PER] { \label{fig:pave-per} 
\includegraphics[height=2.4cm]{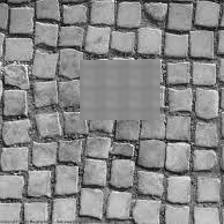} 
}
\subfigure[SM-10] { \label{fig:pave-sm} 
\includegraphics[height=2.4cm]{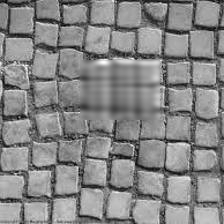} 
} 
\vspace{-1em}
\caption{Texture Extrapolation on metal thread plate (top) and paved pattern (bottom).}
%
\label{fig:rl} 
\end{figure*}

Bayesian optimization \citep{brochu2010tutorial, snoek2012practical} is a technique for optimization expensive black-box functions which repeatedly queries the function, fits a surrogate function to past queries, and maximizes an acquisition function to choose the next query. It's important to model both the predictive mean (to query points that are likely to perform well) and the predictive variance (to query points that have high uncertainty). Typically, the surrogate functions are estimated using a GP with a simple kernel, such as Matern. But simple kernels lead to inefficient exploration due to the curse of dimensionality, leading various researchers to consider additive kernels \citep{kandasamy2015high, gardner2017discovering, wang2017batched}. Since additivity is among the patterns the NKN can learn, we were interested in testing its performance on Bayesian optimization tasks with additive structure. We used Expectated Improvement (EI) to perform BO.

Following the protocol in \citet{kandasamy2015high, gardner2017discovering, wang2017batched}, we evaluated the performance on three toy function benchmarks with additive structure, 
\begin{equation}
    f(\bx) = \sum_{i=1}^{|P|} f_i(\bx[P_i])
\end{equation}

The $d$-dimensional Styblinski-Tang function and Michalewicz function have fully additive structure with independent dimensions. In our experiment, we set $d=10$ and explored the function over domain $[-4, 4]^d$ for Styblinski-Tang and $[0, \pi]^d$ for Michalewicz. We also experimented with a transformed Styblinski-Tang function, which applies Styblinski-Tang function on partitioned dimension groups. 

For modelling additive functions with GP, the kernel can decompose as a summation between additive groups as well. $k(\bx, \bx')=\sum_{i=1}^{|P|} k_i(\bx[P_i], \bx'[P_i])$. 
We considered an oracle kernel, which was a linear combination of RBF kernels corresponding to the true additive structure of the function. Both the kernel parameters and the combination coefficients were trained with maximum likelihood. 
We also tested the standard RBF kernel without additive structure. For the NKN, we used $d$ RBF kernels over individual input dimensions as the primitive kernels. The following layers were arranged as $\text{Linear}8\text{-}\text{Product}4\text{-}\text{Linear}4\text{-}\text{Product}2\text{-}\text{Linear}1$. Note that, although the primitive kernels corresponded to seperate dimensions, NKN can represent additive structure through these linear combination and product operations. In all cases, we used Expected Improvement as the acquisition function.

As shown in Figure~\ref{fig:bo}, for all three benchmarks, the oracle kernel not only converged faster than RBF kernel, but also found smaller function values by a large margin. In comparsion, we can see that although NKN converged slower than oracle in the beginning, it caught up with oracle eventually and reached the same function value. This suggests that the NKN is able to exploit additivity for Bayesian optimization.

\subsection{Texture Extrapolation}
\label{subsec:img-paint}

Based on \citet{wilson2014fast}, we evaluated the NKN on texture exploration, a test of the network's ability to learn local correlations as well as complex quasi-periodic patterns. From the original $224 \times 224$ images, we removed a $60 \times 80$ region, as shown in Figure~\ref{fig:plate-training}. From a regression perspective, this corresponds to 45376 training examples and 4800 test examples, where the inputs are 2-D pixel locations and the outputs are pixel intensities. To scale our algorithms to this setting, we used the approach of \citet{wilson2014fast}. In particular, we exploited the grid structure of texture images to represent the kernel matrix for the full image as a Kronecker product of kernel matrices along each dimension \citep{saatcci2012scalable}. Since some of the grid points are unobserved, we followed the algorithm in \citet{wilson2014fast} complete the grid with imaginary observations, and placed infinite measurement noise on these observations. 

To reconstruct the missing region, we used an NKN with 4 primitive kernels: LIN, RBF, RQ, and PER. As shown in Figure~\ref{fig:plate-nkn}, our NKN was able to learn and extrapolate complex image patterns. As baselines, we tested RBF and PER kernels; those results are shown in Figure~\ref{fig:plate-se} and Figure~\ref{fig:plate-per}. The RBF kernel was unable to extrapolate to the missing region, while the PER kernel was able to extrapolate beyond the training data since the image pattern is almost exactly periodic.
We also tested the spectral mixture (SM) kernel, which has previously shown promising results in texture extrapolation. Even with 10 mixture components, its extrapolations were blurrier compared to those of the NKN.
The second row shows extrapolations on an irregular paved pattern, which we believe is more difficult. The NKN still provided convincing extrapolation. By contrast, RBF and PER kernels were unable to capture enough information to reconstruct the missing region. 



\section{Conclusion}
\label{sec:conclusion}

We proposed the Neural Kernel Network (NKN), a differentiable architecture for compositional kernel learning. Since the architecture is based on the composition rules for kernels, the NKN can compactly approximate the kernel structures from the Automatic Statistician (AS) grammar. But because the architecture is differentiable, the kernel can be learned orders-of-magnitude faster than the AS using gradient-based optimization. We demonstrated the universality of the NKN for the class of stationary kernels, and showed that the network's representations can be made significantly more compact using complex-valued kernels. Empirically, we found the NKN is capable of pattern discovery and extrapolation in both 1-D time series datasets and 2-D textures, and can find and exploit additive structure for Bayesian Optimization.

\section*{Acknowledgements}
We thank David Duvenaud and Jeongseop Kim for their insightful comments and discussions on this project. SS was supported by a Connaught New Researcher Award and a Connaught Fellowship. GZ was supported by an NSERC Discovery Grant.

\bibliography{ref}
\bibliographystyle{icml2018}

\clearpage

\appendix

\section{Complex-Valued Non-Stationary Kernels}
\label{app:nonstationary-kernel}

Beyond stationary kernels, Generalized Bochner's Theorem \citep{yaglom2012correlation, kakihara1985note, genton2001classes} presents a spectral representation for a larger class of kernels. As stated in \citet{yaglom2012correlation}, Generalized Bochner's Theorem applies for most bounded kernels except for some special counter-examples. 

\begin{thm}\label{thm:nonstationary-bochners}
(Generalized Bochner) A complex-valued bounded continuous function $k$ on $\mathbb{R}^d$ is the covariance function of a mean square continuous complex-valued random process on $\mathbb{R}^d$ if and only if it can be represented as 
\begin{equation}
    k(\bm{x, y}) = \int_{\mathbb{R}^d\times \mathbb{R}^d} e^{2\pi i(\bm{w_1}^\top\bm{x}-\bm{w_2}^\top\bm{y})}\psi(\mathrm{d}\bm{w_1}, \mathrm{d}\bm{w_2})
\end{equation}
where $\psi$ is a Lebesgue-Stieltjes measure associated with some positive semi-definite bounded symmetric function $S(\bm{w_1, w_2})$.
\end{thm}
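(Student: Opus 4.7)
The plan is to establish the biconditional by treating each direction separately: sufficiency is a direct positive-definiteness computation, while necessity requires constructing the spectral bimeasure $\psi$ from the process via generalized Fourier analysis in a Hilbert space of square-integrable random variables. The overall structure mirrors the classical proof of Bochner's Theorem~\ref{thm:bochners}, but the joint dependence on both arguments requires replacing the univariate spectral measure by a positive-definite bimeasure on $\mathbb{R}^d\times\mathbb{R}^d$.

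For the sufficiency direction, I would suppose $k$ admits the stated representation and check positive definiteness directly. Picking a finite collection $\{\bm{x}_1,\ldots,\bm{x}_n\}$ and scalars $\{c_1,\ldots,c_n\}$, I would substitute the representation into $\sum_{i,j} c_i\overline{c_j}\,k(\bm{x}_i,\bm{x}_j)$ and apply Fubini (justified by the finite total variation of $\psi$ implied by boundedness of $k$) to obtain
\[
\int_{\mathbb{R}^d\times\mathbb{R}^d}\Big(\sum_i c_i e^{2\pi i\bm{w}_1^\top\bm{x}_i}\Big)\overline{\Big(\sum_j c_j e^{2\pi i\bm{w}_2^\top\bm{x}_j}\Big)}\,\psi(d\bm{w}_1,d\bm{w}_2),
\]
which is non-negative by the positive semi-definiteness of the associated $S$. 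Hermitian symmetry of $k$ follows from the symmetry hypothesis on $S$, and mean-square continuity follows from dominated convergence on the integrand, whose modulus is bounded by $1$ and which is jointly continuous in $(\bm{x},\bm{y})$.

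For the necessity direction, I would let $X(\bm{t})$ be a mean-square continuous process with covariance $k$, and work inside the Hilbert space $H=\overline{\operatorname{span}}\{X(\bm{t}):\bm{t}\in\mathbb{R}^d\}$ equipped with the $L^2$ inner product. The key step is constructing an orthogonally scattered $H$-valued random measure $\hat{X}$ on $\mathbb{R}^d$ such that
\[
X(\bm{t}) \;=\; \int_{\mathbb{R}^d} e^{2\pi i\bm{w}^\top\bm{t}}\,\hat{X}(d\bm{w}),
\]
which can be done either by applying Stone's theorem to an appropriate unitary representation of the translation group on an embedding of $H$, or by invoking the Gelfand--Naimark--Segal construction from the positive-definite kernel $k$. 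Defining $\psi(A\times B) := \langle\hat{X}(A),\hat{X}(B)\rangle_H$ and inserting this into the representation of $X$ recovers the claimed Fourier--Stieltjes form of $k$; positive semi-definiteness of the associated $S$ follows from the identity $\sum_{i,j}c_i\overline{c_j}\,\psi(A_i\times A_j)=\bigl\|\sum_i c_i\hat{X}(A_i)\bigr\|_H^2\ge 0$, and boundedness from the Cauchy--Schwarz estimate $|\psi(A\times B)|\le\|\hat{X}(A)\|_H\|\hat{X}(B)\|_H\le\|k\|_\infty$.

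The main obstacle will be the rigorous handling of $\psi$ as a positive-definite bimeasure. Unlike ordinary measures on a product space, such bimeasures need not extend to countably additive measures on the product $\sigma$-algebra and Fubini's theorem can fail, which is precisely why the statement is phrased in terms of a Lebesgue--Stieltjes measure associated with a positive semi-definite symmetric function $S$ rather than a bona fide product measure, and also the reason the theorem fails for certain pathological unbounded kernels (the counter-examples alluded to in Yaglom). I would handle this by following Yaglom's program: first establish the representation on rectangles using the orthogonal projections $\hat{X}(A)$, then extend to a valid integration theory via a monotone-class argument, using boundedness and continuity of $k$ to control convergence on generating sub-$\sigma$-algebras.
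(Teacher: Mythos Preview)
The paper does not prove this theorem; it is stated in Appendix~A as a cited result from \cite{yaglom2012correlation, kakihara1985note, genton2001classes} and is then \emph{used} (in Lemma~\ref{lem:nonstationary-closure} and Theorem~\ref{thm:nkn-nonstationary-univ}) rather than established. So there is no in-paper proof to compare against.

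That said, your sketch is the standard route and aligns with Yaglom's treatment that the paper points to: sufficiency by a direct positive-definiteness computation on the Fourier--Stieltjes integral, and necessity by building the $H$-valued spectral random measure $\hat{X}$ and setting $\psi(A\times B)=\langle \hat{X}(A),\hat{X}(B)\rangle_H$. Your cautionary remark about $\psi$ being only a positive-definite bimeasure (not necessarily a countably additive measure on the product $\sigma$-algebra) is exactly the subtlety the paper alludes to when it notes that the theorem covers ``most bounded kernels except for some special counter-examples.'' If you were to write this up, the one place to be careful is the appeal to Stone's theorem: without stationarity you do not get a unitary translation group on $H$ for free, so the construction of $\hat{X}$ must go through the harmonizability framework (e.g.\ via the Cram\'er--Lo\`eve representation or Kakihara's operator-theoretic approach) rather than the stationary shortcut. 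Otherwise the outline is sound.
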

We call kernels with this property \textit{complex-valued non-stationary} (CvNs) kernels.

Actually, when the spectral measure $\psi$ has mass concentrated along the diagonal $\bm{w_1}=\bm{w_2}$, Bochner's theorem is recovered. Similarly, for such a CvNs kernel, the closure under summation and multiplication still holds (Proof in Appendix~\ref{app:nons-complex-closure}):

\begin{lem}\label{lem:nonstationary-closure} For CvNs kernels $k, k_1, k_2$,
\begin{itemize}
\item For $\lambda_1, \lambda_2 \in \mathbb{R}^*$, $\lambda_1 k_1 + \lambda_2 k_2$ is a CvNs kernel.
\item Product $k_1 k_2$ is a CvNs kernel.
\item Real part $\Re\{k\}$ is a real-valued kernel.
\end{itemize}
\end{lem}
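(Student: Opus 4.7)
The plan is to leverage the Generalized Bochner's Theorem (Theorem~\ref{thm:nonstationary-bochners}) as a bridge: a function is a CvNs kernel precisely when it is the covariance of some bounded mean-square-continuous complex-valued process on $\mathbb{R}^d$. Under this correspondence, closure under the three operations reduces to constructing, for each case, an explicit process whose covariance equals the claimed expression, and then verifying the boundedness and mean-square continuity hypotheses. This is cleaner than arguing directly at the spectral-measure level, which would require showing that $\lambda_1 \psi_1 + \lambda_2 \psi_2$ and the convolution $\psi_1 * \psi_2$ are each Lebesgue--Stieltjes measures associated with PSD bounded symmetric densities.

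For the sum, I let $X_1, X_2$ be independent complex-valued processes with covariances $k_1, k_2$ and set $Y(\bx) = \sqrt{\lambda_1}\, X_1(\bx) + \sqrt{\lambda_2}\, X_2(\bx)$ for $\lambda_i > 0$. Independence immediately gives $E[Y(\bx)\overline{Y(\by)}] = \lambda_1 k_1(\bx,\by) + \lambda_2 k_2(\bx,\by)$, and the triangle inequality transfers both boundedness and mean-square continuity from each $X_i$ to $Y$, so Theorem~\ref{thm:nonstationary-bochners} applies.

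For the product, I take the same $X_1, X_2$ independent and form $Z(\bx) = X_1(\bx) X_2(\bx)$. Independence yields $E[Z(\bx)\overline{Z(\by)}] = k_1(\bx,\by) k_2(\bx,\by)$. Boundedness is immediate from $|k_1(\bx,\bx) k_2(\bx,\bx)| \le M_1 M_2$. Mean-square continuity follows from the telescoping bound
\begin{equation*}
E|Z(\bx) - Z(\by)|^2 \le 2\, k_1(\bx,\bx)\, E|X_2(\bx) - X_2(\by)|^2 + 2\, k_2(\by,\by)\, E|X_1(\bx) - X_1(\by)|^2,
\end{equation*}
which vanishes as $\by \to \bx$ by mean-square continuity of each $X_j$. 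Another application of Theorem~\ref{thm:nonstationary-bochners} then gives that $k_1 k_2$ is CvNs.

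For the real part, I work directly from the positive-definiteness condition rather than building a process. Conjugate symmetry of $k$ gives $\Re\{k(\by,\bx)\} = \Re\{\overline{k(\bx,\by)}\} = \Re\{k(\bx,\by)\}$, so $\Re\{k\}$ is symmetric and real-valued. For any real $c_1,\ldots,c_n$, the quadratic form $\sum_{i,j} c_i c_j \Re\{k(\bx_i,\bx_j)\} = \Re\{\sum_{i,j} c_i \bar{c}_j k(\bx_i,\bx_j)\}$ is nonnegative because PSD of $k$ applied with real coefficients already produces a nonnegative real number equal to its real part. The main subtlety I expect is the product case: the product process $X_1 X_2$ is generally not itself Gaussian even if the $X_j$ are, so boundedness and mean-square continuity cannot be inherited from any joint Gaussian structure and must be justified directly through the pointwise bounds on $k_j(\bx,\bx)$ and the telescoping estimate above.
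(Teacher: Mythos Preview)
Your proposal is correct but takes a genuinely different route from the paper. You exploit the \emph{process side} of Theorem~\ref{thm:nonstationary-bochners}: build independent processes $X_1, X_2$ with covariances $k_1, k_2$, form $\sqrt{\lambda_1}\,X_1 + \sqrt{\lambda_2}\,X_2$ and $X_1 X_2$, verify boundedness and mean-square continuity of the resulting covariances, and then invoke the equivalence to recover the spectral representation. The paper instead argues entirely on the \emph{spectral side}: it shows that summation and multiplication of kernels correspond to summation and convolution of the spectral densities $S_1, S_2$, and then checks that these operations preserve the symmetry $S(\bm{w_1},\bm{w_2})=S(\bm{w_2},\bm{w_1})$ required by Theorem~\ref{thm:nonstationary-bochners}. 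For the real part, you give an elementary PSD check with real test vectors, whereas the paper decomposes $S$ into even and odd parts and identifies the even part as a bounded, symmetric, positive spectral density for $\Re\{k\}$.

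Your approach is cleaner in that it bypasses the bookkeeping of showing $\psi_1*\psi_2$ is again a Lebesgue--Stieltjes measure of the required PSD bounded symmetric form; the telescoping estimate for mean-square continuity of $X_1 X_2$ is essentially the only technical step. The price is that you lean on the direction ``covariance of a mean-square-continuous process $\Rightarrow$ spectral representation'' as a black box, and the paper itself notes just before Theorem~\ref{thm:nonstationary-bochners} that this characterization admits special counterexamples. The paper's spectral-domain argument is more self-contained in this respect, since it exhibits the convolved measure explicitly and so establishes the CvNs property constructively. Your real-part argument, on the other hand, is strictly more general than the paper's: it uses only conjugate symmetry and positive definiteness of $k$ and never touches the CvNs structure at all.
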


Finally, the next theorem justifies the use of complex-valued kernels and real-valued kernels together in an NKN.
\begin{thm}\label{thm:nkn-closure}
An NKN with primitive kernels which are either real-valued kernels or CvNs kernels has all the nodes' real parts be valid real-valued kernels.
\end{thm}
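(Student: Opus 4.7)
The plan is to establish the result by structural induction on the depth of the NKN, using the closure properties of CvNs kernels from Lemma~\ref{lem:nonstationary-closure}. The key observation is that the NKN forward pass performs only two operations---nonnegative linear combinations (Linear layers) and products (Product layers)---and both preserve the CvNs property. Hence every intermediate node computes a CvNs kernel, and taking the real part at any node then yields a valid real-valued kernel by the third bullet of Lemma~\ref{lem:nonstationary-closure}.

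For the base case, I would verify that every primitive kernel is a CvNs kernel. By assumption each primitive is either already CvNs or real-valued; for the latter, a real-valued continuous bounded positive semi-definite kernel satisfies the complex conjugate-symmetry condition of Definition~\ref{def:complex-k} trivially, and the complex PSD inequality $\sum_{i,j} c_i \overline{c_j} k(x_i, x_j) \ge 0$ follows by writing $c_j = a_j + i b_j$, expanding, noting that the cross imaginary terms cancel because $k$ is real and symmetric, and invoking the real PSD property on the $a$ and $b$ parts separately. Hence real-valued primitives sit inside the CvNs class under the generalized Bochner representation of Theorem~\ref{thm:nonstationary-bochners}.

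For the inductive step, assume every unit in layers $1,\dots,l-1$ is a CvNs kernel. A Linear layer forms $\mathbf{h}_l = \mathbf{W}_l \mathbf{h}_{l-1}$ with $\mathbf{W}_l$ entrywise nonnegative; this is a nonnegative linear combination of CvNs kernels, which is CvNs by the first bullet of Lemma~\ref{lem:nonstationary-closure} (zero-weighted terms drop out of the sum). A Product layer takes pointwise products of CvNs units, which is CvNs by the second bullet. Therefore every unit in layer $l$ is CvNs, completing the induction; taking the real part at any such node then gives a real-valued kernel, which is exactly the claim.

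The main obstacle I foresee is the clean embedding of real-valued PSD kernels into the CvNs class---in particular, verifying that the complex PSD condition of Definition~\ref{def:complex-k} really is implied by the standard real PSD condition (together with the mild boundedness and continuity needed by Theorem~\ref{thm:nonstationary-bochners}) so that Lemma~\ref{lem:nonstationary-closure} applies uniformly to both kinds of primitives. A secondary subtlety is the weight convention: the closure lemma is stated with nonzero scalars, whereas the NKN Linear layer uses nonnegative weights that may vanish, so I will handle zero weights separately (either by deleting them from the combination or by treating the zero kernel as a degenerate CvNs kernel corresponding to the zero spectral measure in Theorem~\ref{thm:nonstationary-bochners}). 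Aside from these book-keeping points, the argument is a routine induction and should fit in a few lines.
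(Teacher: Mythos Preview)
Your inductive plan is clean in spirit but has a genuine gap at the base case. You want to embed every real-valued primitive into the CvNs class so that Lemma~\ref{lem:nonstationary-closure} applies uniformly, and you invoke Theorem~\ref{thm:nonstationary-bochners} for this. But that theorem characterises only \emph{bounded} continuous kernels (and the paper explicitly notes exceptions even among those). Concretely, the linear kernel $\mathrm{LIN}(\bx,\bx')=\sigma^2\bx^\top\bx'$ is one of the standard NKN primitives, it is unbounded, and therefore cannot be CvNs. Once a single primitive fails to be CvNs your induction cannot start, because mixed products of the form (real)$\times$(CvNs) are no longer covered by Lemma~\ref{lem:nonstationary-closure}. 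Your correct observation that real PSD implies complex PSD in the sense of Definition~\ref{def:complex-k} does not close this gap: CvNs is defined via the spectral representation of Theorem~\ref{thm:nonstationary-bochners}, not via Definition~\ref{def:complex-k}, and those are not the same class.

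The paper avoids the embedding altogether. It writes each node as a positive-weighted polynomial in the primitives and handles each monomial by splitting the real-kernel factors from the CvNs factors via the elementary identity
\[
\Re\Big(\prod_i r_i^{n_i}\cdot\prod_j c_j^{m_j}\Big)=\Big(\prod_i r_i^{n_i}\Big)\cdot\Re\Big(\prod_j c_j^{m_j}\Big).
\]
The first factor is a product of real kernels, hence a real kernel by Lemma~\ref{lem:kernel-property}; the second is the real part of a product of CvNs kernels, hence a real kernel by the closure lemma; their product, and the outer nonnegative sum over monomials, are then real kernels. If you wish to keep an inductive formulation, the correct invariant is not ``every node is CvNs'' but ``every node is a nonnegative combination of terms of the form (real kernel)$\times$(CvNs kernel)'', which both layer types clearly preserve---but that is the paper's monomial argument rephrased.
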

\begin{proof}
Assume primitive kernels have real-valued kernels $r_1, \cdots, r_t$ and CvNs kernels $c_1, \cdots, c_s$. Any node in an NKN can be written as $\mathrm{Poly}_{+}(r_1, \cdots, r_t, c_1, \cdots, c_s)$.\footnote{$\mathrm{Poly}_+$ represents a positive-weighted polynomial of primitive kernels.} It is sufficient to show that for every purely multiplication term in $\mathrm{Poly}_{+}(r_1, \cdots, r_t, c_1, \cdots, c_s)$, its real part is a valid kernel. As the real part equals $\prod_{i=1}^t r_i^{n_i} \times \Re(\prod_{j=1}^s c_j^{m_j})$ and Lemma~\ref{lem:stationary-closure} shows $\Re(\prod_{j=1}^s c_j^{m_j})$ is a valid real-valued kernel,  therefore the whole real part is also a valid kernel.
\end{proof}


\section{Complex-Valued Stationary-Kernel Closure}
\label{app:complex-closure}
\begin{lem}\label{lem:stationary-closure} For complex-valued stationary kernels $k, k_1, k_2$,
\begin{itemize}
\item For $\lambda_1, \lambda_2 \in \mathbb{R}^*$, $\lambda_1 k_1 + \lambda_2 k_2$ is a complex-valued stationary kernel.
\item Product $k_1 k_2$ is a complex-valued stationary kernel.
\item Real part $\Re\{k\}$ is a real-valued kernel.
\end{itemize}
\end{lem}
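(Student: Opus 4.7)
The strategy is to verify each of the three claims directly from the positive-definiteness characterization in Definition~\ref{def:complex-k}, noting that stationarity is preserved trivially throughout since all three operations act pointwise on functions of the displacement $\bm{\tau} = \bm{x}-\bm{y}$. An alternative would be to argue at the level of spectral measures via Bochner's Theorem~\ref{thm:bochners}, identifying the relevant measure operations as (positive) linear combination, convolution, and symmetrization, but the direct route through the definition is considerably cleaner and I would prefer it.

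For the linear-combination statement I would fix arbitrary points $\bm{x}_1,\ldots,\bm{x}_n$ and coefficients $c_1,\ldots,c_n \in \mathbb{C}$ and expand $\sum_{i,j} c_i \overline{c_j} (\lambda_1 k_1 + \lambda_2 k_2)(\bm{x}_i,\bm{x}_j)$ by linearity. Nonnegativity then follows from $\lambda_1,\lambda_2 \geq 0$ together with the individual positive definiteness of $k_1$ and $k_2$, and conjugate symmetry and stationarity are inherited pointwise. This step is essentially an unpacking of definitions.

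The product case is the only step with any genuine content, and I would handle it via the Schur product theorem. Writing $K_1$ and $K_2$ for the Hermitian Gram matrices of $k_1$ and $k_2$ on any finite set $\bm{x}_1,\ldots,\bm{x}_n$, the Gram matrix of $k_1 k_2$ is the entrywise (Hadamard) product $K_1 \odot K_2$, which is again Hermitian and positive semidefinite. Conjugate symmetry of $k_1 k_2$ follows from $\overline{k_1(\bm{y},\bm{x}) k_2(\bm{y},\bm{x})} = k_1(\bm{x},\bm{y}) k_2(\bm{x},\bm{y})$, and stationarity is immediate since $k_1(\bm{\tau}) k_2(\bm{\tau})$ still depends only on $\bm{\tau}$. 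This is the main obstacle in the sense that it is the only place we lean on a nontrivial matrix-theoretic fact.

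Finally, for the real-part claim I would observe that conjugate symmetry of $k$ forces $\Re\{k\}(\bm{x},\bm{y}) = \Re\{k\}(\bm{y},\bm{x})$, so $\Re\{k\}$ is a symmetric real-valued function on $\mathcal{X} \times \mathcal{X}$. For any real $c_1,\ldots,c_n$, using $\overline{c_j} = c_j$ one has
\begin{equation}
    \sum_{i,j} c_i c_j \Re\{k\}(\bm{x}_i,\bm{x}_j) \;=\; \Re\Bigl\{\sum_{i,j} c_i \overline{c_j}\, k(\bm{x}_i,\bm{x}_j)\Bigr\} \;\geq\; 0,
\end{equation}
since the bracketed sum is a nonnegative real by Definition~\ref{def:complex-k}. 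Hence $\Re\{k\}$ satisfies the real-valued analogue of positive definiteness and is itself a valid (stationary) kernel, completing the three claims.
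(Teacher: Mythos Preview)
Your proposal is correct and follows essentially the same Gram-matrix route as the paper: both treat the sum trivially, handle the product via the Schur/Hadamard product of the Hermitian PSD Gram matrices (the paper spells out the eigendecomposition proof of Schur rather than citing it), and argue the real-part claim at the matrix level.

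The one place worth tightening is your third item. You verify $\sum_{i,j} c_i c_j\,\Re\{k\}(\bm{x}_i,\bm{x}_j)\ge 0$ only for \emph{real} $c_i$, whereas Definition~\ref{def:complex-k} is stated with complex coefficients. The paper avoids this by writing $K=\sum_k \lambda_k(\bm{a}_k+i\bm{b}_k)(\bm{a}_k-i\bm{b}_k)^\top$ and reading off $\Re(K)=\sum_k\lambda_k(\bm{a}_k\bm{a}_k^\top+\bm{b}_k\bm{b}_k^\top)$, which is directly real symmetric PSD and hence Hermitian PSD. Your argument is fine once you add the standard remark that a real symmetric matrix which is PSD over $\mathbb{R}$ is automatically PSD over $\mathbb{C}$ (decompose $c_j=a_j+ib_j$ and use symmetry to kill the imaginary cross-term); without that sentence the step from ``real-valued analogue of positive definiteness'' to ``kernel in the sense of Definition~\ref{def:complex-k}'' is not quite closed.
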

\begin{proof}
A complex-valued stationary kernel $k$ is equivalent to being the covariance function of a weakly stationary mean square continuous complex-valued random process on $\mathbb{R}^P$, which also means that for any $N\in \mathbb{Z}^+, \{\bm{x_i}\}_{i=1}^N$, we have $\bm{K} = \{k(\bm{x_i} - \bm{x_j})\}_{0\leq i,j\leq N}$ is a complex-valued positive semi-definite matrix.

Thus for complex-valued stationary kernels $k_1$ and $k_2$, and any $N\in \mathbb{Z}^+, \{\bm{x_i}\}_{i=1}^N$, we have complex-valued PSD matrices $\bm{K}_1$ and $\bm{K}_2$. Then product $k=k_1 k_2$ will have matrix $\bm{K} = \bm{K}_1 \circ \bm{K}_2$. For PSD matrices, assume eigenvalue decompositions
\begin{equation}
\begin{aligned}
    \bm{K}_1 = \sum_{i=1}^N \lambda_i \bm{u}_i \bm{u}_i^* \;\;\;
    \bm{K}_2 = \sum_{i=1}^N \gamma_i \bm{v}_i \bm{v}_i^* 
\end{aligned}
\end{equation}
where eigenvalues $\lambda_i$ and $\gamma_i$ are non-negative real numbers based on positive semi-definiteness. Then
\begin{equation}
\begin{aligned}
    \bm{K} &= \bm{K}_1 \circ \bm{K}_2 = \sum_{i=1}^N \lambda_i \bm{u}_i \bm{u}_i^* \circ \sum_{i=1}^N \gamma_i \bm{v}_i \bm{v}_i^* \\
    &= \sum_{i,j=1}^N \lambda_i \gamma_j (\bm{u}_i \circ \bm{v}_j)(\bm{u}_i \circ \bm{v}_j)^*
\end{aligned}
\end{equation}
$\bm{K}$ is also a complex-valued PSD matrix, thus $\bm{K}$ is also a complex-valued stationary kernel.

For sum $k=k_1 + k_2$, $\bm{K} = \bm{K}_1 + \bm{K}_1$ is also a complex-valued PSD.

Hence we proved the closure of complex-valued kernels under summation and multiplication. For a complex-valued kernel $k$ and arbitrarily $N$ complex-valued numbers, as shown above 
\begin{equation}
\begin{aligned}
    \bm{K} &= \sum_{k=1}^N \lambda_i \bm{u}_k \bm{u}_k^* = \sum_{k=1}^N \lambda_i (\bm{a}_k+ i \bm{b}_k) (\bm{a}_k - i \bm{b}_k)^\top \\
    &= \sum_{k=1}^N \lambda_i (\bm{a}_k \bm{a}_k^\top + \bm{b}_k \bm{b}_k^\top +i (\bm{b}_k \bm{a}_k^\top - \bm{a}_k \bm{b}_k^\top)) \\
    &= \sum_{k=1}^N \lambda_i (\bm{a}_k \bm{a}_k^\top + \bm{b}_k\bm{b}_k^\top) + i \sum_{k=1}^N \lambda_i (\bm{b}_k \bm{a}_k^\top - \bm{a}_k \bm{b}_k^\top).
\end{aligned}
\end{equation}
Here $\bm{a}_k, \bm{b}_k$ are all real vectors. Therefore, $\Re(\bm{K})$ is a real PSD matrix, thus $\Re\{k\}$ is a real-valued stationary kernel.

Note: Kernel's summation, multiplication and real part correspond to summation, convolution and even part in the spectral domain. Because the results of applying these spectral-domain operations to a positive function are still positive, and the only requirement for spectral representation is being positive as in Theorem~\ref{thm:bochners}, the closure is natural.
\end{proof}

\section{Proof for Complex-Valued Kernel Closure}
\label{app:nons-complex-closure}
This section provides proof for Lemma~\ref{lem:nonstationary-closure}.

\begin{proof}
First, for $k(\bm{x}, \bm{y})$ to be a valid kernel ($k(\bm{x}, \bm{y})=k(\bm{y}, \bm{x})^*$), $S(\bm{w_1}, \bm{w_2})$ has to be symmetric, which means $S(\bm{w_1}, \bm{w_2}) = S(\bm{w_2}, \bm{w_1})$.\\
\begin{equation}
\begin{aligned}
    k(\bm{y}, \bm{x})^* &= [\int e^{2\pi i (\bm{w_1}^\top\bm{y}-\bm{w_2}^\top\bm{x})} S(\bm{w_1}, \bm{w_2}) \mathrm{d}\bm{w_1}\mathrm{d}\bm{w_2}]* \\
    &= \int e^{2\pi i (\bm{w_2}^\top\bm{x}-\bm{w_1}^\top\bm{y})} S(\bm{w_1}, \bm{w_2}) \mathrm{d}\bm{w_1}\mathrm{d}\bm{w_2} \\
    &= \int e^{2\pi i (\bm{w_1}^\top\bm{x}-\bm{w_2}^\top\bm{y})} S(\bm{w_2}, \bm{w_1}) \mathrm{d}\bm{w_1}\mathrm{d}\bm{w_2}
\end{aligned}
\end{equation}
Compared to
\begin{equation}
    k(\bm{x}, \bm{y}) = \int e^{2\pi i (\bm{w_1}^\top\bm{x}-\bm{w_2}^\top\bm{y})} S(\bm{w_1}, \bm{w_2}) \mathrm{d}\bm{w_1}\mathrm{d}\bm{w_2}
\end{equation}
We have the requirement that $S(\bm{w_1}, \bm{w_2}) = S(\bm{w_2}, \bm{w_1})$.

Next we will show the correspondence between kernel operations and spectral operations. Denote $\bm{z}=\begin{bmatrix}x\\-y\end{bmatrix}$.
\begin{equation}
\begin{aligned}
    &k_1(\bm{x}, \bm{y}) + k_2(\bm{x}, \bm{y}) \\
    &= \int e^{2\pi i (\bm{w_1}^\top\bm{x}-\bm{w_2}^\top\bm{y})}(S_1(\bm{w_1}, \bm{w_2}) + S_2(\bm{w_1}, \bm{w_2}) )\mathrm{d}\bm{w_1}\mathrm{d}\bm{w_2} \\
    &k_1(\bm{x}, \bm{y})  k_2(\bm{x}, \bm{y}) \\
    &= \int_{\bm{w^1}} \int_{\bm{w^2}} e^{2\pi i \bm{z}^\top\bm{w}^1}S_1(\bm{w}^1) e^{2\pi i \bm{z}^\top\bm{w}^2}S_1(\bm{w}^2) \mathrm{d}\bm{w}^1\mathrm{d}\bm{w}^2 \\
    &=\int_{\bm{w}^1}\int_{\bm{w}_2} e^{2\pi i \bm{z}^\top(\bm{w}^1+\bm{w}^2)}S_1(\bm{w}^1)S_2(\bm{w}^2)\mathrm{d}\bm{w}^1\mathrm{d}\bm{w}^2 \\
    &= \int_{\bm{w}^1}\int_{\bm{w}} e^{2\pi i \bm{z}^\top\bm{w}}S_1(\bm{w}^1)S_2(\bm{w}-\bm{w}^1)\mathrm{d}\bm{w}^1\mathrm{d}\bm{w} \\
    &= \int_w e^{2\pi i \bm{z}^\top\bm{w}} \int_{\bm{w}^1} S_1(\bm{w}^1)S_2(\bm{w}-\bm{w}^1)\mathrm{d}\bm{w}^1\mathrm{d}\bm{w} \\
    &= \int_w e^{2\pi i \bm{z}^\top\bm{w}}  S_1 * S_2 \mathrm{d}\bm{w} 
\end{aligned}
\end{equation}
Where $\bm{w^1}=\begin{bmatrix}w_1^1\\ w_2^1\end{bmatrix}$, and $\bm{w^2}=\begin{bmatrix}w_1^2\\ w_2^2\end{bmatrix}$. Therefore, summation in the kernel domain corresponds to summation in the spectral domain, and multiplication in the kernel domain corresponds to convolution in the spectral domain.

For symmetric $S_1$ and $S_2$, their convolution
\begin{equation}
\begin{aligned}
    &(S_1 * S_2)(\bm{w_2}, \bm{w_1}) \\
    &= \int_{\bm{\tau}_1} \int_{\bm{\tau}_2} S_1(\bm{\tau}_2, \bm{\tau}_1) S_2(\bm{w}_2-\bm{\tau}_2, \bm{w}_1-\bm{\tau}_1) \mathrm{d}\bm{\tau}_1 \mathrm{d}\bm{\tau}_2 \\
    &= \int_{\bm{\tau}_1} \int_{\bm{\tau}_2} S_1(\bm{\tau}_1, \bm{\tau}_2) S_2(\bm{w}_1-\bm{\tau}_1, \bm{w}_2-\bm{\tau}_2) \mathrm{d}\bm{\tau}_1 \mathrm{d}\bm{\tau}_2  \\
    &= (S_1 * S_2)(\bm{w_1}, \bm{w_2})
\end{aligned}
\end{equation}
is still symmetric, and their sum is also symmetric. Therefore, the sum and multiplication of complex-valued non-stationary kernels is still a complex-valued non-stationary kernel. 

Finally, 
decompose $S(\bm{w}_1, \bm{w}_2)$ into even and odd parts $S(\bm{w}_1, \bm{w}_2)=E(\bm{w}_1, \bm{w}_2)+O(\bm{w}_1, \bm{w}_2)$, that $E(\bm{w}_1, \bm{w}_2) = E(-\bm{w}_1, -\bm{w}_2)$, $O(\bm{w}_1, \bm{w}_2)=-O(\bm{w}_2, \bm{w}_1)$. Inherently, $E$ and $O$ are symmetric. \\
Then the part corresponded with $E$ 
\begin{equation}
\begin{aligned}
    &\int e^{2\pi i (\bm{w}_1^\top\bm{x}-\bm{w}_2^\top \bm{y})} E(\bm{w}_1, \bm{w}_2) \mathrm{d}\bm{w}_1\mathrm{d}\bm{w}_2 \\
    &= \frac{1}{2}\int e^{2\pi i (\bm{w}_1^\top\bm{x}-\bm{w}_2^\top \bm{y})} S(\bm{w}_1, \bm{w}_2) \mathrm{d}\bm{w}_1\mathrm{d}\bm{w}_2\\
    &+\frac{1}{2}\int e^{2\pi i (\bm{w}_1^\top\bm{x}-\bm{w}_2^\top \bm{y})} S(-\bm{w}_1, -\bm{w}_2) \mathrm{d}\bm{w}_1\mathrm{d}\bm{w}_2 \\
    &=\frac{1}{2}\int [e^{2\pi i (\bm{w}_1^\top\bm{x}-\bm{w}_2^\top \bm{y})} + e^{2\pi i (-\bm{w}_1^\top\bm{x}+\bm{w}_2^\top \bm{y})}]\\
    &\quad \quad \quad  S(\bm{w}_1, \bm{w}_2) \mathrm{d}\bm{w}_1\mathrm{d}\bm{w}_2 \\
    &= \int \cos(2\pi (\bm{w}_1^\top\bm{x}-\bm{w}_2^\top \bm{y})) S(\bm{w}_1, \bm{w}_2) \mathrm{d}\bm{w}_1\mathrm{d}\bm{w}_2
\end{aligned}
\end{equation}
is real. The part corresponded with $O$
\begin{equation}
\begin{aligned}
    &\int e^{2\pi i (\bm{w}_1^\top\bm{x}-\bm{w}_2^\top \bm{y})} O(\bm{w}_1, \bm{w}_2) \mathrm{d}\bm{w}_1\mathrm{d}\bm{w}_2 \\
    &= \frac{1}{2}\int e^{2\pi i (\bm{w}_1^\top\bm{x}-\bm{w}_2^\top \bm{y})} S(\bm{w}_1, \bm{w}_2) \mathrm{d}\bm{w}_1\mathrm{d}\bm{w}_2\\
    &-\frac{1}{2}\int e^{2\pi i (\bm{w}_1^\top\bm{x}-\bm{w}_2^\top \bm{y})} S(-\bm{w}_1, -\bm{w}_2) \mathrm{d}\bm{w}_1\mathrm{d}\bm{w}_2 \\
    &=\frac{1}{2}\int [e^{2\pi i (\bm{w}_1^\top\bm{x}-\bm{w}_2^\top \bm{y})} - e^{2\pi i (-\bm{w}_1^\top\bm{x}+\bm{w}_2^\top \bm{y})}] \\ &\quad \quad \quad  S(\bm{w}_1, \bm{w}_2) \mathrm{d}\bm{w}_1\mathrm{d}\bm{w}_2 \\
    &= i \int \sin(2\pi (\bm{w}_1^\top\bm{x}-\bm{w}_2^\top \bm{y})) S(\bm{w}_1, \bm{w}_2) \mathrm{d}\bm{w}_1\mathrm{d}\bm{w}_2
\end{aligned}
\end{equation}
is purely imaginary. Therefore $E(\bm{w}_1, \bm{w}_2)$ corresponds to $\Re(k)$, the real part of kernel $k$. As $E$ is bounded, symmetric and positive, $\Re(k)$ is a valid real-valued kernel.
\end{proof}

\section{Proof for NKN Polynomial Universalities}
\label{app:nkn-poly-univ}
This section provides proof for Theorem~\ref{thm:poly-univ}. 

\begin{proof} 
\textit{For limited width}
For limited width, we will demonstrate a special stucture of NKN which can represent any positive-weighted polynomial of primitive kernels. As a polynomial is summation of several multiplication terms, if NKN can represent any multiplication term and add them iteratively to the outputs, it can represent any positive-weighted polynomial of primitive kernels.

Figure~\ref{fig:poly-flow} demonstrates an example of how to generate multiplication term $0.3k_1k_2^2$, in which $k_1, k_2, k_3$ represent primitive kernels and $1$ represents bias. In every Linear-Product module, we add a primitive kernel to the fourth neuron on the right and multiply it with the third neuron on the right. Iteratively, we can generate the multiplication term. Finally, we can add this multiplication term to the rightmost output. 

Note that, after adding the multiplication term to the output, we keep every kernel except the output kernel unchanged. Therefore, we can continue adding new multiplication terms to the output and generate the whole polynomial.

Although this structure demonstrates the example with 3 primitive kernel, it can easily extend to deal with any number of primitive kernels as can be seen. What's more, as a special case, this structure only uses specific edges of a NKN. In practice, NKN can be more flexible and efficient in representing kernels. 

\begin{figure*}
\centering
\includegraphics[width=0.85\linewidth]{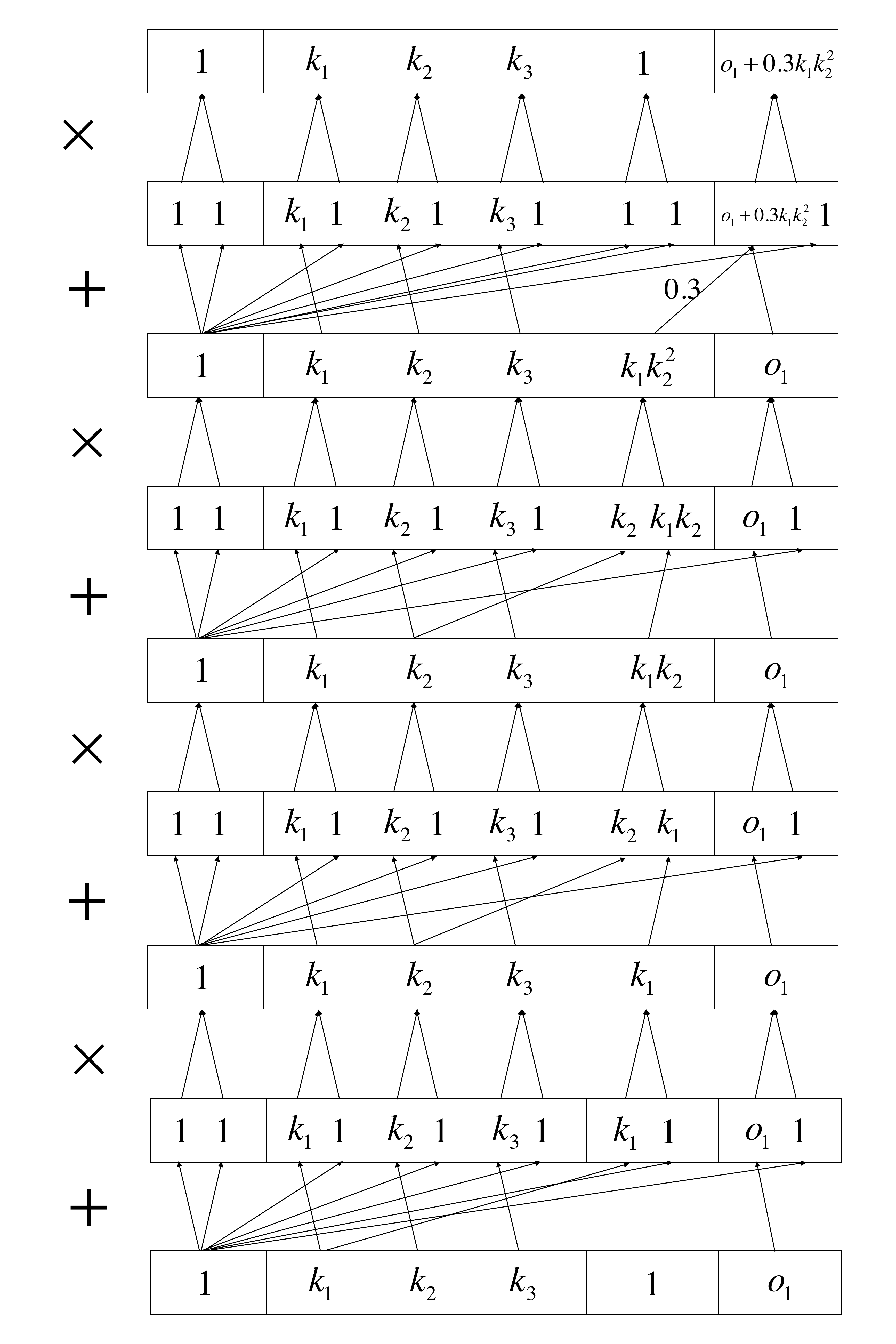}
\caption{A building block of NKN to represent any positive-weighed polynomial of primitive kernels. This structure shows how NKN can add a multiplication term to the output kernel as well as keeping primitive kernels unchanged.}
\label{fig:poly-flow}
\end{figure*}

\textit{For limited depth}, the example can be proved by recursion.

As a Lemma, we firstly prove, a NKN with one primitive kernel $k_0$ and $p$ Linear-Product modules, can use width $2(p+1)$ to represent $k_0^q, \forall q \le 2^p$.

It is obvious that NKN can use width 2 to represent $k_0^{2^t}, \forall t \le p$. Then we can write $q = \sum_{t=0}^{p} q_t 2^t$ with $q_t \in \{0, 1\}$. Therefore, in order to represent $k_0^q$, we can represent these terms separately, using width $2(p+1)$. Lemma proved.

Now we start the recusion proof,

If $B=1$, according to the lemma above, we can use width $2(p+1) \le 2^{Bp+1}$. The theorem also holds easily when $p=1$.

Now assume the statement holds for in cases less than $B$. We can write the polynomial according to the degree of $k_0$, which ranges from $0$ to $2^p$. According to the lemma, these degrees of $k_0$ can be represented with width $2(p+1)$. And according to the recusion assumption, the width we need is at most
$$
    2^{p-1} (2p + 2^{(B-1)(p-1)+1}) + (2^{p-1}+1) (2(p+1) + 2^{(B-1)p+1})
$$
Where the first term corresponds to degree of $k_0$ larger than $2^{p-1}$ and the second term corresponds to degree of $k_0$ no larger than $2^{p-1}$. The formula above is less than $2^{Bp+1}$. According to the recursion, the theorem is proved.

\end{proof}

\section{Proof for Example~\ref{examp: limitation-real}}
\label{app:limitation-real-kernel}
This section provides proof for Example~\ref{examp: limitation-real}.

\begin{proof} 
As all dimensions are independent, we only need to prove for the one-dimensional case. 

We consider these stationary kernels in spectral domain. Then cosine kernel $\cos(\mu \tau)$ corresponds to $\delta(w-\mu)+\delta(w+\mu)$ as spectral density. Note that summation and multiplication in kernel domain correspond to summation and convolution in spectral domain, respectively.

The example can be proved by recursion.

If $n=0$, the conclusion is obvious.

Now assume the conclusion holds when less than $n$.

Consider case $n$,

Then spectral density of target kernel $k^*$ will be $$f^* = \sum_{t=1}^{n+1} \binom{n}{2}^{2t}(\delta(w-4^t)+\delta(w+4^t))$$

Assume there exists a PWP $\bar{k}$ of cosine kernels $\{\cos(\mu_t \tau)\}_{t=1}^n$ that have 
$$
      \underset{\tau}{\max} |\bar{k}(\tau) - k^*(\tau)| < \epsilon
$$
Then we can find $\xi > 0$ such that
$$
    \underset{w}{\max} |\bar{f}(w) - f^*(w)| < \xi
$$
Because $\bar{k}$ is a PWP of cosine kernels. Its spectral density will be summations of convolutions of even delta funtions $\delta(w-\mu)+\delta(w+\mu)$. As convolution $\delta(w-w_1) \ast \delta(w-w_2) = \delta(w-(w_1+w_2))$, $\bar{f}$ can be represented as summations of delta functions, where location in every delta function has form $\sum_{t=1}^n p_t \mu_t$ with $p_t \in \mathbb{Z}$ and we use $\mathrm{coeff}(\sum_{t=1}^n p_t \mu_t)$ to represent the coefficient of this convolution term.

Denote the biggest location of $\bar{f}$ as $\sum_{t=1}^n p_{t} \mu_t$, then,
\begin{equation}
    \sum_{t=1}^n p_{t} \mu_t= 4^{n+1}
\end{equation}

If $\exists t_0$ such that $p_{t_0} < 0$, according to symmetry of $\delta(w-\mu)+\delta(w+\mu)$, there must be another term $$\sum_{t=1}^{t_0-1} p_{t}\mu_{t} - p_{t}\mu_{t} + \sum_{t=t_0+1}^{n} p_{t}\mu_{t}$$ which has the same coefficient in $\bar{f}$ as $ \sum_{t=1}^n p_{t} \mu_t$. However 
$$
\sum_{t=1}^{t_0-1} p_{jt}\mu_{jt} - p_{jt}\mu_{jt} + \sum_{t=t_0+1}^{n} p_{jt}\mu_{jt} > \sum_{t=1}^n p_{jt} = 4^{n+1}
$$ 
Contradiction!

If $\sum_{t=1}^n p_t \ge 3$, we can find the smallest $\mu$ and reverse its sign. For example, if the location is $2\mu_1 + \mu_2$, then according to symmetry, there must be another location $-\mu_1 + \mu_1 + \mu_2$ in f with the same coefficient. However,
$$
    -\mu_1 + \mu_1 + \mu_2 > \frac{1}{3}(\mu_1 + \mu_1 + \mu_2) = \frac{1}{3}4^{n+1} > 4^{n}
$$
Contradiction.

Therefore, $\sum_{t=1}^n p_t \le 2$. If one $\mu$ has degree 2, $2\mu=4^{n+1}$. According to symmetry, there will be a term with location $-\mu+\mu=0$, contradiction. If $\mu_{t_1} + \mu_{t_2} = 4^{n+1}$. Then there must be term $\mu_{t-1} - \mu_{t-2}$. However, the coefficient
$$
    \binom{n}{2}^{2n} >= \mathrm{coeff}(\mu_{t-1} - \mu_{t-2}) = \mathrm{coeff}(\mu_{t_1} + \mu_{t_2})
$$
While there have at most $2\binom{n}{2}$ 2-item terms, thus $$\mathrm{coeff}(\mu_{t_1} + \mu_{t_2}) >= \frac{\binom{n}{2}^{2(n+1)}}{2\binom{n}{2}} > \binom{n}{2}^{2n}$$
Contradiction!

Therefore, there exists $t$ such that $p_t=1, \mu_t=4^{n+1}$. Because $4^{n+1}$ is the biggest location in $f^*$, $\mu_t$ cannot appear in any other term of $\bar{f}$, which induces to case of $n-1$. According to induction, the statement is proved.

\end{proof}

\section{Proof of NKN's Stationary Universality}
\label{app:nkn-stationary-univ}
This section provides proof for Theorem~\ref{thm:stationary-univ}.

\begin{proof}
Start at 1-d primitive kernels.
For a given complex-valued stationary kernel $k$ with Fourier transform $g$, according to Lemma~\ref{lem:univesal-mog}, we can find $v \in \mathbb{R}, \mu \in \mathbb{R}^+$, such that $\exists v_q=n_q^1 v \in \{v, 2v, \cdots\}, \mu_q=n_q^2 \mu \in \{\pm \mu, \pm 2\mu, \cdots\}$,
\begin{equation}
\begin{aligned}
    \bar{g}(w) = \sum_{q=1}^Q \lambda_q \exp(-\frac{(w-\mu_q)^2}{2 v_q}) \\
    \| \bar{g}(w) - g(w) \|_1 \le \epsilon
\end{aligned}
\end{equation}
Thus the corresponding kernel 
\begin{equation}
\begin{aligned}
    \bar{k}(\tau) &= \sum_{q=1}^Q \lambda_q \exp(\frac{1}{2}\tau^2 v_q) e^{i\mu_q x} \\
    &= \sum_{q=1}^Q \lambda_q \exp(-\frac{1}{2}\tau^2 v)^{n_q^1} (e^{i\mu x})^{n_q^2} \\
    &=  \sum_{q=1, n_q^2 > 0}^Q \lambda_q \exp(-\frac{1}{2}\tau^2 v)^{n_q^1} (e^{i\mu x})^{|n_q^2|} \\
    &+ \sum_{q=1, n_q^2 < 0}^Q \lambda_q \exp(-\frac{1}{2}\tau^2 v)^{n_q^1} (e^{-i\mu x})^{|n_q^2|} \\
    &= \mathrm{Poly}_{+}(\exp(-\frac{1}{2}\tau^2 v), e^{i\mu x}, e^{-i\mu x})
\end{aligned} 
\end{equation}
 Here, $\mathrm{Poly}_{+}$ denotes a positive-weighted polynomial of primitive kernels. Therefore, any complex-valued stationary kernel can be well approximated by a positive-weighted polynomial of a $\mathrm{RBF}$ and two $e^{i\mu x}$. As Theorem~\ref{thm:poly-univ} shows, an NKN with limited width can approximate any positive-weighted polynomial of primitive kernels. By Wiener's Tauberian theorem \citep{wiener1932tauberian}, which states L1 convergence in Fourier domain implies pointwise convergence in original domain, we prove, taking  these three kernels as primitive kernels, an NKN with width no more than $2\times 3 + 6 = 12$ is dense towards any 1-dimensional complex-valued stationary kernel wrt pointwise convergence. To be noted, as $\cos$ is an even function, we only need two primitive kernels ($\mathrm{RBF}$, $e^{i\mu x}$) (width $10$) to approximate any real-valued stationary kernel.
 
For $d$ dimensional inputs, \citet{wilson2013gaussian} provides an SM kernel which is a universal approximator for all real-valued stationary kernels. 
\begin{equation}
    k(\bm{\tau}) = \sum_{q=1}^Q \lambda_q \exp(-\frac{1}{2} \bm{\tau}^\top \mathrm{diag}(\bm{v}_q) \bm{\tau}) \cos(\bm{\mu}_q^\top \bm{\tau})
\end{equation}
For complex-valued stationary kernels corresponding to 
\begin{equation}
\begin{aligned}
    k(\bm{\tau}) &= \sum_{q=1}^Q \lambda_q \exp(-\frac{1}{2} \bm{\tau}^\top \mathrm{diag}(\bm{v}_q) \bm{\tau}) e^{i\bm{\mu}_q^\top \bm{\tau}} \\
    &= \sum_{q=1}^Q \lambda_q \prod_{j=1}^d \exp(-\frac{1}{2} \tau_j^2 v_{qj}) e^{i\mu_{qj} \tau_j},
\end{aligned}
\end{equation}
the formula above is a positive-weighted polynomial of 1-dimensional case. Similarly, we can approximate them well using an NKN with $d$ RBF and $2d$ $e^{i\bm{\mu}^\top \bm{x}}$ as primitive kernels and with width no more than $6d+6$. For a real-valued stationary kernel, we only need an NKN with $d$ RBF and $d$ $e^{i\bm{\mu}^\top \bm{x}}$ as primitive kernels and with width no more than $4d+6$.
\end{proof}

\section{NKN on Approximating Bounded Kernels}\label{app:nkn-closure}
\begin{thm}\label{thm:nkn-nonstationary-univ} 
By using $d$ RBF kernels, $2d$ exponential kernels $e^{i(\bm{x}-\bm{y})^\top \bm{\mu}}$ and $d$ functions $e^{i(\bm{x}+\bm{y})^\top \bm{\nu}}$ that only support multiplication and always apply the same operation on $e^{-i(\bm{x}+\bm{y})^\top \bm{\nu}}$, NKN with width no more than $8d+6$ can approximate all $d$-dimensional generalized spectral kernels in \citet{kom2015generalized}.
\end{thm}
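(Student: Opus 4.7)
The plan is to mirror the argument behind Theorem~\ref{thm:stationary-univ}, but lift it from Bochner's one-dimensional spectral domain to the two-dimensional spectral domain of Generalized Bochner (Theorem~\ref{thm:nonstationary-bochners}). Let $k^*$ be the target GSK with spectral density $S(\bm{w}_1,\bm{w}_2)$, which is positive, bounded, and symmetric under $(\bm{w}_1,\bm{w}_2)\mapsto(\bm{w}_2,\bm{w}_1)$. First I would invoke the 2D analogue of the Gaussian-mixture universal approximator used in Lemma~\ref{lem:univesal-mog}: approximate $S$ in $L^1(\mathbb{R}^{2d})$ by a positive combination of axis-aligned Gaussian bumps centred on a discrete grid, inserted in symmetric pairs so that a bump at $(\bm{w}_1^q,\bm{w}_2^q)$ is always accompanied by an equal-weight bump at $(\bm{w}_2^q,\bm{w}_1^q)$. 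Wiener's Tauberian theorem then upgrades $L^1$ convergence in spectrum to pointwise convergence of the induced kernels.

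Next I would translate each bump and its symmetric partner back to the physical domain via the Generalized Bochner integral. Reparameterising by the stationary/non-stationary frequencies $\bm{\mu}^q=(\bm{w}_1^q+\bm{w}_2^q)/2$ and $\bm{\nu}^q=(\bm{w}_1^q-\bm{w}_2^q)/2$, each paired bump yields a kernel factor of the form
\[
[\,\text{RBF envelope in }(\bm{x}-\bm{y})\,]\cdot e^{i\bm{\mu}^q\cdot(\bm{x}-\bm{y})}\cdot\bigl(e^{i\bm{\nu}^q\cdot(\bm{x}+\bm{y})}+e^{-i\bm{\nu}^q\cdot(\bm{x}+\bm{y})}\bigr),
\]
where the final conjugate-symmetric pairing of non-stationary exponentials is exactly the structure encoded by the architectural restriction that every multiplicative use of $e^{i(\bm{x}+\bm{y})^\top\bm{\nu}}$ is shadowed by the same operation on $e^{-i(\bm{x}+\bm{y})^\top\bm{\nu}}$. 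Lemma~\ref{lem:nonstationary-closure} and Theorem~\ref{thm:nkn-closure} then guarantee that the resulting expression is a valid real-valued kernel.

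Now I would quantise the grid: pick the $d$ RBF bandwidths $\bm{\gamma}_j$, the $2d$ stationary frequencies $\bm{\mu}_j$, and the $d$ non-stationary frequencies $\bm{\nu}_j$ so that every $\bm{\mu}^q$, $\bm{\nu}^q$, and Gaussian bandwidth used above is a non-negative integer combination of these atoms, exactly as in the proof of Theorem~\ref{thm:stationary-univ}. Each approximating mixture component then collapses to a single monomial in the $B=4d$ primitive kernels, and the entire approximation becomes a positive-weighted polynomial in them, with the non-stationary atoms confined to the symmetric multiplicative pairings just described. Applying Theorem~\ref{thm:poly-univ} to this PWP yields an NKN of width at most $2B+6=8d+6$, which is the claimed bound.

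The hard part is Step~1/Step~2: we need the 2D Gaussian-mixture approximation to respect both the diagonal symmetry required by Generalized Bochner and the architectural pairing constraint simultaneously, so that the resulting PWP can be realised by an NKN in which the $e^{i(\bm{x}+\bm{y})^\top\bm{\nu}}$ primitives are used only multiplicatively and always alongside their conjugates. Concretely, this means choosing the Gaussian-mixture grid symmetrically across $\bm{w}_1=\bm{w}_2$ with equal weights for each reflected pair, and then checking that the building block in Figure~\ref{fig:poly-flow} (used to prove Theorem~\ref{thm:poly-univ}) can be executed with a paired non-stationary atom inserted as an indivisible unit without expanding the width bound. Once this is verified, the remainder of the argument reduces to bookkeeping already done in the stationary proof.
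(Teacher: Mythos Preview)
Your high-level architecture (approximate the spectral density symmetrically, pull back through Generalized Bochner, quantise onto a grid, write as a PWP in the primitives, invoke Theorem~\ref{thm:poly-univ}) is exactly the paper's strategy, and your handling of the $(\bm{\mu},\bm{\nu})$ reparametrisation and the paired $e^{\pm i(\bm{x}+\bm{y})^\top\bm{\nu}}$ atoms is correct. However, Step~2 has a genuine gap. An axis-aligned Gaussian bump $G(\bm{w}_1-\bm{a})G(\bm{w}_2-\bm{b})$ in the $(\bm{w}_1,\bm{w}_2)$ plane does \emph{not} inverse-transform to an RBF envelope in $\bm{x}-\bm{y}$. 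Computing the integral gives
\[
\widehat G(\bm{x})\,\widehat G(\bm{y})\,e^{i(\bm{a}^\top\bm{x}-\bm{b}^\top\bm{y})}\;\propto\;e^{-\frac{v}{2}(\|\bm{x}\|^2+\|\bm{y}\|^2)}\,e^{i\bm{\mu}^\top(\bm{x}-\bm{y})}e^{i\bm{\nu}^\top(\bm{x}+\bm{y})},
\]
so the envelope is $e^{-\frac{v}{4}\|\bm{x}-\bm{y}\|^2}\cdot e^{-\frac{v}{4}\|\bm{x}+\bm{y}\|^2}$: besides the RBF factor you want, there is an unavoidable Gaussian decay in $\bm{x}+\bm{y}$. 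That extra factor is not a kernel on its own and is not among the theorem's primitives, so the resulting expression is not a PWP in the stated primitive set, and the width bookkeeping $2\cdot 4d+6=8d+6$ no longer goes through. Making the bumps degenerate in the $\bm{w}_1+\bm{w}_2$ direction would kill that factor but would also destroy density in $L^1(\mathbb{R}^{2d})$.

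The paper avoids this by \emph{not} using a Gaussian-mixture analogue of Lemma~\ref{lem:univesal-mog}. Following \citet{kom2015generalized}, it fixes a single continuous, integrable, everywhere-positive positive-definite function $k^*(\bm{x},\bm{y})$, lets $K^*=\mathcal{F}(k^*)$, and uses Wiener's tauberian theorem to approximate the target spectral density by \emph{translates} of $K^*$. Each translate inverse-transforms to $k^*(\bm{x},\bm{y})\,e^{i(\bm{x}^\top\bm{w}_k^1-\bm{y}^\top\bm{w}_k^2)}$, so $k^*$ appears verbatim as a primitive factor and no spurious $(\bm{x}+\bm{y})$-envelope is introduced; after your symmetrisation and grid quantisation this becomes a PWP in $\{k^*,\,e^{\pm i(\bm{x}-\bm{y})^\top\bm{\mu}_0},\,e^{\pm i(\bm{x}+\bm{y})^\top\bm{\nu}_0}\}$, and Theorem~\ref{thm:poly-univ} yields the $8d+6$ bound. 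Replacing your Step~1 bumps with translations of a fixed $K^*$ and identifying $k^*$ with the ``RBF'' primitive repairs the argument; the rest of your plan (symmetric pairing, quantisation, width count) then goes through as written.
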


\begin{proof}
This proof refers to the proof in \citet{kom2015generalized}.

Let $(\bm{x}, \bm{y}) \rightarrow k^*(\bm{x},\bm{y})$ be a real-valued positive semi-definite, continuous, and integrable function such that $\forall \bm{x}, \bm{y}, k^*(\bm{x},\bm{y})>0$.
 
$k^*$ being integrable, it admits a Fourier transform
$$
    K^*(\bm{w}_1, \bm{w}_2) = \mathcal{F}(k^*)(\bm{w}_1, \bm{w}_2), 
$$
therefore, 
$$
    \forall \bm{x},\bm{y}, \quad \mathcal{F}(K^*)(\bm{x}, \bm{y}) = k^*(-\bm{x}, -\bm{y}) > 0.
$$
Hence $k^*$ suffices Wiener's tauberian theorem, so that any integrable function on $\mathbb{R}^d \times \mathbb{R}^b$ can be approximated well with linear combinations of translations of $K^*$.

Let $k$ be any continuous bounded kernel\footnote{\citet{yaglom2012correlation} points out that there are some exception cases that cannot be written as this, but these are specially designed and hardly ever seen in practice.}, according to Theorem~\ref{thm:nonstationary-bochners},
$$
    k(\bm{x}, \bm{y}) = \int e^{i(\bm{x}^\top\bm{w}_1 - \bm{y}^\top \bm{w}_2)} f(\bm{w}_1, \bm{w}_2)\mathrm{d}\bm{w}_1\mathrm{d}\bm{w}_2.
$$
Note that we can approximate $f$ as 
$$
    f(\bm{w}_1, \bm{w}_2) \approx \sum_{k=1}^K \lambda_k K^*(\bm{w}_1+\bm{w}_k^1, \bm{w}_2+\bm{w}_k^2).
$$
Therefore, we have an approximator for $k(\bm{x},\bm{y})$:
\begin{equation}
\begin{aligned}
    &\hat{k}(\bm{x}, \bm{y})\\
     &= \int e^{
     i\begin{bmatrix}
     	\bm{x} \\ -\bm{y}
     \end{bmatrix}^\top 
     \begin{bmatrix}
     	\bm{w}_1 \\ \bm{w}_2
     \end{bmatrix}} \sum_{k=1}^K \lambda_k K^*(\bm{w}_1+\bm{w}_k^1, \bm{w}_2+\bm{w}_k^2) \mathrm{d}\bm{w} \\
    &= \sum_{k=1}^K \lambda_k k^*(\bm{x}, \bm{y}) e^{i(\bm{x}^\top\bm{w}_k^1 - \bm{y}^\top\bm{w}_k^2).}
\end{aligned}
\end{equation}
Where $\bm{w}=\begin{bmatrix}
	\bm{w_1} \\ \bm{w_2}
\end{bmatrix}$. Plus, for $k(\bm{x}, \bm{y})$ being a kernel, we must have $k(\bm{x}, \bm{y})=k(\bm{y}, \bm{x})*$, corresponding to $f(\bm{w}_1, \bm{w}_2) = f(\bm{w}_2, \bm{w}_1)$, therefore we have the universal approximator 
\begin{equation}
\begin{aligned}
    \bar{k}(\bm{x}, \bm{y}) &= \sum_{k=1}^K \lambda_k k^*(\bm{x}, \bm{y}) (e^{i(\bm{x}^\top\bm{w}_k^1 - \bm{y}^\top\bm{w}_k^2)}+e^{i(\bm{x}^\top\bm{w}_k^2 - \bm{y}^\top\bm{w}_k^1)}) \\
    &= \sum_{k=1}^K \lambda_k k^*(\bm{x}, \bm{y})e^{i(\bm{x}-\bm{y})^\top \frac{\bm{w}_k^1 + \bm{w}_k^2}{2}}  \\
    &\quad (e^{i(\bm{x}+\bm{y})^\top \frac{\bm{w}_k^1 - \bm{w}_k^2}{2}}+e^{-i(\bm{x}+\bm{y})^\top \frac{\bm{w}_k^1 - \bm{w}_k^2}{2}}) \\
    &= \sum_{k=1}^K \lambda_k k^*(\bm{x}, \bm{y})e^{i(\bm{x}-\bm{y})^\top \bm{\mu}_k} (e^{i(\bm{x}+\bm{y})^\top \bm{\nu}_k}+e^{-i(\bm{x}+\bm{y})^\top \bm{\nu}_k}) \\
    &= \sum_{k=1}^K \lambda_k k^*(\bm{x}, \bm{y})(e^{i(\bm{x}-\bm{y})^\top \bm{\mu}_0})^{n_k} \Re[(e^{i(\bm{x}+\bm{y})^\top \bm{\nu}_0})^{m_k}]
\end{aligned}
\end{equation}
where $\lambda \in \mathbb{R}$ and for $k^*(\bm{x},\bm{y})$ we can use RBF. However, not every $\lambda$ makes $\bar{k}$ a kernel; therefore, for practical use, we limit $\lambda \in \mathbb{R}^+$. If we only focus on one part of $\bar{k}(\bm{x}, \bm{y})$, that is 
\begin{equation}
\begin{aligned}
\bar{k}_h(\bm{x}, \bm{y}) &= \sum_{k=1}^K \lambda_k k^*(\bm{x}, \bm{y})(e^{i(\bm{x}-\bm{y})^\top \bm{\mu}_0})^{n_k} (e^{i(\bm{x}+\bm{y})^\top \bm{\nu}_0})^{m_k} \\
    &= \mathrm{Poly}_+ \{k^*(\bm{x}, \bm{y}), e^{i(\bm{x}-\bm{y})^\top \bm{\mu}_0}, e^{-i(\bm{x}-\bm{y})^\top \bm{\mu}_0}, \\
    & \quad\quad\quad\quad\quad e^{i(\bm{x}+\bm{y})^\top \bm{\nu}_0}, e^{-i(\bm{x}+\bm{y})^\top \bm{\nu}_0} \}
\end{aligned}
\end{equation}
according to \ref{thm:poly-univ}, $\bar{k}_h(\bm{x}, \bm{y})$ can be represented with an NKN. In order to generate $\bar{k}(\bm{x}, \bm{y})$, we only increment a class with two items $e^{i(\bm{x}+\bm{y})^\top \bm{\nu}_0}, e^{-i(\bm{x}+\bm{y})^\top \bm{\nu}_0}$ and apply the same transformation to these two items each time. In order to keep the output of NKN as a kernel, we can limit this class to only be enabled for multiplication.

\end{proof}

\section{Discrete Gaussian Mixture Approximation}
\label{app:mog-univ}
\begin{lem}\label{lem:univesal-mog}
Gaussian mixtures with form 
\begin{equation}
    G(x) = \sum_{q=1}^Q w_q \exp(-\frac{(x-\psi_q)^2}{2f_q})
\end{equation}
with $Q \in \mathbb{Z}^+, w_q, \psi, f \in \mathbb{R}^+$,  and $\psi_q$ and $f_q$ selected discretely in $\mathcal{S}_{\psi}=\{\psi, \pm 2\psi, \cdots\}, \mathcal{S}_{f}=\{f, 2f, \cdots\}$ is dense with respect to $L_1$ convergence.
\end{lem}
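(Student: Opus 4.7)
The plan is to prove density in two stages. First, approximate any target $\phi \in L_1 \cap L_1^+$ by a continuous-parameter positive-weighted Gaussian mixture. Second, snap the continuous parameters onto the prescribed lattices $\mathcal{S}_\psi, \mathcal{S}_f$ at the cost of arbitrarily small additional $L_1$ error. Here $L_1^+$ denotes nonnegative $L_1$ functions, the relevant class since spectral densities are nonnegative by Bochner's theorem.

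\textbf{Step 1 (Continuous-parameter approximation).} Let $\eta_\sigma(x) := (2\pi\sigma^2)^{-1/2}\exp(-x^2/(2\sigma^2))$. Since $\{\eta_\sigma\}$ is a standard $L_1$ approximate identity, $\phi \ast \eta_\sigma \to \phi$ in $L_1$ as $\sigma \to 0^+$, so I fix $\sigma$ with $\|\phi \ast \eta_\sigma - \phi\|_1 < \epsilon/4$. Then I approximate the convolution by a Riemann sum: truncate to $[-M, M]$ so the tail contributes at most $\epsilon/8$, and on a grid $\{\psi_q^0\}_{q=1}^{Q_0} \subset [-M, M]$ with step $\Delta$ form
\begin{equation*}
G_0(x) := \sum_{q=1}^{Q_0} \phi(\psi_q^0)\, \Delta\, \eta_\sigma(x - \psi_q^0).
\end{equation*}
For sufficiently fine $\Delta$, $\|G_0 - \phi \ast \eta_\sigma\|_1 < \epsilon/4$, yielding a positive-weighted mixture with $\|G_0 - \phi\|_1 < \epsilon/2$, all means in $\mathbb{R}$ and all variances equal to $\sigma^2 > 0$. (If $\phi$ is merely $L_1$, a preliminary density step in $C_c(\mathbb{R})$ makes the pointwise evaluations valid.)

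\textbf{Step 2 (Lattice discretization).} Set $g_{\mu,v}(x) := \exp(-(x-\mu)^2/(2v))$ and observe that $(\mu, v) \mapsto g_{\mu,v}$ is continuous from $\mathbb{R} \times (0,\infty)$ into $L_1(\mathbb{R})$: convergence is pointwise and, on compact parameter sets, is dominated by a fixed integrable Gaussian envelope. Because Step 1 uses finitely many terms with a common positive variance, choose $f \in (0, \sigma^2)$ and $\psi > 0$ both small. For each $q$, pick $k_q^1 \in \mathbb{Z}\setminus\{0\}$ with $|k_q^1 \psi - \psi_q^0| \le \psi$ and $k_q^2 \in \mathbb{Z}^+$ with $|k_q^2 f - \sigma^2| \le f$. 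Taking $\psi, f$ small enough, continuity gives
\begin{equation*}
\sum_{q=1}^{Q_0} w_q^0 \big\| g_{\psi_q^0,\, \sigma^2} - g_{k_q^1 \psi,\, k_q^2 f} \big\|_1 < \epsilon/2,
\end{equation*}
so the triangle inequality puts the resulting lattice mixture within $\epsilon$ of $\phi$ in $L_1$.

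\textbf{Main obstacle.} The one delicate constraint is that $\mathcal{S}_f = \{f, 2f, \ldots\}$ is bounded below by $f$, so variances smaller than $f$ are unreachable from within the lattice. This is handled by the order of operations: I first fix the continuous mixture (producing finitely many positive variances), and only then shrink $f$ below all of them. A minor related point is that $\mathcal{S}_\psi$ excludes $0$, but any $\psi_q^0 = 0$ can be replaced by $\pm\psi$, which is $L_1$-negligible for small $\psi$ by the same continuity argument.
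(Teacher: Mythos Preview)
Your proof is correct and follows the same two-stage skeleton as the paper---first approximate by a continuous-parameter Gaussian mixture, then snap the means and variances onto the lattices---but the tools you use at each stage are different and somewhat more elementary. For Step~1 the paper simply invokes a ``Gaussian mixture approximation theorem'' as a black box, whereas you build the approximating mixture explicitly via convolution with a Gaussian approximate identity followed by a Riemann-sum discretization; this has the pleasant side effect that all variances in your intermediate mixture equal a single $\sigma^2$, simplifying Step~2. For Step~2 the paper bounds the $L_1$ perturbation by passing through Pinsker's inequality and the closed-form KL divergence between Gaussians, while you argue directly from $L_1$-continuity of $(\mu,v)\mapsto g_{\mu,v}$ via dominated convergence. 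Both routes are valid; yours is more self-contained and avoids the slightly awkward normalization factor $\sqrt{v_q/f_q}$ that appears in the paper's definition of $\bar g$, at the cost of not giving an explicit quantitative bound on how small $\psi,f$ must be.
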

\begin{proof}

For any continuous positive function $g(x)$ and $\epsilon > 0$, the Gaussian mixture approximation theorem ensures existing
\begin{equation}
\hat{g}(x) = \sum_{q=1}^Q w_q \exp(-\frac{ (x-\mu_q)^2 }{2v_q} )
\end{equation}
that $\| g(x)-\hat{g}(x)\|_1 < \frac{\epsilon}{2}$, \\
According to Pinsker's Inequality \citep{csiszar2011information}, for two probability distributions $P, Q$,
\begin{equation}
\begin{aligned}
    \|P-Q\|_1 \le \sqrt[]{\frac{1}{2} \mathrm{D}_{KL}(P\| Q)}
\end{aligned}
\end{equation}
For small enough $\mathrm{d}v$ and $\mathrm{d}\mu$, we have 
\begin{equation}
\begin{aligned}
& \mathrm{D}_{KL}(\mathcal{N}(\mu+\mathrm{d}\mu, v+\mathrm{d}v)\| \mathcal{N}(\mu, v)) \\ 
&= -\frac{1}{2} \log \frac{v+\mathrm{d}v}{v} + \frac{v+\mathrm{d}v + (\mathrm{d}\mu)^2}{2v} - \frac{1}{2} \\
& = -\frac{1}{2} \log \frac{v+\mathrm{d}v}{v} + \frac{\mathrm{d}v + (\mathrm{d}\mu)^2}{2v} \le \frac{1}{2\pi v} \frac{\epsilon^2}{2w_q^2Q^2}
\end{aligned}
\end{equation}
Thus, the $L_1$ norm between two gaussians is bounded by
\begin{equation}
\|\mathcal{N}(\mu+\mathrm{d}\mu, v+\mathrm{d}v) - \mathcal{N}(\mu, v)\|_1 \le \frac{1}{\sqrt[]{2\pi v}}\frac{\epsilon}{2|w_q|Q}
\end{equation}

Take $f, \psi$ small enough such that $\forall q=1,\cdots Q$
$$
\|\mathcal{N}(\mu_q+\frac{\psi}{2}, v_q+\frac{f}{2}) - \mathcal{N}(\mu_q, v_q)\|_1 \le \frac{1}{\sqrt[]{2\pi v_q}} \frac{\epsilon}{2|w_q|Q}
$$
Then for any $v_i, \mu_i$, we can find $f_i \in F=\{f, 2 f, \cdots \}, \psi_i \in \Psi=\{\psi, \pm 2\psi, \cdots \}$ such that $|v_i-f_i| < \frac{f}{2}, |\mu_i-\psi_i| < \frac{\psi}{2}$, define discrete Gaussian mixture
\begin{equation}
\begin{aligned}
	\bar{g}(x) \vcentcolon= \sum_{q=1}^Q w_q \sqrt[]{\frac{v_q}{f_q}} \exp(-\frac{(x-\psi_q)^2}{2f_q}) 
\end{aligned}
\end{equation}
According to triangular inequality, 
\begin{equation}
\begin{aligned}	
    &\|\bar{g}(x) - g(x)\|_1 \leq \|\bar{g}(x) - \hat{g}(x)\|_1 + \|\hat{g}(x) - g(x)\|_1 \\
    &< \| \sum_{q=1}^Q w_q \sqrt[]{2\pi v_q} [\mathcal{N}(x| \phi_q, f_q)- \mathcal{N}(x| \mu_q, v_q)] \|_1 + \frac{\epsilon}{2} \\
    &\le \sum_{q=1}^Q |w_q|\sqrt[]{2\pi v_q} \|\mathcal{N}(x| \phi_q, f_q)- \mathcal{N}(x| \mu_q, v_q)\|_1  + \frac{\epsilon}{2} \\
    &\le \sum_{q=1}^Q |w_q| \frac{\epsilon}{2|w_q|Q} + \frac{\epsilon}{2}  = \epsilon
\end{aligned}
\end{equation}
\end{proof}

\section{Additional Results}

\subsection{Synthetic Data}
\begin{figure}[h!]
\centering
    \includegraphics[width=\columnwidth]{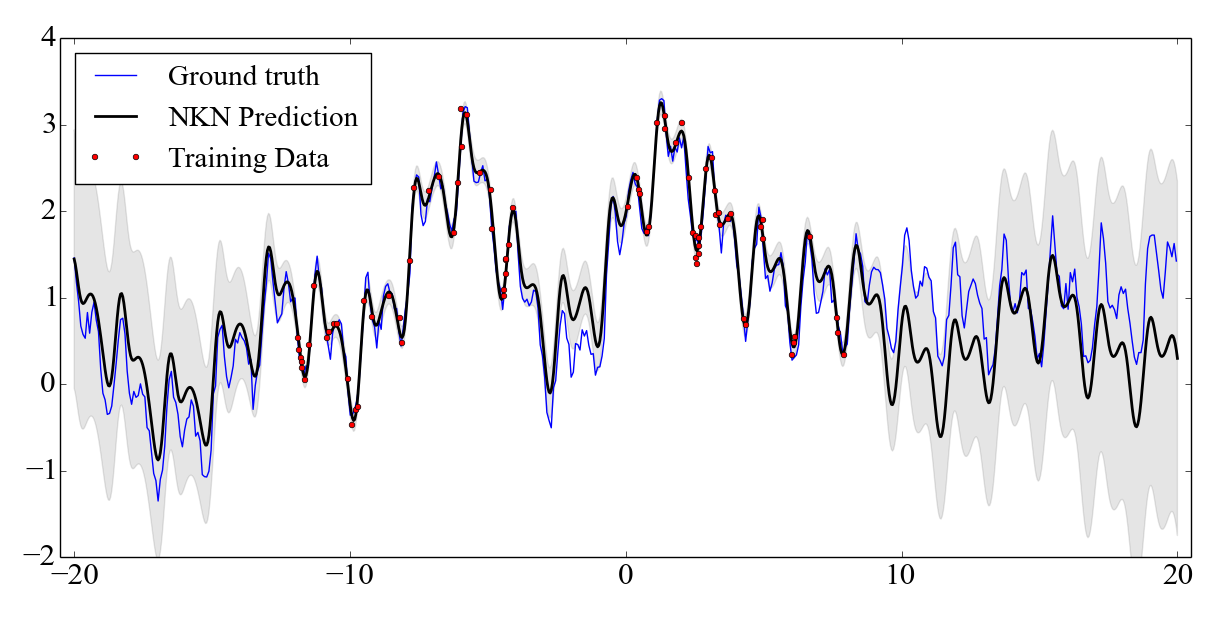}
\vspace{-2em}
\caption{Synthesized $1\text{-}d$ toy experiment. We compare NKN prediction with the ground truth function. Shaded area represents the standard deviation of NKN prediction.}
\label{fig:nkn_toy_syn}
\end{figure}


We first considered a synthetic regression dataset for illustrative purposes. Specifically, we randomly sampled a function from a GP with mean $0$ and a sum of RBF and periodic kernels. We randomly selected $100$ training points from $[-12, 0] \cup [6, 14]$.

As shown in Figure~\ref{fig:toy_as}, the NKN is able to fit the training data, and its extrapolations are consistent with the ground truth function. We also observe that the NKN assigns small predictive variance near the training data and larger predictive variance far away from it.
%

\subsection{1D Time-Series}

Figure~\ref{fig:toy_as_mauna} and Figure~\ref{fig:toy_as_solar} plot the 1D Time-Series extrapolations result, along with Figure~\ref{fig:toy_as}. For these experiments, the running time of AS and NKN are shown in \ref{tab:sec-as}.

\begin{figure}[h!] 
\centering
\includegraphics[width=0.95\columnwidth]{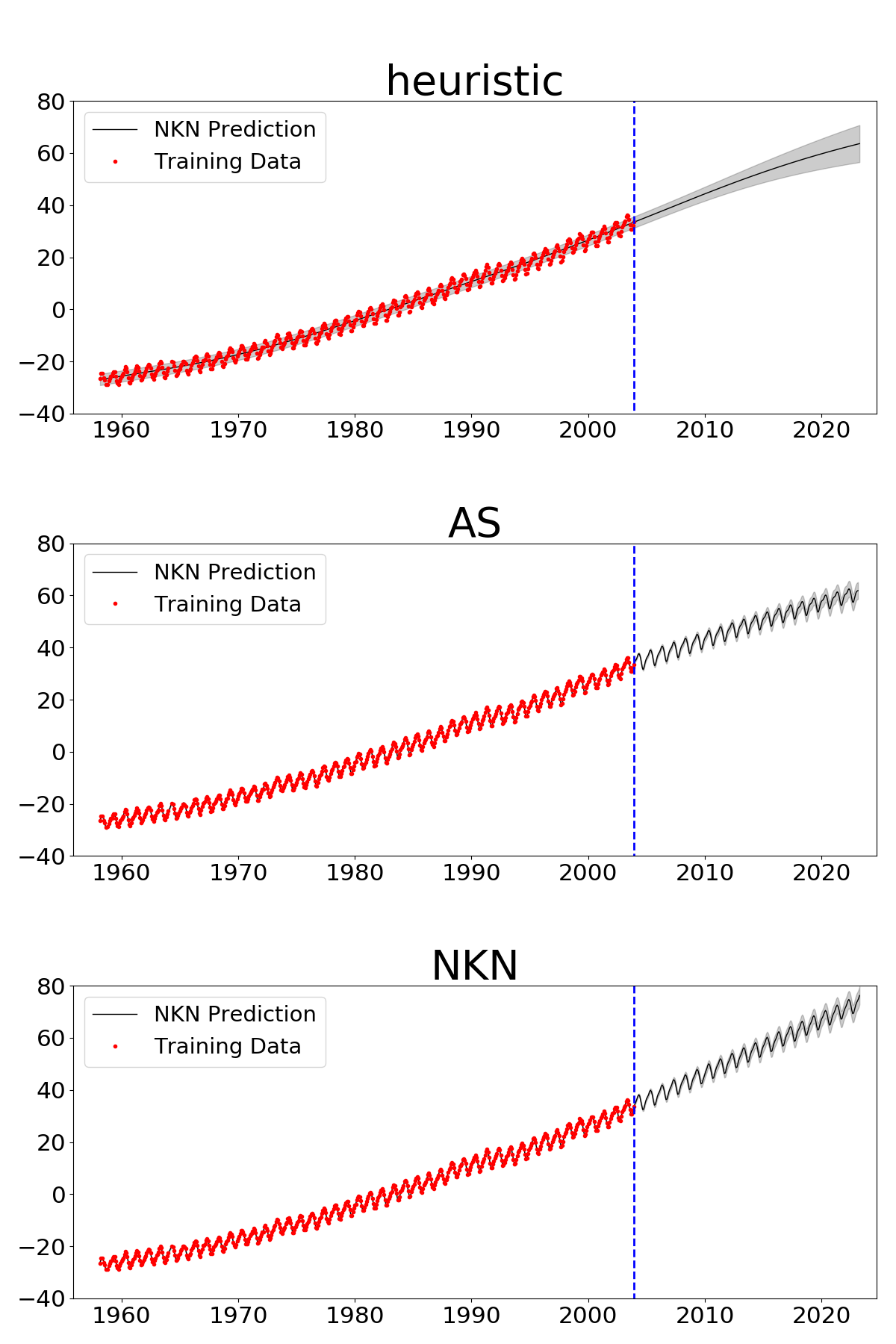}
\caption{Extrapolation results of NKN on the Mauna datasets. Heuristic kernel represents linear combination of RBF, PER, LIN, and Constant kernels. AS represents Automatic Statistician \citep{duvenaud2013structure}.
The \textcolor{red}{\bf red circles} are the training points, and the curve after the \textcolor{blue}{\bf blue dashed line} is the extrapolation result. Shaded areas represent 1 standard deviation.}
\label{fig:toy_as_mauna}
\end{figure}

\begin{figure}[h!] 
\centering
\includegraphics[width=0.95\columnwidth]{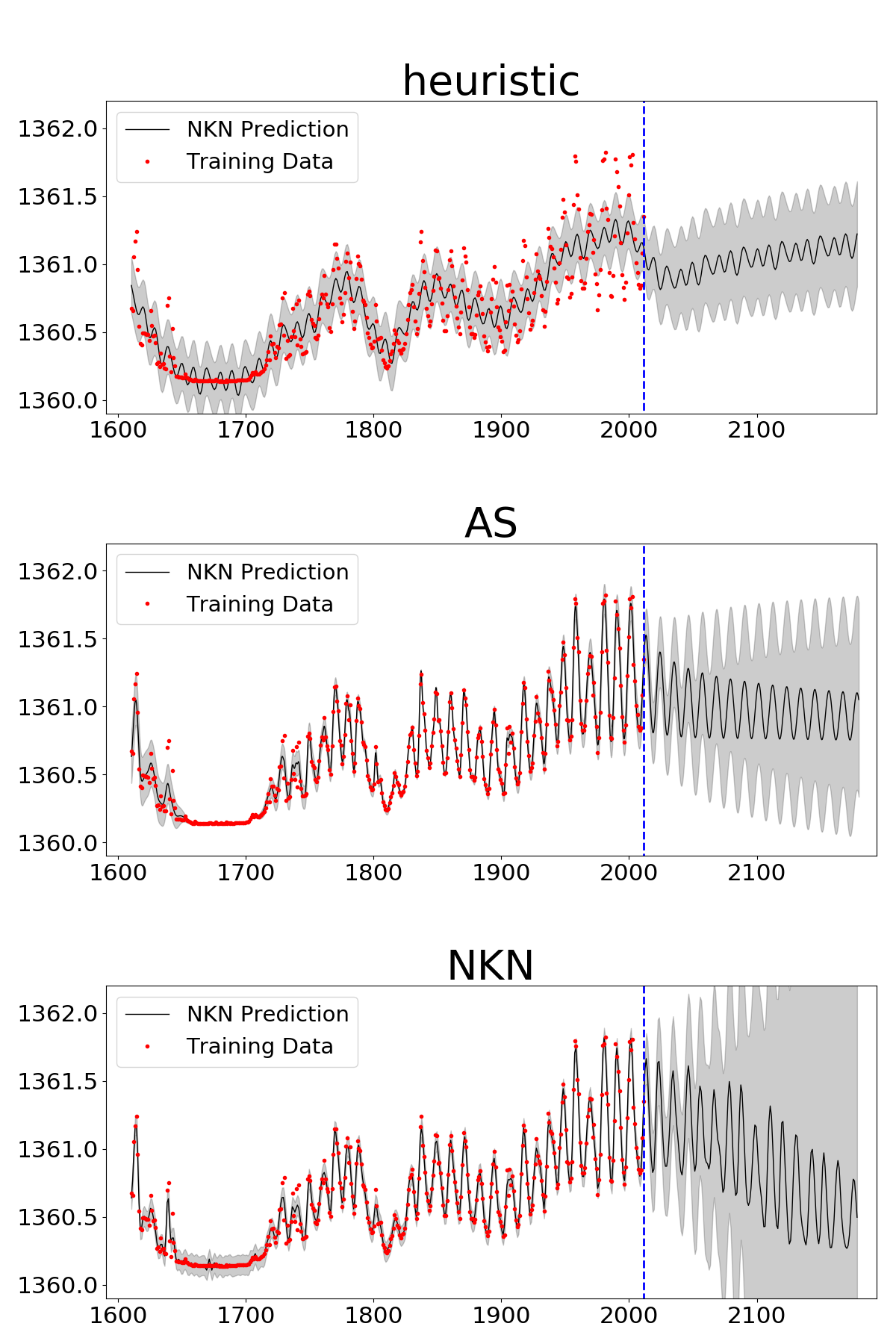}
\caption{Extrapolation results of NKN on the Solar datasets. Heuristic kernel represents linear combination of RBF, PER, LIN, and Constant kernels. AS represents Automatic Statistician \citep{duvenaud2013structure}.
The \textcolor{red}{\bf red circles} are the training points, and the curve after the \textcolor{blue}{\bf blue dashed line} is the extrapolation result. Shaded areas represent 1 standard deviation.}
\label{fig:toy_as_solar}
\end{figure}

\begin{table}[t]
\centering
\caption{Time (seconds) in 1D Time-series experiments.}
\label{tab:sec-as}
\begin{center}
\resizebox{0.8\columnwidth}{!}{
\begin{sc}
\begin{tabular}{cccc}
\toprule
Datasets   & Airline & Mauna & Solar \\
\midrule
AS  & 6147    & 51065 & 37716 \\
NKN & 201     & 576   & 962  \\
\bottomrule
\end{tabular}
\end{sc}
}
\end{center}
\end{table}
%
%
\subsection{Neuron Pattern Analysis}

We perform another synthetic toy experiment to analyze individual neurons' prediction patterns in NKN. The kernel for generating random functions is $\text{LIN}+\text{RBF} * \text{PER}$. We use NKN with LIN, PER, and RBF as primitive kernels and the following layers organized as Linear4--Product2--Linear1. The prediction result is shown in Figure~\ref{app:fig:nkn_toy_syn}. As we can see, NKN fits the data well with plausible patterns. Beyond that, we also visualize the pattern learned by each neuron in first layer of NKN in Figure~\ref{app:fig:toy_syn_neurons}.

As shown in Figure~\ref{app:fig:toy_syn_neurons}, these four neurons show different patterns in terms of frequencies. The third neuron learns the highest frequency and the first neuron learns the lowest frequency. This shows that NKN can automatically learn basis with low correlations. 

The question of how to steer the NKN framework to generate more interpretable patterns is an interesting subject for future investigation.

\begin{figure}[h!]
\centering
    \includegraphics[width=\columnwidth]{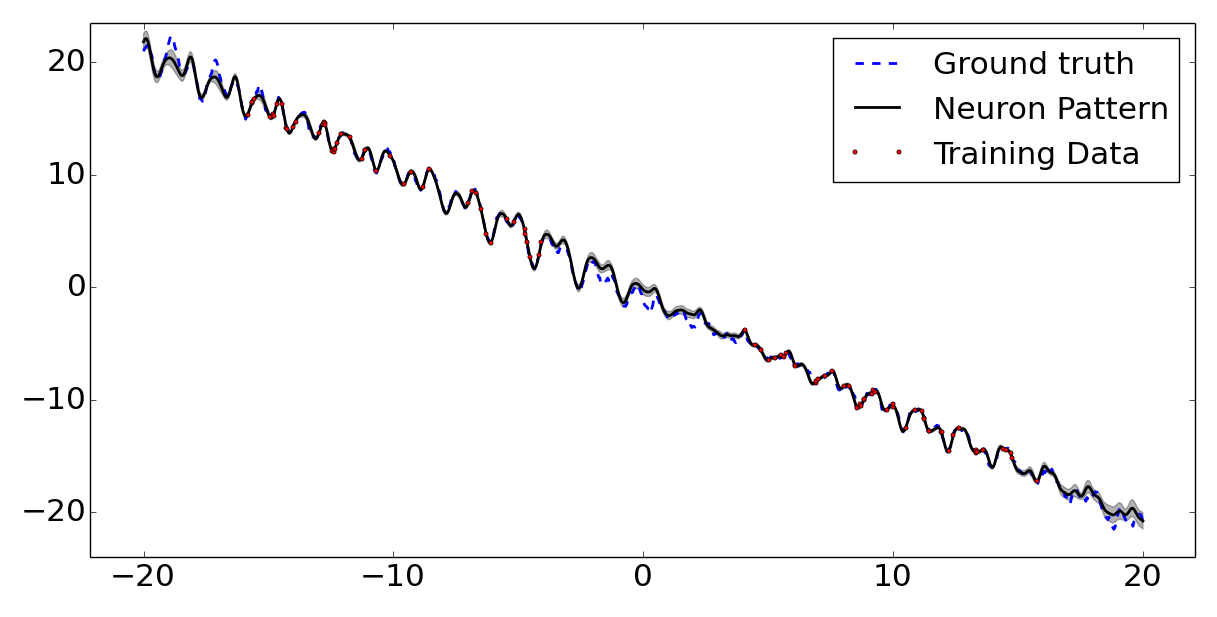}
\vspace{-2em}
\caption{Synthesized $1\text{-}d$ toy experiment.}
\label{app:fig:nkn_toy_syn}
\end{figure}

\begin{figure*}[h!] 
\centering
\hspace{-1em}%
\subfigure[The first neuron] { \label{fig:toy_neuron1} 
\includegraphics[height=0.17\textwidth]{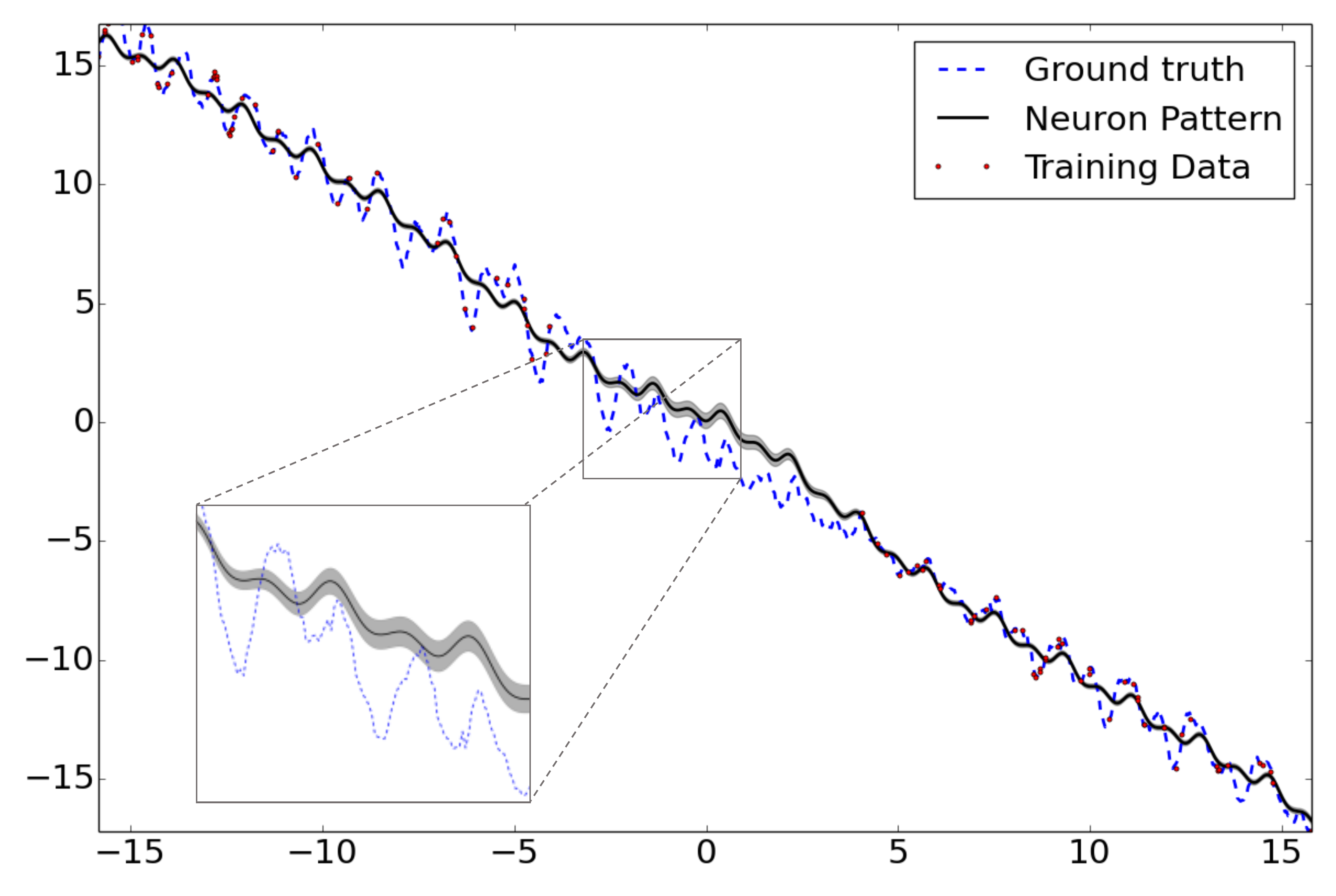} 
} 
\hspace{-1em}
\subfigure[The second neuron] { \label{fig:toy_neuron2} 
\includegraphics[height=0.17\textwidth]{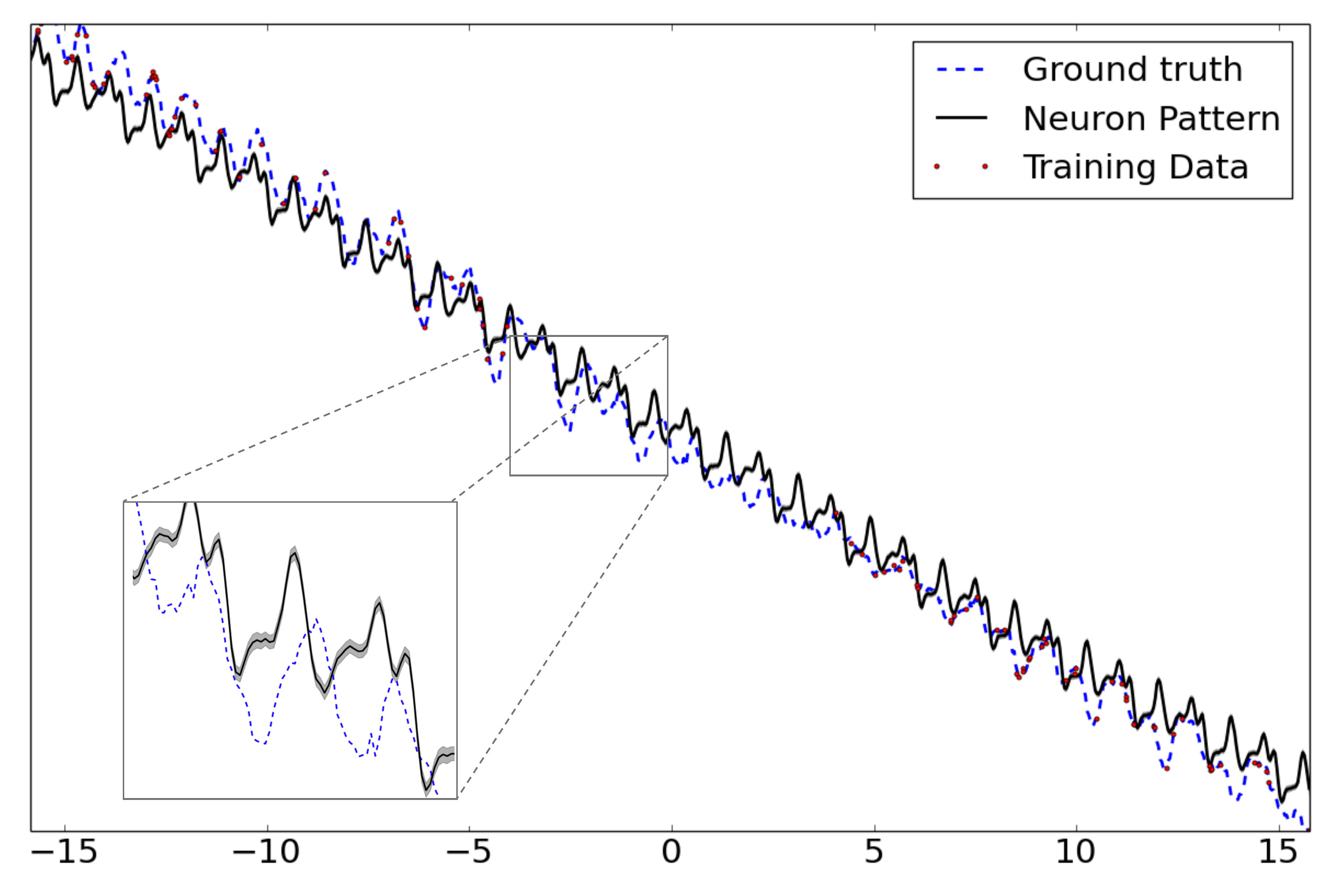} 
}
\hspace{-1em}%
\subfigure[The third neuron] { \label{fig:toy_neuron3} 
\includegraphics[height=0.17\textwidth]{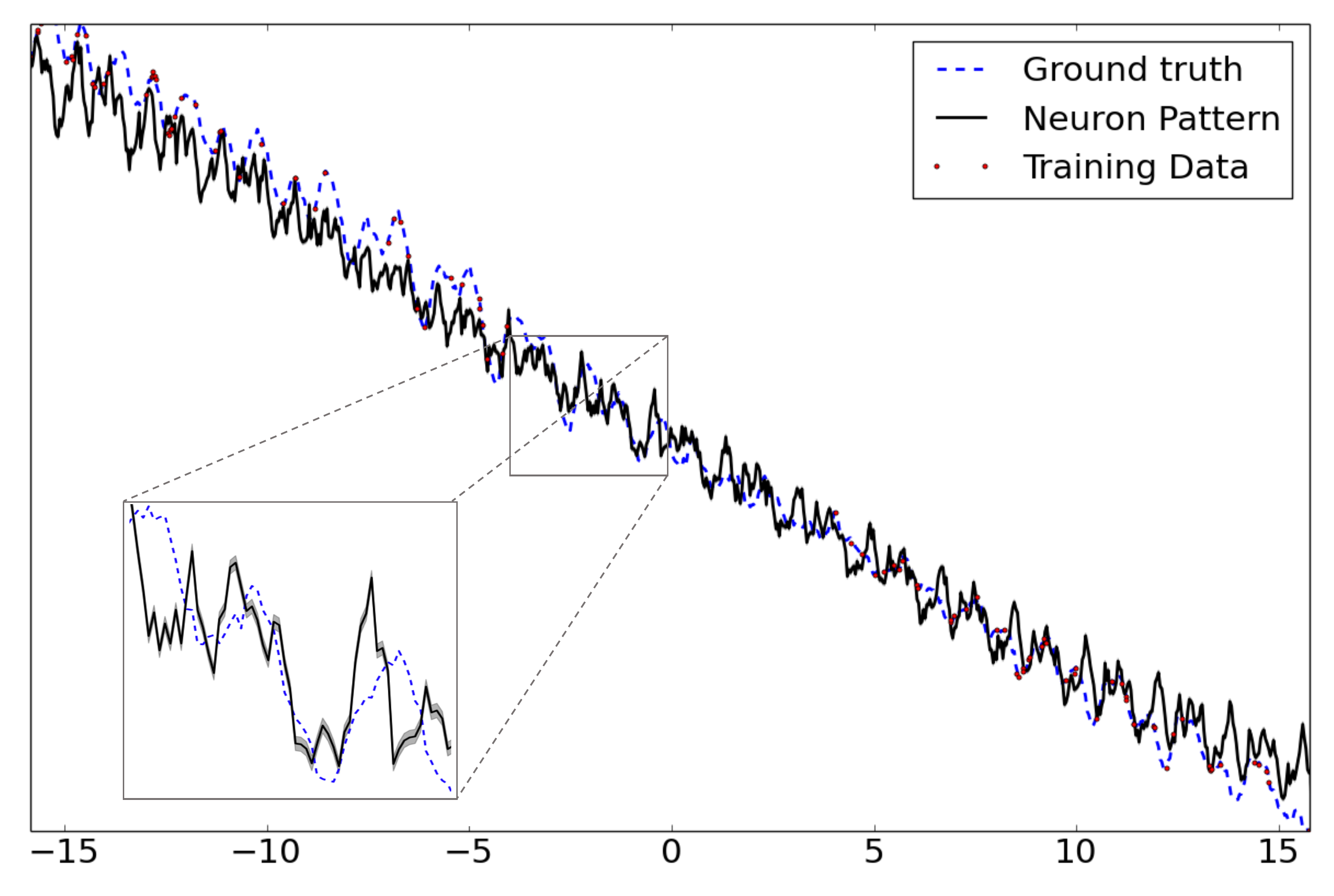} 
}
\hspace{-1em}%
\subfigure[The fourth neuron] { \label{fig:toy_neuron4} 
\includegraphics[height=0.17\textwidth]{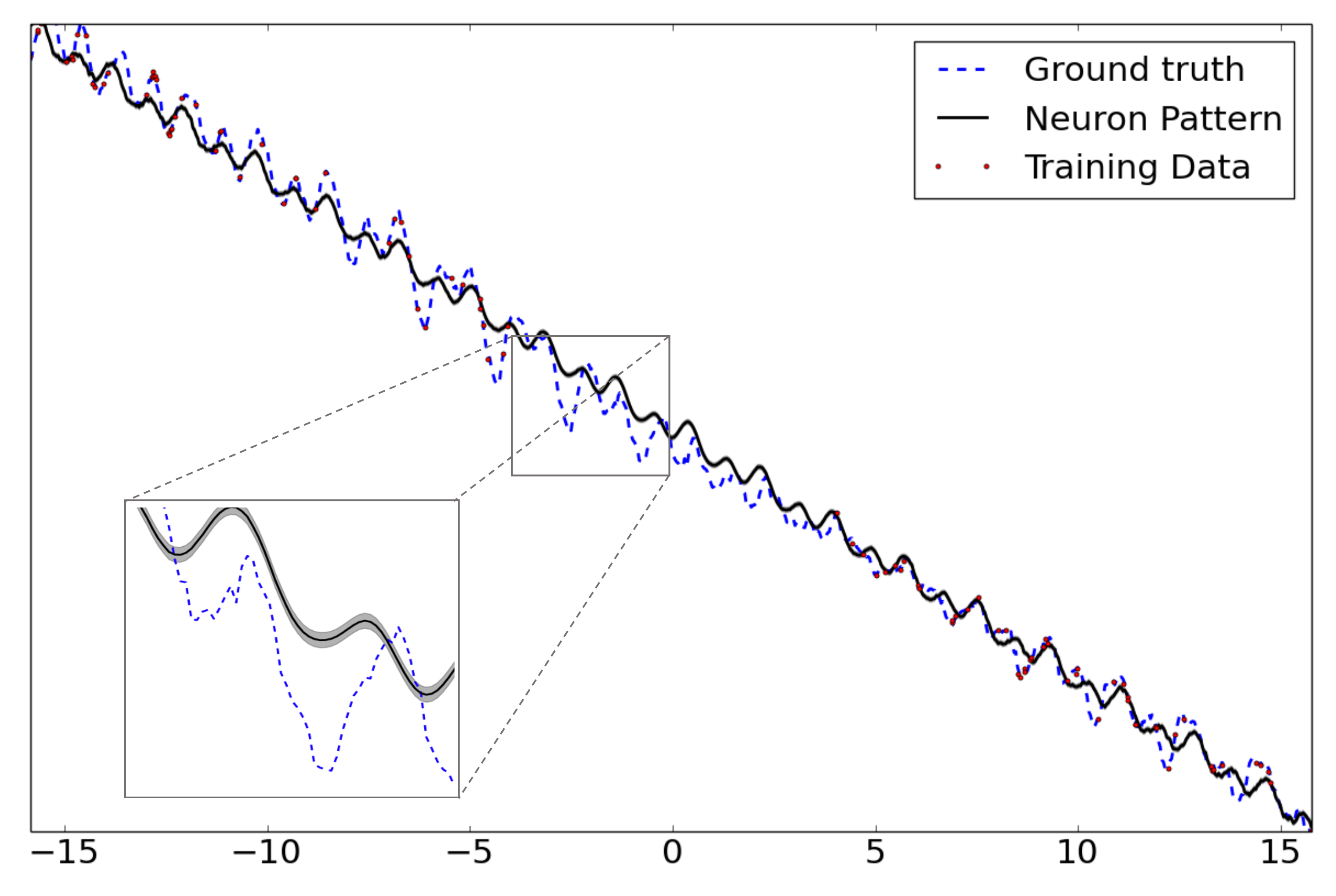} 
}  
\vspace{-1em}
\caption{The visualization of learned pattern of each neuron in the first layer of NKN.}\label{app:fig:toy_syn_neurons}
\label{app:fig:neurons} 
\end{figure*}

\subsection{Kernel Recovery and Interpretability}

As we proved in the paper theoretically, NKN is capable to represent sophisticated combinations of primitive kernels. We perform another experiment analyzing this kernel recovery ability in practice. 

The experiment settings are the same as Sec~\ref{subsec:bo}, that we use individual dimensional RBF as primitive kernels of a NKN to fit the additive function. We set input dimension as $10$ and use $100$ data points. Ideally, a well-performing model will learn the final kernel as summation of these additive groups. After training NKN for 20,000 iterations, we print 20 terms with biggest coefficients of the overall kernel polynomial. Here $k0, k1 \cdots, k9$ represents the 10 primitive RBF kernels, respectively.

For fully additive Stybtang function, the 20 biggest terms in the final NKN kernel polynomial is 

1, k6, k9, k5, k1, k2, k3, k0, k8, k4, k7, k2*k8, k6*k9, k5*k6, k1*k6, k5*k9, k1*k9, k1*k5, k6**2, k3*k9


For group additive Stybtang-transformed function with $k0,k1,k2,k3,k4$ as one group and $k5,k6,k7,k8,k9$ as one group, the 20 biggest terms in the final NKN kernel polynomial is 

k1*k3*k4*k7, k2*k5*k7*k8, k5*k7*k8*k9, k1*k3*k4**2, k5*k6*k7*k8, k3**2*k4*k7, k1**2*k4*k7, k2*k7**2*k9, k2*k6*k7**2, k3**2*k4**2, k1**2*k4**2, k6*k7**2*k9, k1*k3*k7**2, k5**2*k8**2, k2**2*k7**2, k3*k4*k7**2, k1*k4*k7**2, k7**2*k9**2, k3**2*k7**2, k3*k4**2*k7


We can see, for Stybtang function, the biggest ones are exactly all the linear terms. This shows NKN learns the additive structure from 100 data points in this 10 dimensional space. For group additive Stybtang-transformed function, except $k7$ appears in all terms, the kernels within the same group basically appears in the same term. This demonstrates again that NKN recovers the underlying data structure.

Michalewicz function is more complicated with steep valleys. The final polynomial with $100$ data points didn't show clear patterns. Therefore we change to model it with $500$ data points. The $20$ biggest terms are shown below,

k2*k4, k0*k9, k2*k8, k4*k7, k2*k3, k1, k3*k7, k7*k8, k0*k7, k3*k4, k6, k9**2, k7, k0*k2, k3*k8, k3*k9, k3**2, k2, k1*k5, k5*k6


We can see, although the biggest terms are not linear terms, all of them are either linear terms or quadratic terms. Note that for this NKN architecture with 2 Linear-Product modules, most common terms are fourth power. Therefore, this polynomial can show that this function is of low-correlation across dimensions to some degree.

Note that although NKN can produce sensible extrapolations, it is less interpretable compared to AS. However, as shown above, by inspecting the final kernel polynomial, we can indeed find interpretable information about the data distribution. Probably, combing with some clustering algorithms, this information recovery can become more ‘Automatic’. What's more, NKN's fast speed makes it possible to try different primitive kernel and network structure configurations, which might offer interpretable information from different aspects. This can be an interesting future research topic.

\subsection{Bayesian Optimization}
\label{subsec:bo-app}

\begin{figure*}[t]
\centering
\hspace{-1em}%
\subfigure[Stybtang] { \label{fig:bo_sty} 
\includegraphics[height=4.0cm]{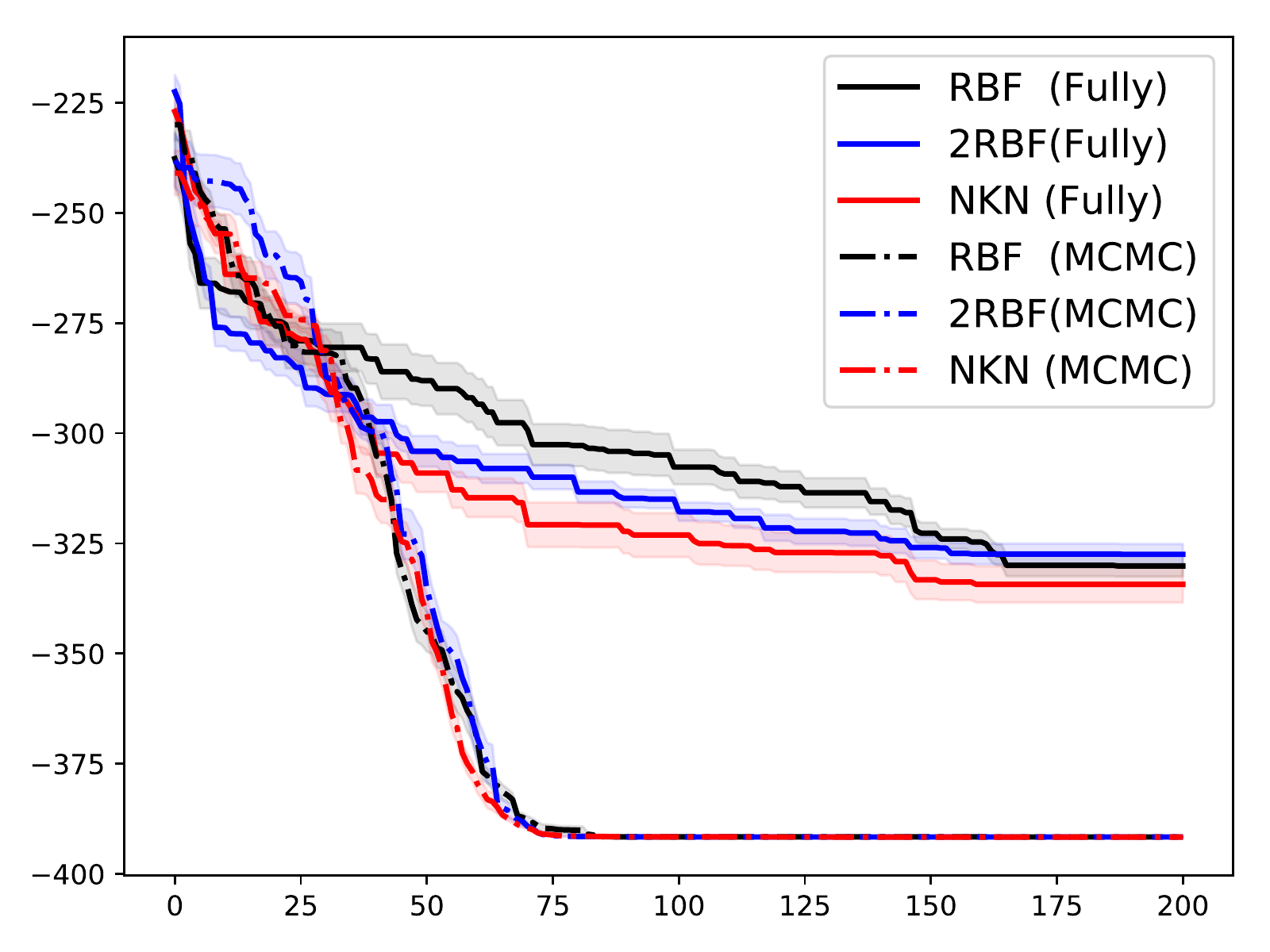} 
} 
\hspace{-1em}
\subfigure[Michalewicz] { \label{fig:bo_mich} 
\includegraphics[height=4.0cm]{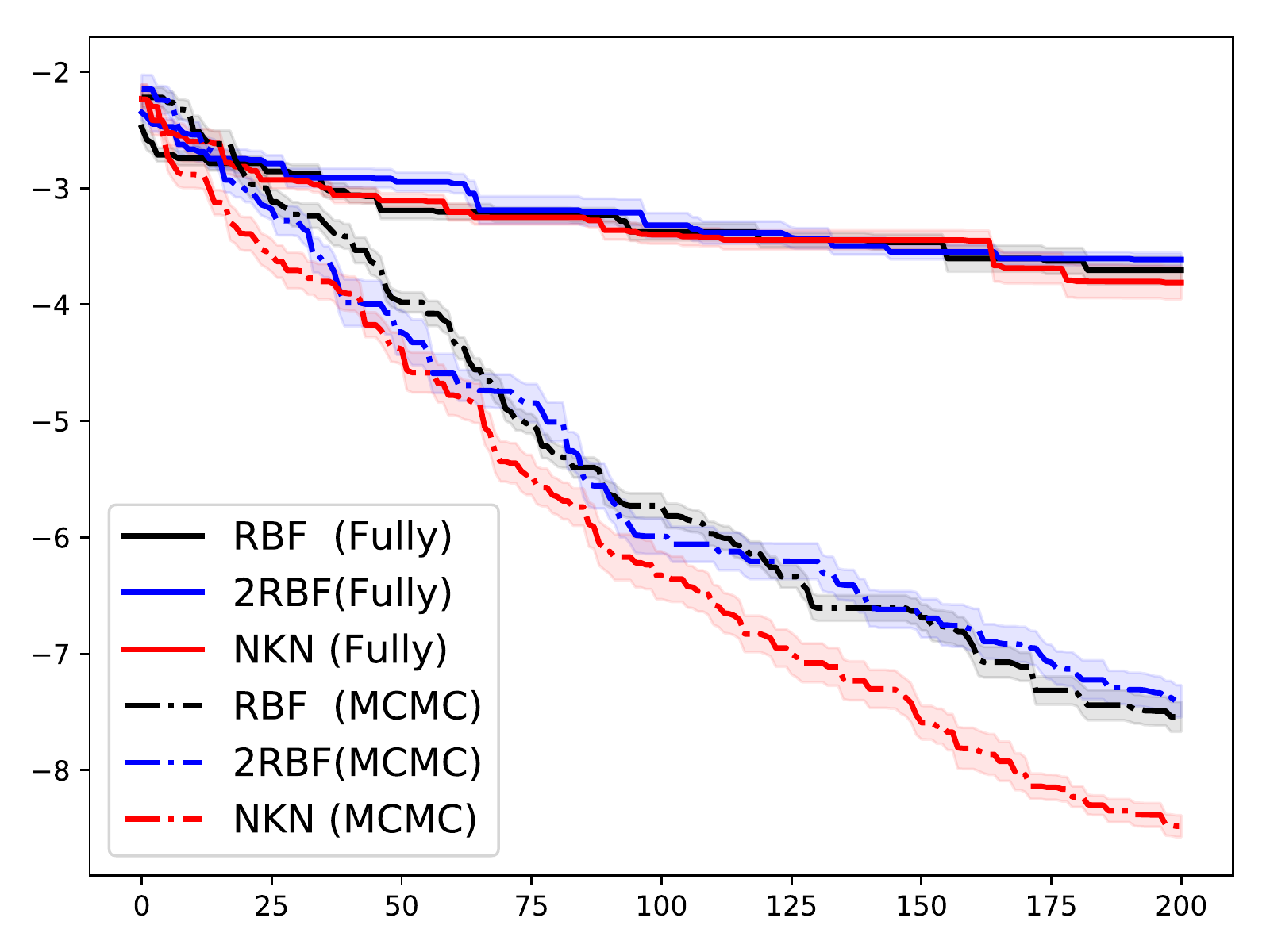} 
}
\hspace{-1em}%
\subfigure[Stybtang-transform] { \label{fig:bo_sty_t} 
\includegraphics[height=4.0cm]{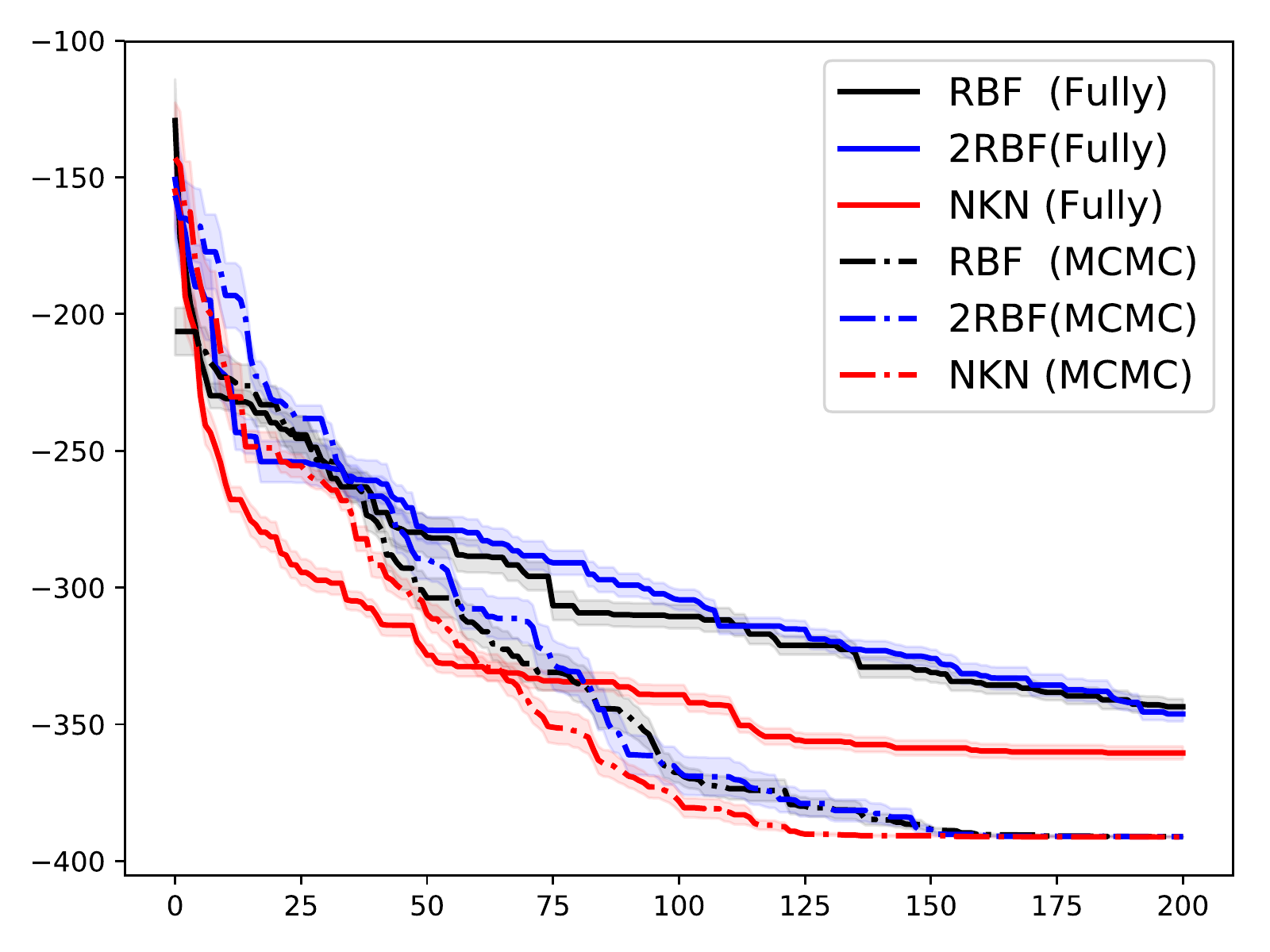} 
} 
\vspace{-1em}
\caption{Bayesian optimization on three tasks. Here 2RBF represents $\text{RBF}+\text{RBF}$. \textit{Fully} and \textit{MCMC} correspond to Fully dependent and MCMC \citep{gardner2017discovering} models, respectively. Shaded error bars are 0.2 standard deviation over 10 runs.}
\label{fig:bo-model-mcmc} 
\end{figure*}

We perform another Bayesian Optimization experiment compared to Sec~\ref{subsec:bo}. Sec~\ref{subsec:bo} models the function but optimizes function for all dimensions together, which doesn't take advantage of the additive strucutre. However, because additive kernels correspond to additive functions, we can optimize the function within each additive groups and combine them together, which is much more sample efficiency \citep{kandasamy2015high, gardner2017discovering, wang2017batched}.

In this experiment,  we adopt a small NKN network which takes $2$ RBF as primitive kernels, following layers arranged as Linear4-Product2-Linear1. We compare NKN with standard RBF and $\text{RBF}+\text{RBF}$, which we denote as 2RBF.

We compare these kernels on two models. The fully dependent model uses a single $d$-dimensional kernel for fitting, which is the standard approach in Bayesian optimization. The Model MCMC~\citep{gardner2017discovering} samples partitions using MCMC from the posterior. We plot the optimization results in Figure~\ref{fig:bo-model-mcmc}. Each curve shows the mean cumulative minimum function evaluation up to the current iteration across 10 runs, while the shaded region shows $0.2$ the standard deviation.

We found that for the fully dependent model, NKN distinctly outperforms RBF and 2RBF in terms of not only the minimum found but also convergence speed for Styblinski and Styblinski-transformed. This is because the fully dependent algorithm models correlations between all dimensions, which is beyond the capacity of RBF and 2RBF. Though NKN is expressive enough to model above two functions well, it fails to model Michalewicz function well. Therefore, we switch to use Model MCMC which can explore the additive structure of the underlying function. Stybtang task has only one local minimum, thus all three kernels perform well. In contrast, Michalewicz has many steep valleys and is much more difficult to model with simple kernels like RBF and 2RBF. Compared with Stybtang and Michalewicz, Stybtang-transformed's function introduces correlation between dimensions, which calls for a more expressive kernel structure to represent. This explains the improved convergence speed of NKN compared to the other two baselines.

%
%
\section{Implementation Details}
\label{app:details}
In this section we introduce the implementation details of NKN. For better analyzing NKN's parameter scale, we first list the parameter numbers for all the primitive kernels we use.

\begin{table}[ht]
\centering
\caption{Number of kernel parameters for $d$ dimensional inputs.}
\label{app:kernel-n-params}
\begin{tabular}{|c|c|c|c|c|c|}
\hline
RBF & LIN & COS & PER  & RQ  \\ \hline
$d+1$ & $1$   & $d+1$ & $2d+1$ & $d+1$  \\ \hline
\end{tabular}
\end{table}

\subsection{Toy experiments} \label{app:toy-details}
Across all 1-D and 2-D toy experiments, we adopt the same architecture and the same hyper-parameters initialization strategy. Concretely, we provide 8 primitive kernels ($2$ RBFs, $2$ PERs, $2$ LINs, and $2$ RQs respectively) as the primitive kernels. The NKN has a Linear8-Product4-Linear4-Product2-Linear1 structure. This NKN has overall $111$ trainable parameters. We optimize our model using Adam with an initial learning rate of $0.001$ for $20,000$ iterations.

\subsection{Regression} \label{app:reg-detials}
\label{app:reg-classification}

For the regression benchmarks, we follow the standard settings of \citep{hernandez2015probabilistic} for bayesian neural networks. We use the network with 1 hidden layer of $50$ units for all the datasets, except for \textit{protein} we use 1 hidden layer of $100$ units. We also compare NKN with standard RBF kernel and SM kernels. For the SM kernel, we use 4 mixtures which has $8d+12$ trainable parameters for $d$ dimensional inputs. The NKN has 6 primitive kernels including 2 RQ, 2 RBF, and 2 LIN. The following layers organize as Linear8-Product4-Linear4-Product2-Linear1. This architecture has $4d+85$ trainable parameters. 

Because GP suffers from $O(N^3)$ computational cost, which brings up difficulties training for large datasets. Therefore, we use Variational Free Energy (VFE) \citep{titsias2009variational} to train the GP models for datasets with more than $2000$ data points, while for the small datasets we use the vanilla GP. 

\subsection{Bayesian Optimization}
We first introduce two standard optimization benchmarks that have fully additive structure: the Styblinski-Tang function and the Michalewicz function. The $d$-dimensional Styblinski-Tang function is defined as 
\begin{equation}
    \mathrm{Stybtang}(\bx) = \frac{1}{2}\sum_{i=1}^d x_i^4 - 16x_i^2 + 5x_i
\end{equation}
which obtains its global minimum at approximately at $\bx^* \approx (-2.9, \cdots, -2.9)$ with value $-39.166d$. In practice	, we limit the exploring domain within $[-4, 4]^d$ and set $d=10$.

The $d$-dimensional Michalewicz function is defined as
\begin{equation}
    \mathrm{Michalewicz}(\bx) = - \sum_{i=1}^d \sin(x_i) \sin^{2m}(\frac{ix_i^2}{\pi})
\end{equation}
Here $m$ controls the steepness of valleys and ridges; a larger $m$ leads to a more difficult search. We set $m=10, d=10$, then the global minimum is approximately $-9.66$ over the domain $[0, \pi]^d$.

In addition, we extend the Styblinski-Tang function to a transformed Styblinski-Tang function that is not fully additive. We sample a random partition $P$ and for each part $i$ of $P$, we sample a random orthonormal matrix $Q_i$ over the dimensions of part $i$. If $Q$ is the block diagonal matrix formed by placing each $Q_i$ on the diagonal, then Stybtang($Qx$) is no longer fully additive, but instead is additive across the components of $P$. This evaluates performance of BO algorithms when the true function has corrlations between inputs dimensions.

For the Bayesian optimization process, we firsly sample 10 initial data points randomly. Then at each step, we use a GP with the proposed kernel to fit all data points available. We choose the next point from all candidate points in the exploring domain according to \textit{expected improvement}. For sampling in Model MCMC, we perform 30 steps in the first 30 iterations, while we perform 10 steps after. Among all models and tasks, we use L-BFGS optimizer for speed. In this experiment, we use shared lengthscale for all input dimensions, in which case the trainable parameters for RBF is 2. 

\subsection{Texture Extrapolation}
In texture extraplation, all observations don't exactly on a grid. Following the algorithm in, we complete the grid using extra $W$ imaginary observations, $\mathbf{y}_W \sim \mathcal{N}(\mathbf{f}_W, \epsilon^{-1}\mathbf{I}), \epsilon \rightarrow 0$. In practice, $\epsilon$ is set to $1e^{-6}$. The total observation vector $\mathbf{y} = [\mathbf{y}_M, \mathbf{y}_W]^\top$ has $N=M+W$ entries. 

We use preconditioned conjugate gradients (PCG) to compute $(K_N + D_N)^{-1}\mathbf{y}$, we use the preconditioning matrix $C = D_N^{-1/2}$ to solve $C^\top(K_N+D_N)C\mathbf{z} = C^\top\mathbf{y}$. 

To compute the log-determinant term in marginal likelihood, we cannot efficiently decompose $K_M + D_M$ since $K_M$ is not a Kronecker matrix, Considering 
\begin{equation}
	\log |K_M + D_M| = \sum_{i=1}^M \log(\lambda_i^M + \sigma^2) \approx \sum_{i=1}^M \log(\tilde{\lambda}_i^M + \sigma^2),
\end{equation}
where $\sigma$ is the noise standard deviation of the data. We approximate the eigenvalues $\lambda_i^M$ of $K_M$ using the eigenvalues of $K_N$ such that $\tilde{\lambda}_i^{M} = \frac{M}{N}\lambda_i^N$. Since $\lambda_i^n$ is the eigenvalue of a matrix with Kronecker structure, it can be computed efficiently.

\end{document}